\documentclass{article}

\usepackage{microtype}
\usepackage{graphicx}
\usepackage{subcaption}
\usepackage{booktabs} 

\usepackage{hyperref}

\usepackage[accepted]{icml2026}

\usepackage{amsmath}
\usepackage{amssymb}
\usepackage{mathtools}
\usepackage{amsthm}

\usepackage{tikz}
\usetikzlibrary{bayesnet,positioning}
\usepackage{algorithm}
\usepackage{algorithmic}

\usepackage[capitalize,noabbrev]{cleveref}

\theoremstyle{plain}
\newtheorem{theorem}{Theorem}[section]
\newtheorem{proposition}[theorem]{Proposition}
\newtheorem{lemma}[theorem]{Lemma}
\newtheorem{example}[theorem]{Example}
\newtheorem{corollary}[theorem]{Corollary}
\theoremstyle{definition}
\newtheorem{definition}[theorem]{Definition}
\newtheorem{assumption}[theorem]{Assumption}
\theoremstyle{remark}

\crefname{assumption}{Assumption}{Assumptions}
\Crefname{assumption}{Assumption}{Assumptions}

\newcommand{\EE}{{\mathbb{E}}}

\usepackage[textsize=tiny]{todonotes}

\icmltitlerunning{OPE for Missingness-Aware Policies in MDPs with Rewards Missing Not at Random}

\begin{document}

\twocolumn[
  \icmltitle{Off-Policy Evaluation for Missingness-Aware Policies in \\
  MDPs with Rewards Missing Not at Random}



  \icmlsetsymbol{equal}{*}

  \begin{icmlauthorlist}
    \icmlauthor{Ziheng Wei}{umich}
    \icmlauthor{Annie Qu}{ucsb}
    \icmlauthor{Rui Miao}{utd}
  \end{icmlauthorlist}

  \icmlaffiliation{umich}{Department of Statistics, University of Michigan at Ann Arbor}
  \icmlaffiliation{ucsb}{Department of Statistics and Applied Probability, University of California at Santa Barbara}
  \icmlaffiliation{utd}{Department of Mathematical Sciences, University of Texas at Dallas}

  \icmlcorrespondingauthor{Rui Miao}{rui.miao@utdallas.edu}

  \icmlkeywords{Causal Inference, Fitted-Q-Evaluation, Markov Decision Processes, Missing Data, Off-Policy Evaluation, Reinforcement Learning, Shadow Variable}

  \vskip 0.3in
]



\printAffiliationsAndNotice{}  

\begin{abstract}
In offline Reinforcement Learning, immediate rewards in logged batch data are often unobserved due to sparse or irregular record-keeping, or censored beyond certain reward values. This issue arises in practical settings, including health care and marketing. We investigate off-policy evaluation (OPE) in finite-horizon Markov decision processes when rewards are missing not at random (MNAR), which breaks ignorability and induces selection bias even after conditioning on states and actions. To address this,  we formalize a reward-dependent propensity model and use future states as shadow variables to identify the full-data conditional mean reward. We further introduce  a bridge function that recovers the conditional mean reward without explicitly modeling the MNAR mechanism, and estimate it via a min-max procedure to avoid double sampling. Building upon these identification results, we propose an Fitted-Q-Evaluation-style estimator that  propagates the recovered rewards while allowing target policies to depend on past missingness indicators. Finally, we establish consistency and finite-sample error bounds for our OPE estimator, and show through experiments the strong performance of our method compared to existing methods on simulated and MIMIC-III Sepsis data.
\end{abstract}

\section{Introduction}

Reinforcement Learning (RL) has achieved remarkable successes in sequential
decision-making domains ranging from robotics to healthcare, and most recently
in large language models and AI agents \citep{ouyang2022training,rafailov2023direct,achiam2023gpt}. However, learning optimal policies
often requires vast amounts of interaction with the environment, which can be
costly, risky, and even unethical in high-stakes real-world applications such as
healthcare and education \citep{tsiatis2019dynamic,murphy2003optimal,mandel2014offline}. In these applications, we often have interaction
data collected according to some behavior policy, e.g., standard care or usual
strategy. Estimating the value of a target policy using historical datasets
collected under a different behavior policy, a problem known as Off-Policy Evaluation (OPE), is essential in offline RL \citep{levine2020offline,uehara2022review,voloshin2019empirical,wang2024fine}. Accurate OPE is critical for deploying
safe and effective policies without the need for dangerous online exploration.

However, in many practical scenarios, such as medical treatment or digital
advertising, the data is plagued by unobserved factors or missing values \citep{little2019statistical,kallus2020confounding}.
A particularly pervasive challenge arises when rewards are missing not at random (MNAR), under which the
probability of observing a reward depends on the latent value of the reward itself. For instance, in health care with scheduled chronic-disease management, patient outcomes such as patient-reported quality-of-life scores may be missing due to incomplete questionnaires or skipped follow-ups, while subsequent clinical states (lab results, vital signs, encounter records) are still recorded in the electronic health record (EHR). The recording of the outcome may itself depend on its value: patients with worsening symptoms are more likely to skip self-reported assessments, while those feeling well may also under-report. Similarly, in
multi-touch digital marketing, attribution is frequently disrupted by privacy
limitations. While non-conversions (zero rewards) are trivially observable,
high-value purchases often involve cross-device journeys, such as a user clicking
on mobile but converting on a desktop, or trigger manual review flows that break
attribution links. Consequently, high-value conversions go missing while
low-value or null outcomes remain fully observed, creating a dataset that
systematically biases the learning process against the most desirable outcomes. Similar MNAR feedback has been systematically studied in recommender systems, where popularity and exposure biases make logged interactions MNAR and consequently distort offline evaluations \citep{yang2018unbiased}.

Standard OPE methods, such as Fitted Q-evaluation (FQE) and Importance Sampling (IS), rely on fully observed trajectories. Failing to account for the
missingness could lead to biased OPE and hence sub-optimal decision-making. In the OPE literature, scheme of missingness has been studied in various
aspects. Partially Observable Markov Decision Processes  \citep[POMDPs,][]{kaelbling1998planning,jaakkola1995reinforcement} consider the case
where the state is partially observed, which can be viewed as a special case of
missingness. However, most existing POMDP methods assume that certain state variables are totally unobserved. In general, off-policy value in POMDPs are often unidentifiable without strong assumptions \citep{tennenholtz2020off,bennett2021off,shi2022minimax,uehara2023future}.
Some works define the missingness of certain status (e.g., hitting wall in Gridworld 
environment \citep{sutton1998reinforcement}) in the reward function by assigning it to be some large negative 
values to discourage certain actions, which lead to certain states with voided rewards \citep{ng1999policy,devlin2012dynamic}. 
However, this approach may not accurately reflect the true reward structure and can introduce bias.
\citet{chu2023multiply, park2025evaluating} study the OPE problem with truncated
trajectories, where they treat missingness as certain constraints. However, by 
penalizing on the missingness, these methods may shift the policy evaluation away 
from the true potential reward without missingness. \citet{wang2025off} propose an inverse
probability weighting method for OPE with nonignorable truncation, but their method relies 
on an extra shadow variable, or requires expert knowledge to select such a variable from observed states.

In this paper, we study the OPE problem in MDPs with MNAR rewards to estimate values of 
target policies accounting for the past missingness. MNAR rewards break standard ignorability assumptions as 
the reward-dependent missingness induces selection bias and confounds state-action returns.
The challenge is to recover the value of a target policy when the observed trajectories systematically 
underreport high or low rewards and when the missingness itself can depend on the past action and state, all without online data to re-collect or intervene.

We address these issues by formalizing the reward MNAR mechanism via a reward-dependent
propensity score model and leveraging future states as shadow variables.
Under mild completeness conditions, the shadow variables allow us to 
identify the full-data conditional mean reward even when the reward is MNAR. In addition, we introduce 
a bridge function \(b_t(S_t,A_t,S_{t+1})\) satisfying \(\EE\left\{b_t(S_t,A_t,S_{t+1})\mid R_t,S_t,A_t\right\} = R_t\), 
enabling recovery of the conditional mean reward without explicitly estimating the MNAR mechanism.
This avoids the variance blow-up in inverse propensity weighting. We propose the min-max optimization to estimate the bridge function
and the value function, which avoids the double sampling issue.

Building on these identification results,
we further develop an FQE-style estimator that integrates the bridge function and allows 
target policies to depend on the previous missingness indicators. The procedure 
propagates the recovered rewards through the Bellman recursion of the target policy, 
yielding stable value estimates. We further establish the consistency and finite-sample 
error bounds of the proposed estimator in nonparametric settings. Extensive experiments 
on simulated data and a MIMIC-III dataset demonstrate the effectiveness of our method compared to existing benchmarks.


\section{Preliminaries}
We consider an episodic Markov Decision Process (MDP) $\mathcal{M} = \{\mathcal{S}, \mathcal{A}, \mathcal{P}, r, T\}$, where $\mathcal{S}$ and $\mathcal{A}$ denote the state and action spaces, respectively. The horizon length $T$ is finite, and we assume the terminal state $S_{T+1}$ is observed. The transition kernels $\mathcal{P} = \{P_t\}_{t=1}^T$ govern the state dynamics, where $P_t: \mathcal{S} \times \mathcal{A} \to \Delta(\mathcal{S})$ maps state-action pairs to distributions over next states. The reward functions $r=\{r_t\}_{t=1}^T$ are defined as conditional expectations given the next state: $r_t(s, a, s') = \EE[R_t \mid S_t = s, A_t = a, S_{t+1} = s']$ for any $(s, a, s') \in \mathcal{S} \times \mathcal{A} \times \mathcal{S}$. We assume bounded rewards $R_t \in \mathcal{R} \subseteq [-1, 1]$.

We introduce an observation indicator $O_t \in \{0, 1\}$, where $O_t = 1$ indicates that the reward $R_t$ is observed at time $t$, and $O_t = 0$ otherwise. Importantly, we allow the rewards to be missing not at random (MNAR), that is, even after conditioning on current states and actions, the missingness probability may depend on the possibly unobserved reward itself. We formalize this through the propensity score $e_t(s, a, r) = P(O_t = 1 \mid S_t = s, A_t = a, R_t = r)$ for $t = 1, \ldots, T$.
\begin{assumption}[No Future Dependence]\label{ass:nofuture}
For all $t = 1, \ldots, T-1$,
\[
O_t \perp (S_{t+1:T}, R_{t+1:T}) \mid S_t, A_t, R_t.
\]
\end{assumption}
This assumption states that the current missingness indicator $O_t$, given the current state, action, and reward, is independent of all future states and rewards. For example, in healthcare, whether a patient's health outcome is recorded typically depends on their current condition, not on future events that have not yet occurred.

Our goal is to evaluate the performance of a target policy $\pi = \{\pi_t\}_{t=1}^T$. We allow $\pi_t$ to depend on the previous reward missingness, which is practically relevant when decisions adapt based on whether prior outcomes were observed. For example, in healthcare, a clinician may choose a more conservative treatment when the previous lab result is missing, whereas the historical standard-care behavior policy acts only on the current clinical state. The two policies thus share the same underlying causal structure but differ in which variables they condition on at decision time. Formally, $\pi_t: \mathcal{S} \times \{0, 1\} \to \Delta(\mathcal{A})$ with $\pi_t(a \mid s, o_-) = P(A_t = a \mid S_t = s, O_{t-1} = o_-)$. To accommodate this dependence, we define an augmented state $\widetilde{S}_t = (S_t, O_{t-1}) \in \widetilde{\mathcal{S}} = \mathcal{S} \times \{0, 1\}$, so that the target policy can be written as $\pi_t(a \mid \widetilde{S}_t)$. The augmented process must satisfy the Markov property below.

\begin{assumption}[Markov Property for Augmented Process]\label{ass:markov}
The augmented process $\{(\widetilde{S}_t, A_t)\}_{t=1}^T$ with $\widetilde{S}_t = (S_t, O_{t-1})$ is an MDP 
\[
P(\widetilde{S}_{t+1} \mid \widetilde{S}_{1:t}, A_{1:t}) = P(\widetilde{S}_{t+1} \mid \widetilde{S}_t, A_t), \quad t = 1, \ldots, T.
\]
\end{assumption}
This assumption ensures that augmenting the state with the previous missingness indicator preserves the Markov property, and allows the value function recursion to hold with augmented state. The augmented transition kernel is $P(\widetilde{S}_{t+1} \mid \widetilde{S}_t, A_t) = P((S_{t+1}, O_t) \mid S_t, A_t)$. We set $O_0 = 0$ and let $S_1 \sim \rho_1$ denote the initial state distribution. The Q-function and value function satisfy the Bellman equation
\begin{equation}\label{eq:bellman}
\begin{aligned}
Q_t^\pi(s,a)=&\mathbb E\big[r_t(s,a,S_{t+1})+V_{t+1}^\pi(S_{t+1},O_t)\mid S_t=s,\\&A_t=a\big],\\
V_t^\pi(s,o_-)
=&\sum_a \pi_t(a\mid s,o_-)Q_t^\pi(s,a),\quad V_{T+1}^\pi\equiv 0.
\end{aligned} 
\end{equation}

The policy value is defined as $V(\pi)\equiv\mathbb E_{\widetilde S_1\sim\tilde\rho_1}\big[V_1^\pi(\widetilde S_1)\big] = E_{S_1\sim\rho_1}\big[V_1^\pi(S_1,0)\big]$.

In OPE, data are collected under a behavior policy $\pi^b = \{\pi_t^b\}_{t=1}^T$, where $\pi_t^b: \mathcal{S} \to \Delta(\mathcal{A})$ does not depend on the missingness indicators. The observed dataset $\mathcal{D}$ consists of $n$ i.i.d.\ trajectories $\tau_i = \{S_{t,i}, A_{t,i}, O_{t,i}, R_{t,i}^{\mathrm{obs}}, S_{t+1,i}\}_{t=1}^T$ for $i = 1, \ldots, n$, where $R_{t,i}^{\mathrm{obs}} = O_{t,i} \cdot R_{t,i}$ denotes the observed reward (zero when missing). See \cref{fig:dag1} for a directed acyclic graph (DAG) illustrating the data-generating process.

\begin{figure}[ht]
  \begin{center}
\begin{tikzpicture}[scale=0.75, transform shape, >=stealth, node distance=1cm, semithick, latent/.append style={circle, draw, minimum size=8mm, inner sep=0pt}]
    \node[latent] (S)  at (0,0)    {$S_t$};
    \node[latent] (A)  at (2,0)    {$A_t$};
    \node[latent] (Sp) at (4,0)    {$S_{t+1}$};
    \node[latent] (Ap)  at (6,0)    {$A_{t+1}$};
    \node[latent] (Spp) at (8,0)    {$S_{t+2}$};
    \node[latent] (R)  at (2,2)  {$R_t$};
    \node[latent] (O)  at (4,2) {$O_t$};
    \node[latent] (Rp)  at (6,2)  {$R_{t+1}$};
    \node[latent] (Op)  at (8,2) {$O_{t+1}$};

    \draw[->] (S)  -- (A);
    \draw[->] (A)  -- (Sp);
    \draw[->] (Sp)  -- (Ap);
    \draw[->] (Ap)  -- (Spp);

    \draw[->] (S)  -- (R);
    \draw[->] (A)  -- (R);
    \draw[->] (Sp)  -- (R);
    \draw[->] (Sp)  -- (Rp);
    \draw[->] (Ap)  -- (Rp);
    \draw[->] (Spp)  -- (Rp);

    \draw[->, bend right=25] (S) to (Sp);
    \draw[->, bend right=25] (Sp) to (Spp);
    \draw[->] (A)  -- (O);
    \draw[->] (R)  -- (O);
    \draw[->] (S)  -- (O);
    \draw[->] (Sp)  -- (Op);
    \draw[->] (Ap)  -- (Op);
    \draw[->] (Rp)  -- (Op);
    \draw[->, color=blue, very thick] (O)  -- (Ap);
  \end{tikzpicture}
  \caption{DAG for the data-generating process. Black arrows represent the standard MDP dynamics and the MNAR reward mechanism, shared by both policies. The \textcolor{blue}{blue} arrow $O_t \to A_{t+1}$ is specific to the target policy, which is allowed to depend on the previous missingness indicator at decision time; the behavior policy depends only on the current state. }
  \label{fig:dag1}
  \end{center}
\end{figure}
To deal with distribution shift in OPE, concentrability coefficients are often introduced \cite{munos2003error,munos2007performance,chen2019information,le2019batch,duan2021risk}. We define concentrability coefficient $\kappa_t$ at time $t$ in the following assumption.
\begin{assumption}[Concentrability]\label{ass:concentrability}
    Given target policy $\pi$ and behavior policy $\pi^b$, for each $t=1,\dots,T$, assume there exist finite constants $\{\kappa_t\}_{t=1}^T$ such that
\[\|\frac{\pi_t(a\mid s,o_-)}{\pi_t^b(a\mid s)}\|_\infty
\le \kappa_t.\]
Equivalently, for all $(s,o_-, a)$ with $\pi_t^b(a\mid s)>0$,
$\pi_t(a\mid s,o_-)\le \kappa_t\,\pi_t^b(a\mid s)$. 
\end{assumption}
\begin{assumption}\label{ass:boundconc}
    Assume there exist constants $a>0$ and $\alpha_t\ge \alpha>1$ such that for all
$t\in\{1,\dots,T\}$, \[\kappa_t \le 1 + \frac{a}{t^{\alpha_t}}.\]
\end{assumption}
\cref{ass:boundconc} is an enhanced version for \cref{ass:concentrability}, which controls the cumulative growth of the concentrability coefficients so that the action mismatch does not compound over time. A structurally similar decay condition appears in Assumption 2 of \citet{miao2022off} in a POMDP setting.
\begin{corollary}[Bounded cumulative concentrability]
    Under \cref{ass:concentrability,ass:boundconc}, for $t=1,\dots, T$, 
    \[(\prod_{j=1}^t \kappa_j)^{1/2} \le \exp\Big(\frac12\sum_{j=1}^t \frac{a}{j^{\alpha_j}}\Big) \le\exp\!\big(\frac a2\zeta(\alpha)\big):= K,\]
    where $\zeta(\cdot)$ is the Riemann zeta function. 
\end{corollary}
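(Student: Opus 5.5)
The plan is to bound the product through its logarithm and then use the uniform lower bound $\alpha_t\ge\alpha>1$ to dominate the resulting sum by a convergent $p$-series. First, since $\kappa_j \le 1 + a/j^{\alpha_j}$ by \cref{ass:boundconc} and each $\kappa_j$ is nonnegative (it bounds a sup-norm of a ratio of probabilities), we get $0\le \kappa_j \le 1 + a j^{-\alpha_j}$. Combining this with the elementary inequality $1+x\le e^{x}$ for all real $x$ gives $\kappa_j \le \exp(a j^{-\alpha_j})$ for each $j$.

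Next, because every factor is nonnegative, the factorwise bounds may be multiplied:
\[
\prod_{j=1}^t \kappa_j \le \prod_{j=1}^t \exp\big(a j^{-\alpha_j}\big) = \exp\Big(\sum_{j=1}^t \frac{a}{j^{\alpha_j}}\Big).
\]
Taking square roots, which is monotone on $[0,\infty)$, yields the first inequality of the corollary.

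For the second inequality, we use that $j\ge 1$ and $\alpha_j\ge\alpha$ imply $j^{\alpha_j}\ge j^{\alpha}$, so $a j^{-\alpha_j}\le a j^{-\alpha}$ since $a>0$. Hence
\[
\sum_{j=1}^t \frac{a}{j^{\alpha_j}} \le a\sum_{j=1}^\infty j^{-\alpha} = a\,\zeta(\alpha),
\]
which is finite because $\alpha>1$. Monotonicity of $x\mapsto \exp(x/2)$ then gives the bound $K=\exp(a\zeta(\alpha)/2)$, and this bound is uniform in $t$ and $T$.

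No step is a real obstacle. The only points needing care are the nonnegativity of the $\kappa_j$, which justifies multiplying the inequalities, and the use of the uniform exponent $\alpha>1$ rather than the individual $\alpha_j$, which is what makes the zeta bound valid.
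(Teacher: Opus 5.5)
Your proposal is correct and is essentially the argument the paper has in mind. The paper gives no separate proof, but its displayed chain is exactly your two steps: $\kappa_j\le 1+a j^{-\alpha_j}\le \exp(a j^{-\alpha_j})$ via $1+x\le e^x$, and then $j^{-\alpha_j}\le j^{-\alpha}$ to compare with the convergent series $\zeta(\alpha)$. You also handle the nonnegativity of the $\kappa_j$, which the paper leaves implicit.
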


\section{Identification}\label{sec:identification}
In this section, we establish identification results for the policy value, and formalize the conditions required for our approach.

To address the challenges posed by MNAR data in causal inference, \citet{miao2015identification,miao2016varieties} propose identification methods with the help of auxiliary variables called \emph{shadow variables}. Inspired by this line of research, we establish a nonparametric value-based approach for policy value identification. Our key insight is to adopt the next state $S_{t+1}$ as the shadow variable, which serves as a proxy that helps recover information about the unobserved rewards.
The shadow variable must satisfy two conditions that govern its relationship with the reward and missingness indicator. 
\begin{assumption}[Exclusion Restriction]\label{ass:exclusion}
  Suppose for all $t=1,\dots, T$,  $S_{t+1}$ satisfies
  \[S_{t+1}\perp O_t\mid R_t,S_t,A_t.\]
\end{assumption}
\begin{assumption}[Relevance Condition]\label{ass:relevance}
  Suppose for all $t=1,\dots, T$,  $S_{t+1}$ satisfies
  \begin{equation*}
      S_{t+1}\not\perp R_t \mid S_t, A_t, O_t = 1.
  \end{equation*}
\end{assumption}
The two assumptions above are basic conditions for $S_{t+1}$ to be a valid shadow variable at time $t$. \cref{ass:exclusion} is a direct consequence of \cref{ass:nofuture}, which shows that conditional on the current state-action pair and the (possibly unobserved) reward, $S_{t+1}$ provides no additional information about whether the reward is observed. \cref{ass:relevance} ensures that on the observed subset, $S_{t+1}$ remains informative about $R_t$ beyond what is already captured by $(S_t, A_t)$. This condition guarantees that the shadow variable carries useful information about the reward. 
In the causal graph in \cref{fig:dag1}, \cref{ass:exclusion} is consistent with $d$-separation: conditioning on $(R_t,S_t,A_t)$ blocks all paths from $S_{t+1}$ to $O_t$. For subsequent analysis, we define the extended propensity score of non-missingness $e_t(s,a,r,s')= P(O_t=1\mid S_t=s,A_t=a,R_t=r,S_{t+1} = s')$. The choice of $S_{t+1}$ aligns with the recurring idea in POMDPs, i.e., leveraging future states or observations to serve as proxy latent state information \cite{singh2003learning,uehara2023future,xu2023instrumental,zhang2024curses}. Moreover, $S_{t+1}$ is endogenous to the MDP and already recorded in the logged transitions, so it requires no auxiliary measurement beyond the standard offline data, in contrast to approaches that specify an extra shadow variable \citep{wang2025off}.

For identification, rather than explicitly modeling the missingness mechanism under MNAR, our goal is to recover the full-data conditional mean reward $\mathbb E[R_t\mid S_t,A_t]$ using only observable quantities. Under MNAR, the observed reward conditional expectation $\mathbb E\big[R_t\mid S_t=s, A_t=a, O_t=1\big]$ generally differs from the target $\mathbb E\big[R_t\mid S_t=s, A_t=a\big]$, and directly using observed rewards would lead to a biased policy evaluation.

We adopt a bridge-based imputation strategy for missing rewards motivated by proximal causal inference \cite{tchetgen2020introduction,cui2024semiparametric}. Related bridge constructions also appear in the literature on confounded POMDPs \cite{miao2022off,shi2022minimax,hong2023policy,li2025reinforcement}. The core idea is to construct functions that can learn the missing rewards in an unbiased manner by exploiting the relationship between rewards and next states.

Specifically, we introduce a sequence of bridge functions $\{b_t: \mathcal{S\times A\times S}\rightarrow\mathbb{R}\}_{t=1}^T$ satisfying the moment condition that 
\begin{equation}\label{eq:bridge}
    \mathbb E[b_t(S_t,A_t,S_{t+1})\mid R_t,S_t,A_t] = R_t,\quad a.s.
\end{equation}
\cref{eq:bridge} links the target quantity of interest to the observable offline distribution, and converts the recovery of $\mathbb E[R_t\mid S_t,A_t]$ into an estimable conditional moment problem.

Taking conditional expectation of \eqref{eq:bridge} given $(S_t,A_t)$ yields
\begin{equation}\label{eq:bridgecond}
    \begin{aligned}
        &\mathbb E[b_t(s,a,S_{t+  1})\mid S_t=s,A_t=a] \\
        =& \mathbb E(R_t\mid S_t=s,A_t=a):=\bar r_t(s,a).
    \end{aligned}
\end{equation}
Thus, the bridge function reproduces the correct one-step conditional mean reward required by the Bellman recursion. 

Moreover, a crucial observation which enables practical estimation is that
\[P(S_{t+1}\mid R_t,S_t,A_t,O_t=1) = P(S_{t+1}\mid R_t,S_t,A_t),\]
by \cref{ass:nofuture}.
This implies that the bridge moment condition in \cref{eq:bridge} can be identified from the observed subset $\{O_t=1\}$. This means we can estimate the bridge function $b_t$ using only samples where rewards are observed, and then evaluate it at samples with $O_t=0$ to impute the missing rewards.

We further introduce the following assumptions for the identification of the policy value.
\begin{assumption}[Positivity]\label{ass:positivity}
For all $t=1,\dots, T$, and for all $(s,a,r)\in \mathcal {S\times A\times R}$,
    \(0<e_t(s,a,r)<1\).
\end{assumption}
The positivity assumption ensures that every state-action-reward triple has a positive probability of being both observed and unobserved, which is commonly used in causal inference literature.
\begin{assumption}[Completeness]\label{ass:complete}
For all $(s,a)\in \mathcal S\times \mathcal A$, $t=1,\dots, T$, 
\begin{enumerate}
    \item[(1)] For any square-integrable function $h$,
        \begin{align*}
            &\mathbb E[h(R_t)\mid S_t = s, A_t = a, S_{t+1}] \\= &\int h(R_t)p(R_t\mid s,a,S_{t+1})dR_t = 0, \quad a.s.
        \end{align*}
if and only if $h(R_t)=0,\quad a.s.$;
    \item[(2)] For any square-integrable function $g$,
    \begin{align*}
      &\mathbb E[g(S_{t+1})\mid R_t, S_t = s, A_t = a]\\ = &\int g(S_{t+1})p(S_{t+1}\mid R_t, s, a)dS_{t+1} = 0,\quad a.s.  
    \end{align*}
    if and only if $g(S_{t+1}) = 0,\quad a.s.$
\end{enumerate}
\end{assumption}
\cref{ass:complete} guarantees the existence and uniqueness of the bridge functions $b_t$, for $t=1,\dots, T$. Completeness assumptions are standard in the proximal causal inference \cite{tchetgen2020introduction,cui2024semiparametric}, where they ensure that the conditional expectations are sufficiently rich to identify the target functional. 

To provide concrete intuition, we characterize completeness in the tabular setting.
\begin{example}[Completeness in tabular setting]
    Assume $\mathcal S,\;\mathcal R$ are tabular. Let matrix $M_{t,s,a}\in\mathbb R^{|\mathcal S|\times|\mathcal R|}$ where $
M_{t,s,a}(s',r)
:=P\left(S_{t+1}=s'\mid R_t=r, S_t=s, A_t=a\right), \quad s'\in\mathcal S,r\in\mathcal R$. Then
    \begin{enumerate}
        \item If for all $(t,s,a)$,  $rank(M_{t,s,a}) = |\mathcal R|$, then \cref{ass:complete}~(1) holds, and hence the bridge exists;
        \item If for all $(t,s,a)$, $rank(M_{t,s,a}) = |\mathcal S|$, then \cref{ass:complete}~(2) holds, and hence the bridge is unique;
        \item If $|\mathcal S|=|\mathcal R|$, then $M_{t,s,a}$ is invertible for all $(t,s,a)$ if and only if both \cref{ass:complete}~(1) and \cref{ass:complete}~(2) hold.
    \end{enumerate}
\end{example}
Then we give the identification results for policy value as follows:
\begin{theorem}[Policy value identification]
  \label{thm:identification}
  For an augmented MDP satisfying \cref{ass:nofuture,ass:markov,ass:relevance,ass:positivity,ass:complete} and some regularity conditions, there always exist bridges $\{b_t\}_{t=1}^T$ that satisfy \cref{eq:bridge}, and the policy value then can be identified using $\{b_t\}_{t=1}^T$. 
\end{theorem}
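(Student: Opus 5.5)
Our plan has two parts: existence of the bridges, then identification of the value through them.

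\emph{Existence.} Fix $(t,s,a)$ and define the conditional expectation operator $K_{t,s,a}: L^2(S_{t+1}) \to L^2(R_t)$ by $(K g)(r) = \mathbb E[g(S_{t+1}) \mid R_t=r, S_t=s, A_t=a]$. Equation \eqref{eq:bridge} asks for a solution $g = b_t(s,a,\cdot)$ of $K g = \mathrm{id}$, where $\mathrm{id}(r)=r$. The adjoint is $K^*h(s') = \mathbb E[h(R_t)\mid S_{t+1}=s', s,a]$. Assumption~\ref{ass:complete}(1) says $K^*$ is injective, so $K$ has dense range. To upgrade dense range to actual solvability we will use Picard's theorem. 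The ``regularity conditions'' in the statement are exactly what this needs: $K$ is compact (e.g.\ Hilbert--Schmidt, since the joint density ratio is square integrable), with singular system $(\lambda_j,\phi_j,\psi_j)$, and $\sum_j \lambda_j^{-2}|\langle \mathrm{id},\psi_j\rangle|^2<\infty$. Picard's theorem then gives $b_t(s,a,\cdot)=\sum_j \lambda_j^{-1}\langle \mathrm{id},\psi_j\rangle \phi_j$.

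\emph{Observability of the bridge.} We show $b_t$ is determined by the observed-data law. By Assumption~\ref{ass:nofuture}/\ref{ass:exclusion}, $p(s'\mid r,s,a,O_t=1)=p(s'\mid r,s,a)$. Also, $R_t$ is observed when $O_t=1$. So \eqref{eq:bridge} is equivalent to $\mathbb E[b_t(S_t,A_t,S_{t+1})-R_t \mid R_t,S_t,A_t,O_t=1]=0$. Here Assumption~\ref{ass:positivity} is needed, so that the conditional support given $O_t=1$ equals the full support. Every term in this condition involves only observed variables. Uniqueness, and hence identification of $b_t$ itself, follows from Assumption~\ref{ass:complete}(2): two solutions differ by a $g$ with $Kg=0$. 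Strictly, only $\mathbb E[b_t\mid S_t,A_t]$ is needed below, so uniqueness is a bonus. Assumption~\ref{ass:relevance} ensures the problem is nondegenerate.

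\emph{Value identification.} By \eqref{eq:bridgecond}, $\bar r_t(s,a)=\mathbb E[b_t(s,a,S_{t+1})\mid S_t=s,A_t=a]$, and the right-hand side is a functional of the observed distribution of $(S_t,A_t,S_{t+1})$. Also $\mathbb E[r_t(s,a,S_{t+1})\mid s,a]=\bar r_t(s,a)$ by the tower property, so the Bellman equation \eqref{eq:bellman} can be rewritten as
\[
Q_t^\pi(s,a)=\mathbb E\bigl[b_t(s,a,S_{t+1})+V^\pi_{t+1}(S_{t+1},O_t)\mid S_t=s,A_t=a\bigr].
\]
The remaining issue is that $O_t$ is always observed, but the law of $(S_{t+1},O_t)$ given $(S_t,A_t)$ must match under the behavior and target policies. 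Assumption~\ref{ass:markov} gives this: the augmented kernel $P((S_{t+1},O_t)\mid S_t,A_t)$ does not involve the policy, because $\pi^b$ acts only on $S_t$ and the DAG's non-policy arrows are shared. Positivity of $\pi^b$ wherever $\pi>0$ comes from Assumption~\ref{ass:concentrability}. Backward induction from $V^\pi_{T+1}\equiv0$ then identifies each $Q_t^\pi$ and $V_t^\pi$ on the support. Finally, $V(\pi)=\mathbb E_{\rho_1}[V_1^\pi(S_1,0)]$ is identified because $\rho_1$ is observed.

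\emph{Main obstacle.} We expect the main obstacle to be the existence step. Completeness alone only yields dense range, not a solution of the Fredholm equation of the first kind. We would state compactness and the Picard summability condition explicitly as the ``regularity conditions''. In the tabular case, rank $|\mathcal R|$ from the Example makes $K$ surjective, and that case is immediate.
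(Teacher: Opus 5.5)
Your proposal is correct and follows essentially the same route as the paper. The paper also gets existence from Picard's theorem: injectivity of the adjoint comes from \cref{ass:complete}~(1), and a Hilbert--Schmidt condition plus Picard summability serve as the ``regularity conditions''. It likewise reduces \eqref{eq:bridge} to its $O_t=1$ version via \cref{ass:nofuture,ass:positivity}, gets uniqueness from \cref{ass:complete}~(2), and identifies the value by backward induction with $\bar r_t(s,a)=\mathbb E[b_t(s,a,S_{t+1})\mid S_t=s,A_t=a]$. The differences are cosmetic: the paper passes through the imputed reward $\widetilde R_t$ rather than plugging $b_t$ in directly, and it leaves implicit the overlap and policy-invariance of the augmented kernel, which you spell out.
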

See \cref{app:proof-ident} for other regularity conditions and proof. 

Based on \cref{thm:identification}, we develop a value-based approach for policy value identification, which circumvents modeling the missing mechanism explicitly, in contrast to approaches such as \citet{miao2016varieties, miao2018identification, wang2025off}. This is practically significant as it can avoid the high variance induced from IPW methods or requires strong parametric assumptions about the missingness. Our identification procedure consists of three steps.

\textbf{Step 1 (Learn $b_t$)}.
For each $t=1,\dots,T$, learn $b_t$ from
\[
\mathbb E\big[b_t(s,a,S_{t+1})\mid R_t=r,S_t=s,A_t=a\big]=r,
\]
using the observed subset with $O_t=1$. This step leverages the shadow variable structure to extract reward information from state transitions.

\textbf{Step 2 (Identify $Q^\pi$ and $V^\pi$).}
Define the imputed reward
\begin{equation}\label{eq:impreward}
    \widetilde R_t := O_tR_t+(1-O_t)b_t(S_t,A_t,S_{t+1}),
\end{equation}
and solve the Bellman recursion
\begin{align*}
Q_t^\pi(s,a) =& \mathbb E[\widetilde R_t+V_{t+1}^\pi(S_{t+1},O_t)\mid S_t=s,A_t=a],\\
V_t^\pi(s,o_-) =& \sum_a \pi_t(a\mid s,o_-)Q_t^\pi(s,a),\quad V_{T+1}^\pi\equiv 0.
\end{align*}
It is trivial to verify that 
\begin{equation}\label{eq:unbiased}
    \mathbb E[\widetilde R_t\mid R_t,  S_t, A_t] =R_t,\quad a.s.
\end{equation}
\textbf{Step 3 (Identify the policy value).}
Compute $V(\pi)=\mathbb E_{\widetilde S_1\sim\widetilde\rho_1}[V_1^\pi(\widetilde S_1)]$ by backward induction.

\section{Estimation}
In this section, we discuss estimation of policy value and propose a FQE-style estimation method. 
To estimate the policy value, it suffices to estimate the bridge functions $\{b_t\}$ from conditional moment models 
\begin{equation}\label{eq:popbridge}
    \mathbb E[b_t(S_t,A_t,S_{t+1})-R_t\mid R_t,S_t,A_t,O_t=1]=0,\quad a.s.,
\end{equation}

which can be viewed as nonparametric instrumental variable (NPIV) problems. 
A natural approach would be to directly minimize the squared conditional moment
\[
\min_{b_t\in\mathcal B^{(t)}}
\mathbb{E}\Bigl[\bigl(\mathbb{E}[b_t(S_t, A_t, S_{t+1})-R_t \mid R_t, S_t, A_t, O_t =1]\bigr)^2\Bigr].
\]
However, this is not implementable for a single batch of trajectories because the squared conditional moments can lead to the double-sampling issue \cite{baird1995residual, sutton1998reinforcement}. To circumvent the double-sampling problem, we adopt a min-max estimator for $b_t$ \cite{dikkala2020minimax}. The key insight is to replace the squared conditional moment with a saddle-point formulation that can be estimated from a single batch of samples.

For each time step $t$, we solve
\begin{equation}\label{eq:minimax}
    \begin{aligned}
        \min_{b_t\in\mathcal B^{(t)}}&\sup_{g\in\mathcal G^{(t)}}\frac1{n_t}\sum_{i\in \mathcal I_t^{\mathrm{obs}}}\big[(b_t(S_{t,i},A_{t,i},S_{t+1,i})
    -R_{t,i})g_{t}(R_{t,i},\\&S_{t,i},A_{t,i})\big]-\lambda(\|g_t\|_{\mathcal G^{(t)}}^2+\frac{U}{\delta^2}\|g_t\|_2^2)+\lambda\mu\|b_t\|_{\mathcal B^{(t)}}^2,
    \end{aligned}
\end{equation}
where $\mathcal I_t^{\mathrm{obs}}=\{i\in\{1,\dots,n\}: O_{t,i}=1\}$ denotes the observed dataset at time $t$, and $n_t = |\mathcal I_t^{\mathrm{obs}}|$. We denote the function classes of $g_t$ and $b_t$ by $\mathcal G^{(t)},\;\mathcal B^{(t)}$, which can be chosen as finite dimensional linear spaces, and infinite dimensional spaces like RKHSs, neural networks, etc. We focus on RKHSs in this paper. Let $\mathcal Q^{(t)}$ be the RKHS containing function $Q_t$. The term $\lambda \frac{U}{\delta^2}\|g_t\|_2^2$ is the $L_2$ penalty on the critic function $g_t$. The norms $\|\cdot\|_{\mathcal G^{(t)}}^2, \; \|\cdot\|_{\mathcal B^{(t)}}^2, \; \|\cdot\|_{\mathcal Q^{(t)}}^2$ denote the functional norm associated with $\mathcal G^{(t)},\;\mathcal B^{(t)},\;\mathcal Q^{(t)}$. $\lambda, U, \delta, \mu >0$ are tuning parameters for the penalties. 

Then, we can substitute the estimates into fitted-Q-evaluation (FQE) algorithm and obtain the estimate of policy value $\widehat {V}(\pi)$. See Algorithm \ref{alg:prox-fqe-noncons} for the point-estimated policy value estimation algorithm, where we use penalized nonparametric least squares to learn $Q_t$:
\begin{equation}\label{eq:fitq}
    \widehat Q_t = \arg\min_{f\in\mathcal Q^{(t)}}
\frac1{n}\sum_{i=1}^n
\big(f(S_{t,i},A_{t,i})-y_{t,i}\big)^2
+\lambda_{Q,t}\|f\|_{\mathcal Q^{(t)}}^2, 
\end{equation}
where $y_{t,i}$ is defined in \cref{alg:prox-fqe-noncons}.

\begin{algorithm}[t]
\caption{Proximal FQE algorithm}
\label{alg:prox-fqe-noncons}
\begin{algorithmic}
\STATE {\bfseries Input:} Offline dataset
$\mathcal D=\{\tau_i\}_{i=1}^n$, where $\tau_i=\{(S_{t,i},A_{t,i},O_{t,i},R_{t,i}^{\mathrm{obs}},S_{t+1,i})\}_{t=1}^T$,
target policy $\pi=\{\pi_t\}_{t=1}^T$, horizon $T$,
function classes $\{\mathcal B^{(t)},\mathcal G^{(t)},\mathcal Q^{(t)}\}$.
\STATE {\bfseries Initialize:} $\widehat V_{T+1}^\pi(\cdot,\cdot)\gets 0$.
\FOR{$t=T$ {\bfseries down to} $1$}
  \STATE {\bfseries Bridge fitting:}
  Obtain $\hat b_t$ by solving \cref{eq:minimax} on $\mathcal I_t^{\mathrm{obs}}$.
  \STATE {\bfseries Imputation:} for all $i=1,\dots,n$ set $\widehat{\widetilde R}_{t,i}\gets R_{t,i}^{\mathrm{obs}}+(1-O_{t,i})\hat b_t(S_{t,i}, A_{t,i}, S_{t+1,i})$.
  \STATE {\bfseries Targets for Bellman regression:}
  \IF{$t<T$}
    \STATE $y_{t,i}\gets \widehat{\widetilde R}_{t,i}+\widehat V^\pi_{t+1}(S_{t+1,i},O_{t,i})$, $i=1,\dots,n$.
  \ELSE
    \STATE $y_{t,i}\gets \widehat{\widetilde R}_{t,i}$, $i=1,\dots,n$.
  \ENDIF 
  \STATE {\bfseries Fit $Q_t$:} regress $y_{t,i}$ on $(S_{t,i},A_{t,i})$ by \cref{eq:fitq} to obtain $\widehat Q_t$. 
  \STATE {\bfseries Define $V_t^\pi$:} $\widehat V^\pi_{t}(s,o_-)\gets \sum_{a}\pi_t(a\mid s,o_-)\widehat Q_t(s,a)$.
\ENDFOR
\STATE {\bfseries Output:} $\widehat V(\pi)\gets \frac1n\sum_{i=1}^n\widehat V_{1}^\pi(S_{1,i},0)$.
\end{algorithmic}
\end{algorithm}

\section{Theoretical results}
In this section, we establish consistency and finite-sample estimation error bounds for bridges $\hat b_t$ and the policy value.
\subsection{Preliminaries}
\begin{definition}[Local Rademacher Complexity \cite{bartlett2005local}]
  
  For any function class $\mathcal G$ defined over random variable $X$ and radius $\delta>0$, the \textit{local Rademacher complexity} is 
  \[\mathcal R_{n}(\mathcal G,\delta)
=\mathbb E_{\varepsilon, X}\Big[\sup_{g\in\mathcal G:\|g\|_{2}\le \delta}\big|\frac1{n}\sum_{i=1}^{n}\varepsilon_i g(X_i)\big|\Big],\]
where $\{X_i\}$ are i.i.d. samples of $X$ and $\{\varepsilon_i\}$ are Rademacher random variables. $\|g\|_2^2:= \mathbb E[g(X)^2]$ is the $L_2$ norm of function $g$. 
\end{definition}
Suppose the function class $\mathcal G$ satisfies
\begin{enumerate}
\item \textit{symmetric}, if $g\in\mathcal G$ then $-g\in\mathcal G$;
\item \textit{star-shaped}, if $g\in\mathcal G$ then $rg\in\mathcal G$ for all $r\in[0,1]$;
\item \textit{$b$-uniformly bounded}, $\|g\|_{\infty}:=\sup_{x\in\mathcal X}|g(x)|\le b$ for all $g\in\mathcal G$.
\end{enumerate}
Then, the critical radius of such function class $\mathcal G$, denoted by $\delta_n$, is the smallest solution to the inequality $\mathcal R_{n}(\mathcal G,\delta)\le \frac{\delta^2}{b}$.

\subsection{Bridge function estimation error bound}
For notational simplicity, define the projection operator $\mathcal T_t: \mathcal L^2\{\mathcal{S\times A\times S}\}\to \mathcal L^2\{\mathcal {R\times S\times A}\}$, which satisfies \[\mathcal T_t b_t = \mathbb E[b_t(S_t, A_t,S_{t+1})|R_t, S_t, A_t].\]

\begin{assumption}[Boundedness of $\mathcal T_t$]\label{ass:operatorbound}
    For any $b_t\in \mathcal B^{(t)}$, $\mathcal T_tb_t\in \mathcal G^{(t)}$, and there exists $L>0$ such that \[\|\mathcal T_t b_t\|_{\mathcal G^{(t)}}\le L\|b_t\|_{\mathcal B^{(t)}}.\]
\end{assumption}
\begin{assumption}[Realizability]\label{ass:realize}
    Suppose the true bridge function $b_t^*$ lies in function class $\mathcal B^{(t)}$. Similarly, we also assume $Q_t^\pi \in \mathcal Q^{(t)}$.
\end{assumption}  
In practice, the boundedness assumption often holds when the conditional distribution of $S_{t+1}$ given $(R_t, S_t, A_t)$ is sufficiently smooth. Also, Realizability is a standard assumption in nonparametric estimation, requiring that the function classes are rich enough to contain the true targets. 

Next, we define the $B_t$-bounded norm subset of $\mathcal B^{(t)}$ as $\mathcal B^{(t)}_{B}:=\{b_t\in\mathcal B^{(t)}:\ \|b_t\|_{\mathcal B^{(t)}}\le B_t\}$ and $U_t$-bounded norm subset of $\mathcal G^{(t)}$ as $\mathcal G^{(t)}_{U}:=\{g_t\in\mathcal G^{(t)}:\ \|g_t\|_{\mathcal G^{(t)}}\le U_t\}$. 

\begin{assumption}[Richness of test function class]\label{ass:richness}
    We suppose the test function approximation error within subset $\mathcal G^{(t)}_{L^2\|(b-b_t^*)\|_{\mathcal B^{(t)}}^2}$ is bounded by 
\[\sup_{b\in\mathcal B_B^{(t)}}\inf_{g_t\in \mathcal G^{(t)}_{L^2\|b-b_t^*\|_{\mathcal B^{(t)}}^2}}
\|g_t-\mathcal T_t(b-b_t^*)\|_2
\le \eta_t<\infty.\]
This shows that function class $\mathcal G^{(t)}$ is rich enough so that $\mathcal T_t(b-b_t^*)$ admits an $L_2$-approximation within $\mathcal G^{(t)}_{L^2\|(b-b_t^*)\|_{\mathcal B^{(t)}}^2}$ uniformly over $b\in\mathcal B_B^{(t)}$. 
\end{assumption}

Now we are ready to analyze min-max estimator $\hat b_t$ estimated by \cref{eq:minimax}. 
\begin{theorem}[Projected bridge estimation error bound \cite{dikkala2020minimax,miao2022off}]\label{thm:projbridgeerr}
For any $t=1,\dots, T$, suppose function class $\mathcal G^{(t)}$ is star-shaped and symmetric. Suppose $\mathcal G^{(t)}$ and $\mathcal B^{(t)}$ are $1$-uniformly bounded. Define product class
\begin{equation*}
    \begin{aligned}
        \mathcal J^{(t)}_{B,U}:=&
\Big\{((s,a,s'),(r,s,a))\mapsto \alpha(b_t(s,a,s')-b_t^*(s,a,\\&s'))g^{U}_{b,t}(r,s,a)\mid b_t-b_t^* \in \mathcal B^{(t)}_{B}, \alpha\in[0,1]\Big\},
    \end{aligned}
\end{equation*}
where $g^{U}_{b,t} = \arg\min_{g_t\in \mathcal G_U^{(t)}}\|g_t-\mathcal T_t(b-b_t^*)\|_2$. Define the two critical radii for function class $\mathcal G^{(t)}_{3U}$ and $\mathcal J^{(t)}_{B,L^2B}$, namely $\delta_t^{\mathcal G}$ and $\delta_t^{\mathcal J}$, and their maximum $\delta_{n_t}:=\max\{\delta_t^{\mathcal G},\delta_t^{\mathcal J}\}$. Let $\delta_t = \delta_{n_t}+c_0\sqrt{\frac{\log(c_1/\zeta)}{n_t}}$.
Assume $\eta_t\lesssim \delta_t$, then if $\lambda \asymp \frac{\delta_t^2}{U}$ and $\mu\ge \frac43L^2+\frac{36}{B_t\lambda}\delta_t^2$, we have with probability $1-3\zeta$, the following bound holds
\[\big\|\mathcal T_t\big(\hat b_t-b_t^*\big)\big\|_2\lesssim\delta_t\max\{1,\|b_t^*\|_{\mathcal B^{(t)}}^2\}.\]
\end{theorem}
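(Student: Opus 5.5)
The argument follows the regularized min-max analysis of \citet{dikkala2020minimax} (Theorem 1 there), as adapted in \citet{miao2022off}. We apply it to the observed subsample $\mathcal I_t^{\mathrm{obs}}$, conditioning on $O_t=1$ throughout.

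The first step justifies working on $\{O_t=1\}$. By \cref{ass:nofuture}, $P(S_{t+1}\mid R_t,S_t,A_t,O_t=1)=P(S_{t+1}\mid R_t,S_t,A_t)$, so the conditional moment \cref{eq:popbridge} holds with the same operator $\mathcal T_t$ on the observed population. Hence $b_t^*$ satisfies $\mathbb E[(b_t^*-R_t)g(R_t,S_t,A_t)\mid O_t=1]=0$ for all $g$. Conditioning on $n_t$, the observed tuples are i.i.d. draws from the $O_t=1$ law, so the empirical-process tools apply with sample size $n_t$. The $L_2$ norms are taken under this law; by positivity (\cref{ass:positivity}) it is equivalent to the full law up to constants.

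Next, write $\Psi_n(b,g)$ for the empirical criterion and $\Psi(b,g)=\mathbb E[(b-R)g\mid O=1]=\mathbb E[\mathcal T_t(b-b_t^*)g\mid O=1]$ for its population version. Two uniform concentration facts are needed, each holding with probability $1-\zeta$:
\begin{enumerate}
\item[(i)] Over $g\in\mathcal G^{(t)}_{3U}$, $\bigl|\|g\|_{n}^2-\|g\|_2^2\bigr|\le \tfrac12\|g\|_2^2+\delta_t^2$. This follows from the localized critical radius $\delta_t^{\mathcal G}$ via Bartlett--Bousquet--Mendelson together with star-shapedness; norm-unbounded $g$ are handled by rescaling by $\|g\|_{\mathcal G}/U$.
\item[(ii)] Over the product class $\mathcal J^{(t)}_{B,L^2B}$, $|(\Psi_n-\Psi)(b-b^*,g)|\lesssim \delta_t\|(b-b^*)g\|_2+\delta_t^2$. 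The same device applies: take $\alpha\in[0,1]$ and rescale $b-b^*$ by its $\mathcal B$-norm.
\end{enumerate}
The noise term $\frac1{n_t}\sum (b^*-R)g$ is controlled by (i) together with the boundedness of $\mathcal G$ and $R$.

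The third step is the basic inequality. Because $\hat b$ minimizes $\sup_g$, we have $\sup_g[\Psi_n(\hat b,g)-\text{pen}(g)]+\lambda\mu\|\hat b\|^2\le \sup_g[\Psi_n(b^*,g)-\text{pen}(g)]+\lambda\mu\|b^*\|^2$. By (i) and the centered noise, the right side is $\lesssim \delta_t^2+\lambda\mu\|b^*\|^2$. For the left side, plug in $g=g^{U}_{b,t}$ rescaled, i.e. the approximant of $\mathcal T_t(\hat b-b^*)$ provided by \cref{ass:richness}; \cref{ass:operatorbound} guarantees it has $\mathcal G$-norm at most $L^2\|\hat b-b^*\|^2_{\mathcal B}$. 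Using (ii) and $\eta_t\lesssim\delta_t$, the left side is bounded below by roughly $\tfrac12\|\mathcal T_t(\hat b-b^*)\|_2^2/\cdots-\delta_t\|\mathcal T_t(\hat b-b^*)\|_2-\delta_t^2-\lambda L^2\|\hat b-b^*\|^2_{\mathcal B}$, up to a scaling of $g$ by $\delta_t/\|\mathcal T(\hat b-b^*)\|$ as in Dikkala. The condition $\mu\ge\frac43L^2+\frac{36}{B\lambda}\delta_t^2$ ensures that the $\lambda\mu\|\hat b\|^2$ term absorbs the $\lambda L^2\|\hat b-b^*\|^2$ penalty. It also yields a bound $\|\hat b-b^*\|_{\mathcal B}\lesssim \|b^*\|_{\mathcal B}+1$, which places $\hat b-b^*$ in the $B$-ball where (ii) is valid. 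With $\lambda\asymp\delta_t^2/U$, rearranging the quadratic inequality gives $\|\mathcal T_t(\hat b-b^*)\|_2\lesssim\delta_t\max\{1,\|b^*\|^2_{\mathcal B}\}$. A union bound over the three events gives probability $1-3\zeta$.

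The main obstacle is this last step. The bookkeeping of penalty constants must be done so that $\hat b-b^*$ is shown a priori to lie in $\mathcal B_B^{(t)}$, which is needed to invoke (ii) and the richness approximation. I would first bound $\|\hat b\|_{\mathcal B}$ using only the crude fact that the sup of the critic part is nonnegative (take $g=0$), and then feed this bound back into the argument.
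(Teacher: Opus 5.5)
Your overall architecture matches the paper's Dikkala-style proof:
\begin{itemize}
\item the basic inequality from min-max optimality;
\item the norm comparison over $\mathcal G^{(t)}_{3U}$;
\item a deviation bound over the product class $\mathcal J^{(t)}_{B,L^2B}$;
\item the test function $r g_{\hat b}$ with $r=\delta_t/(2\|g_{\hat b}\|_2)$, built from the richness approximant;
\item absorption of the $\|\hat b-b_t^*\|_{\mathcal B}^2$ terms into $\lambda\mu\|\hat b\|_{\mathcal B}^2$.
\end{itemize}
The gap is in the step you flag yourself. The claim that the condition on $\mu$ ``yields $\|\hat b-b^*\|_{\mathcal B}\lesssim\|b^*\|_{\mathcal B}+1$, which places $\hat b-b^*$ in the $B$-ball'' does not follow. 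Your crude bound with $g=0$ gives $\lambda\mu\|\hat b\|_{\mathcal B}^2\lesssim\delta_t^2+\lambda U+\lambda\mu\|b_t^*\|_{\mathcal B}^2$, i.e.\ $\|\hat b-b_t^*\|_{\mathcal B}^2\lesssim\|b_t^*\|_{\mathcal B}^2+U/L^2$. This lies in $\mathcal B^{(t)}_B$ only if $B_t$ is at least that large, which the theorem does not assume. You also cannot enlarge the ball afterwards, because $\delta_t^{\mathcal J}$ is the critical radius for the given $B_t$. So feeding the crude bound back does not prove the stated result. It would also leave the $\frac{36}{B_t\lambda}\delta_t^2$ part of the $\mu$ condition with no role.

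No a priori localization is needed; that term exists precisely to pay for \emph{not} being in the ball. When $\|\hat b-b^*\|_{\mathcal B}^2>B$, apply (ii) to $\sqrt B(\hat b-b^*)/\|\hat b-b^*\|_{\mathcal B}$ and use linearity. This gives $18\delta(\|g_{\hat b}\|_2+\delta)\max\{1,\|\hat b-b^*\|_{\mathcal B}^2/B\}$ for every $\hat b$. After multiplying by $r$ and using $r\|g_{\hat b}\|_2=\delta/2$, this is at most $18\delta^2+\frac{18\delta^2}{B}\|\hat b-b^*\|_{\mathcal B}^2$. The critic penalty adds $2\lambda r^2(\cdots)\le\frac23\lambda L^2\|\hat b-b^*\|_{\mathcal B}^2+\frac54\lambda U$. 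Bounding $\|\hat b-b^*\|_{\mathcal B}^2\le2\|\hat b\|_{\mathcal B}^2+2\|b^*\|_{\mathcal B}^2$, the coefficient of $\|\hat b\|_{\mathcal B}^2$ is $2\lambda\bigl(\frac{18\delta^2}{\lambda B}+\frac{2L^2}{3}\bigr)$. This is at most $\lambda\mu$ exactly when $\mu\ge\frac43L^2+\frac{36\delta^2}{B\lambda}$, so it cancels the $-\lambda\mu\|\hat b\|_{\mathcal B}^2$ from the upper bound. What remains is $\|\mathcal T_t(\hat b-b^*)\|_2\lesssim\lambda U/\delta+\eta_t+\lambda\mu\|b^*\|_{\mathcal B}^2/\delta+\delta$, which gives the claim with $\lambda\asymp\delta^2/U$. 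The paper likewise applies the richness bound uniformly over $\mathcal B^{(t)}$ rather than localizing $\hat b$.

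A smaller point: the noise term $\Psi_n(b^*,g)-\Psi(b^*,g)$ is not controlled by (i) plus boundedness. It needs its own localized deviation bound, for example the Lipschitz-loss lemma with the critical radius of $\mathcal G^{(t)}_{3U}$: $|\Psi_n(b^*,g)-\Psi(b^*,g)|\lesssim\delta(\|g\|_2+\delta\max\{1,\|g\|_{\mathcal G}/\sqrt{3U}\})$. This is then absorbed by the two critic penalties via $\sup_x(ax-bx^2)\le a^2/(4b)$ and $\Psi(b^*,g)=0$. A global, non-localized bound would not give an $O(\delta_t^2)$ upper bound.
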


\begin{corollary}[RKHS cases with polynomial eigen decay]\label{cor:bridgeerrrkhs}
    We further suppose $\mathcal B^{(t)}$ and $\mathcal G^{(t)}$ are RKHSs for all $t=1,\dots, T$.  $K_{B,t}$ and $K_{G,t}$ are the kernels of $\mathcal B^{(t)}$ and $\mathcal G^{(t)}$ with non-increasing eigenvalues $\{\mu_{t,j}^{\mathcal B}\}_{j=1}^\infty$ and $\{\mu_{t,j}^{\mathcal G}\}_{j=1}^\infty$. We assume polynomial decay on eigenvalues, i.e., for some $\alpha_B,\alpha_G>\frac12$,
    \(\mu^{\mathcal B}_{t,j} \lesssim j^{-2\alpha_B},\quad
\mu^{\mathcal G}_{t,j} \lesssim j^{-2\alpha_G},\quad j\to\infty\).
Let $\alpha_{\min}:=\min\{\alpha_B,\alpha_G\}$.
Then the critical radius in \cref{thm:projbridgeerr} satisfy $\delta_{n_t}\lesssim \max\{\sqrt{U_t},L B_t\}n_t^{-\frac{\alpha_{\min}}{2\alpha_{\min}+1}} \log n_t$.
Consequently, under the conditions of Theorem~\ref{thm:projbridgeerr}, for all $t=1,\dots,T$, with probability at least $1-3\zeta$, 
\[
\|\mathcal T_t(\hat b_t-b_t^*)\|_2
\lesssim
\sqrt{\log(c_1/\zeta)}n_t^{-\frac{\alpha_{\min}}{2\alpha_{\min}+1}} \log n_t.
\]
\end{corollary}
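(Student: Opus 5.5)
The plan is to bound the two critical radii $\delta_t^{\mathcal G}$ and $\delta_t^{\mathcal J}$ separately, each through the standard spectral characterization of local Rademacher complexity for RKHS balls, and then substitute into \cref{thm:projbridgeerr}.

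\textbf{Step 1: the class $\mathcal G^{(t)}_{3U}$.} This is a ball of radius $3U_t$ in an RKHS. By Mendelson's bound for kernel classes,
\[
\mathcal R_{n_t}(\mathcal G^{(t)}_{3U},\delta)\lesssim \sqrt{\tfrac{U_t}{n_t}}\Big(\sum_{j}\min\{\delta^2,\mu^{\mathcal G}_{t,j}\}\Big)^{1/2}.
\]
Under the decay $\mu^{\mathcal G}_{t,j}\lesssim j^{-2\alpha_G}$, split the sum at $j^*\asymp\delta^{-1/\alpha_G}$. This gives $\sum_j\min\{\delta^2,\mu_{t,j}\}\lesssim \delta^{2-1/\alpha_G}$. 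The condition $\mathcal R_{n_t}\le\delta^2$ then reads $\sqrt{U_t/n_t}\,\delta^{1-1/(2\alpha_G)}\lesssim \delta^2$. Solving gives $\delta_t^{\mathcal G}\lesssim \sqrt{U_t}\, n_t^{-\alpha_G/(2\alpha_G+1)}$, up to constants absorbing the $U_t$ power.

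\textbf{Step 2: the product class $\mathcal J^{(t)}_{B,L^2B}$ (main obstacle).} This class is not itself an RKHS ball, so Step 1 does not apply directly. I would treat it as a subset of products $\alpha(b-b^*)\,g$ with $b-b^*\in\mathcal B^{(t)}_B$ and $g\in\mathcal G^{(t)}_{L^2B}$ (using \cref{ass:operatorbound}), all $1$-bounded.

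\emph{Metric entropy of the factors.} Use the eigen decay to bound the covering numbers of the two RKHS balls in sup-norm (or empirical $L_2$): $\log N(\epsilon,\mathcal B_B)\lesssim (B/\epsilon)^{1/\alpha_B}$, and similarly for $\mathcal G$ with radius $L^2B$. A covering of the product class follows from $\|f_1g_1-f_2g_2\|\le\|f_1-f_2\|_\infty+\|g_1-g_2\|_\infty$, together with a discretization of $\alpha\in[0,1]$ at a $\log(1/\epsilon)$ cost.

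\emph{Dudley integral.} Plug this entropy into Dudley's localized entropy integral (as in Lemma 11/Theorem 4 of \citet{dikkala2020minimax}, where the same product-class argument produces a logarithmic factor). The dominant exponent is $\alpha_{\min}$, which gives $\delta_t^{\mathcal J}\lesssim LB_t\, n_t^{-\alpha_{\min}/(2\alpha_{\min}+1)}\log n_t$. The delicate point is the $\ell_\infty$-covering of RKHS balls from eigenvalue decay, which may require bounded eigenfunctions. I would simply invoke the bound in \citet{dikkala2020minimax}, where exactly this product critical radius for RKHSs is stated, rather than rederive it.

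\textbf{Step 3: combine.} Take the maximum of the two radii to get $\delta_{n_t}\lesssim\max\{\sqrt{U_t},LB_t\}\,n_t^{-\alpha_{\min}/(2\alpha_{\min}+1)}\log n_t$. Then
\[
\delta_t=\delta_{n_t}+c_0\sqrt{\log(c_1/\zeta)/n_t}.
\]
Since $n_t^{-1/2}\le n_t^{-\alpha_{\min}/(2\alpha_{\min}+1)}$ and $\log(c_1/\zeta)\ge 1$, both terms are at most $\sqrt{\log(c_1/\zeta)}\,n_t^{-\alpha_{\min}/(2\alpha_{\min}+1)}\log n_t$, treating $U_t$, $L$, $B_t$ and $\|b_t^*\|_{\mathcal B^{(t)}}$ as constants. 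Applying \cref{thm:projbridgeerr} with these choices of $\lambda$ and $\mu$ yields the stated bound on $\|\mathcal T_t(\hat b_t-b_t^*)\|_2$ with probability at least $1-3\zeta$.
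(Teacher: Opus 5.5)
Your Steps 1 and 3 match the paper: the spectral bound of \cref{lem:rkhsradem} for the ball $\mathcal G^{(t)}_{3U}$, then the maximum plugged into \cref{thm:projbridgeerr}. Your Step 3 is more explicit than the paper about absorbing $c_0\sqrt{\log(c_1/\zeta)/n_t}$.

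The real difference is Step 2. The paper never uses covering numbers. It notes that $f(s,a,s')g(r,s,a)$ lies in the tensor-product RKHS with kernel $K_{B,t}K_{G,t}$, with norm at most $\|f\|_{\mathcal B^{(t)}}\|g\|_{\mathcal G^{(t)}}$, so $\mathcal J^{(t)}_{B,L^2B}$ sits in a ball of radius $LB_t$ there. It then applies the same spectral bound with eigenvalues $\mu^{\mathcal B}_{t,i}\mu^{\mathcal G}_{t,j}$. \cref{lem:kriegpoly} gives $\sum_{i,j}\min\{\mu^{\mathcal B}_{t,i}\mu^{\mathcal G}_{t,j},\delta^2\}\lesssim\delta^{2-1/\alpha_{\min}}\log(1/\delta)$, and this rearrangement is where the $\log n_t$ comes from. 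To my knowledge the RKHS product-class bound in Dikkala et al.\ is proved the same way, so falling back on that citation would amount to the paper's route.

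Your primary route (factor entropies, then the pointwise product-covering inequality, then Dudley's integral) buys two things.
\begin{itemize}
\item It does not care that $(S_t,A_t,S_{t+1})$ and $(R_t,S_t,A_t)$ share coordinates and are dependent. The paper's step tacitly identifies the Mercer spectrum of the product kernel under the data distribution with the products $\mu^{\mathcal B}_{t,i}\mu^{\mathcal G}_{t,j}$, which is justified when the two arguments are independent.
\item Since $\log N(\epsilon)\lesssim\epsilon^{-1/\alpha_{\min}}$ with $1/\alpha_{\min}<2$, the entropy integral converges. You get $n_t^{-\alpha_{\min}/(2\alpha_{\min}+1)}$ with no logarithmic loss, which implies the stated bound.
\end{itemize}

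Its cost is the one you flag: sup-norm entropy of an RKHS ball does not in general follow from $L_2$ eigendecay alone; you would need something like uniformly bounded eigenfunctions. You can weaken this requirement by using empirical-$L_2$ covers of each factor. Because $\|f_1g_1-f_2g_2\|_{2,n}\le\|f_1-f_2\|_{2,n}+\|g_1-g_2\|_{2,n}$ when the factors are bounded by one, this suffices. The price is that you must relate the empirical kernel spectra to the population decay.

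Finally, you do not need to discretize $\alpha$. The set of products $fg$ with $f\in\mathcal B^{(t)}_B$ and $g\in\mathcal G^{(t)}_{L^2B}$ is already star-shaped and contains $\mathcal J^{(t)}_{B,L^2B}$. Also, \cref{ass:operatorbound} is not needed there, since $g^{L^2B}_{b,t}$ lies in $\mathcal G^{(t)}_{L^2B}$ by definition.
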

\cref{thm:projbridgeerr} provides a finite-sample bound for the projected error
$\|\mathcal T_t(\hat b_t-b_t^*)\|_2$, where linear operator
$\mathcal T_t$ maps a bridge function to a conditional expectation given $(R_t,S_t,A_t)$.
However, for downstream analysis we also need control of the rooted mean-squared error (RMSE)
$\|\hat b_t-b_t^*\|_2$.

In general, converting projected error bounds into $L_2$ error bounds is nontrivial because
conditional moment problems are typically ill-posed inverse problems: the operator $\mathcal T_t$
is often compact and hence may not admit a stable inverse on an unrestricted function class.
This phenomenon and the role of regularization in such conditional moment models are well-studied
in the semi-/nonparametric literature; see, e.g., \cite{chen2011rate,chen2012estimation}.
We introduce an
ill-posedness measure for the conditional expectation operator $\mathcal T_t$,
following the definition in \citet{dikkala2020minimax}. Since the true bridge $b_t^*$ is not assumed to lie in $\mathcal B_B^{(t)}$, we define the best
approximation within the ball
\[
b_{t,*}:=\arg\min_{b\in\mathcal B_B^{(t)}}\|b-b_t^*\|_2,
\quad
\varepsilon_t(B_t):=\inf_{b\in\mathcal B_B^{(t)}}\|b-b_t^*\|_2.
\]
\begin{definition}[Measure of ill-posedness]\label{def:illposed}
Define the ill-posedness coefficient
\(
\tau_t(B_t)
:=
\sup_{b\in \mathcal B_B^{(t)}}
\frac{\|b-b_{t,*}\|_2}{\|\mathcal T_t(b-b_{t,*})\|_2},
\)
and assume $\tau_t(B_t)<\infty$.
\end{definition}
By combining \cref{thm:projbridgeerr} with \cref{def:illposed}, we obtain an $L_2$ error bound for the
bridge estimator:
\[\|\hat b_t-b_t^*\|_2 \le \tau_t(B_t)\delta_t +(\tau_t(B_t)+1) \varepsilon_t(B_t).\]
The choice of $B_t$ trades off the approximation bias $\varepsilon_t(B_t)$ and the ill-posedness
factor $\tau_t(B_t)$. Consider the whole function class $\mathcal B^{(t)}$, where $b_{t,*} = b_t^*$ and $\varepsilon_t(B_t) = 0$ under \cref{ass:realize}, this gives the global ill-posedness 
\[\tau_t
:=
\sup_{b\in \mathcal B^{(t)}}
\frac{\|b-b_{t,*}\|_2}{\|\mathcal T_t(b-b_{t,*})\|_2},\]
where we assume $\tau_t <\infty$. The RMSE is given by $\|\hat b_t-b_t^*\|_2\lesssim \tau_t\delta_t$.

\begin{theorem}[Bridge estimation error bound]\label{thm:bridgeerr}
For any $t=1,\dots, T$, suppose function class $\mathcal G^{(t)}$ is star-shaped and symmetric. Suppose $\mathcal G^{(t)}$ and $\mathcal B^{(t)}$ are $1$-uniformly bounded. Consider min-max estimator $\hat b_t$ estimated by \cref{eq:minimax}. Define function classes $\;\operatorname{star}\big(\mathcal B^{(t)}-b_{t}^*\big)
=
\{ r(b-b_{t}^*):\ b-b_{t}^*\in\mathcal B^{(t)}_B,\ r\in[0,1]\}$, and $\;
\operatorname{star}\big(\mathcal T_t(\mathcal B^{(t)}-b_{t}^*)\big)
=
\{ r\,g_{b,t}^U:\ b-b_{t}^*\in\mathcal B^{(t)}_B,\ r\in[0,1]\}$,
where $g^{U}_{b,t} = \arg\min_{g_t\in \mathcal G_U^{(t)}}\|g_t-\mathcal T_t(b-b_t^*)\|_2$. Define the $\delta_{n_t}$ as the upper bound on the critical radii of $\mathcal G_{3U}^{(t)}$ and the two function classes. Let $\delta_t = \delta_{n_t}+c_0\sqrt{\frac{\log(c_1/\zeta)}{n_t}}$.
Assume $\eta_t\lesssim \delta_t$, then if $\lambda \asymp \frac{\delta_t^2}{U}$ and $\mu\ge \frac43L^2+\frac{36}{B_t\lambda}\delta_t^2$, then with probability $1-3\zeta$, the following bound holds
\[\big\|\hat b_t-b_t^*\big\|_2\lesssim\tau_t\delta_t\max\{1,\|b_t^*\|_{\mathcal B^{(t)}}^2\}.\]
\end{theorem}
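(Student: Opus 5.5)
The plan is to reduce \cref{thm:bridgeerr} to \cref{thm:projbridgeerr} and then invert the conditional expectation operator $\mathcal T_t$ using the ill-posedness measure of \cref{def:illposed}.

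\textbf{Step 1: projected error.} The hypotheses of \cref{thm:projbridgeerr} hold here. $\mathcal G^{(t)}$ is star-shaped and symmetric, both classes are $1$-uniformly bounded, $\eta_t\lesssim\delta_t$, and the tuning choices for $\lambda,\mu$ are the same. The new critical-radius requirement concerns $\operatorname{star}(\mathcal B^{(t)}-b_t^*)$ and $\operatorname{star}(\mathcal T_t(\mathcal B^{(t)}-b_t^*))$. I would check that the product class $\mathcal J^{(t)}_{B,L^2B}$ has critical radius controlled by these two, using a contraction or product-class argument for bounded function classes. With that in hand, \cref{thm:projbridgeerr} gives, with probability $1-3\zeta$,
\[
\|\mathcal T_t(\hat b_t-b_t^*)\|_2\lesssim \delta_t\max\{1,\|b_t^*\|_{\mathcal B^{(t)}}^2\}.
\]

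\textbf{Step 2: norm control of the estimator.} The ill-posedness ratio is only defined for elements of $\mathcal B^{(t)}$. Under \cref{ass:realize} we have $b_{t,*}=b_t^*$, so I need $\hat b_t-b_t^*$ in a usable set. From the min-max optimality, the penalty $\lambda\mu\|b\|_{\mathcal B^{(t)}}^2$ should give $\|\hat b_t-b_t^*\|_{\mathcal B^{(t)}}\lesssim B_t$ on the same high-probability event. This is the intermediate norm bound established inside the proof of \cref{thm:projbridgeerr}, following \citet{dikkala2020minimax}. Then $\hat b_t\in\mathcal B^{(t)}$ and the global coefficient $\tau_t$ applies.

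\textbf{Step 3: inversion.} Set $b=\hat b_t$ in the definition of $\tau_t$:
\[
\|\hat b_t-b_t^*\|_2\le \tau_t\|\mathcal T_t(\hat b_t-b_t^*)\|_2.
\]
Combining this with Step 1 gives the claimed bound $\tau_t\delta_t\max\{1,\|b_t^*\|^2_{\mathcal B^{(t)}}\}$. If one instead works in the ball $\mathcal B_B^{(t)}$ with $b_{t,*}\neq b_t^*$, the triangle inequality yields the extra $(\tau_t(B_t)+1)\varepsilon_t(B_t)$ term, which vanishes under realizability.

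The main obstacle is Step 1's critical-radius bookkeeping: showing that the radius of $\mathcal J^{(t)}$ is dominated by those of the two star hulls, and that the $L_2$-localization used in the projected bound is compatible with the localized classes in the statement. My first attempt would bound the Rademacher complexity of the product class by that of each factor, exploiting $1$-uniform boundedness. Failing that, I would simply define $\delta_{n_t}$ to also dominate $\delta_t^{\mathcal J}$, which the statement's phrase ``upper bound on the critical radii'' permits.
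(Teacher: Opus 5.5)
Your proposal follows the paper's route: the paper obtains \cref{thm:bridgeerr} by combining the projected bound of \cref{thm:projbridgeerr} with the global ill-posedness coefficient $\tau_t$, where realizability gives $b_{t,*}=b_t^*$ and $\varepsilon_t(B_t)=0$, and it likewise leaves implicit the reduction you flag, from the critical radius of $\mathcal J^{(t)}_{B,L^2B}$ to those of the two star hulls. Your Step 2 is not needed, because $\hat b_t\in\mathcal B^{(t)}$ by construction of the min-max estimator and the global $\tau_t$ is a supremum over all of $\mathcal B^{(t)}$, so the inversion $\|\hat b_t-b_t^*\|_2\le\tau_t\|\mathcal T_t(\hat b_t-b_t^*)\|_2$ applies directly.
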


\subsection{Policy value estimation error bound}
Based on \cref{thm:bridgeerr}, we can further bound the OPE error of the policy value $\widehat V(\pi)$ estimated from $\cref{alg:prox-fqe-noncons}$. 
\begin{theorem}[Policy value estimation error bound]\label{thm:policyerr}
    Suppose RKHSs $\mathcal Q^{(t)}$, $\mathcal B^{(t)}$, $\mathcal G^{(t)}$ have polynomial eigen-decay rate
\[\mu^{\mathcal Q}_{t,j}\lesssim j^{-2\alpha_Q},
\mu^{\mathcal B}_{t,j}\lesssim j^{-2\alpha_B},
\mu^{\mathcal G}_{t,j}\lesssim j^{-2\alpha_G},\]
where $\alpha_Q, \alpha_B, \alpha_G>1/2$. Define $\alpha_{\min} = \min\{\alpha_Q, \alpha_B, \alpha_G\}>\frac12$. Denote $\delta_{t,*} = \bar \delta_{t,*} +c_0\sqrt{\frac{\log(c_1T/\zeta)}{n}}$ for some $c_0,c_1>0$ where $\bar \delta_{t,*}$ is the upper bound of the critical radii of difference classes $\Delta\mathcal Q^{(t)}$, $\Delta\mathcal Q^{(t+1)}$, $\Delta\mathcal B^{(t)}$ and $\mathcal G_U^{(t)}$ defined in \cref{app:proof-policyerr}. Suppose $\lambda_{Q,t}\asymp (\delta_{\Delta_\mathcal Q^{(t)}})^2$ and let $\tau_{\max} = \max_{t\le T}\tau_t$. Under \cref{ass:realize,ass:operatorbound,ass:boundconc,ass:concentrability} and assumptions for \cref{thm:identification}, with probability at least $1-\zeta$, the policy value estimation error is bounded by
    \begin{align*}
        \big|\widehat V(\pi) - V(\pi)\big|\lesssim K\tau_{\max}T^{2}\sqrt{\log(c_1 T/\zeta)}n^{-\frac{\alpha_{\min}}{2\alpha_{\min}+1}}\log n.
    \end{align*}
\end{theorem}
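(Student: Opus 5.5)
We bound the error by a telescoping (performance-difference) decomposition over the backward FQE recursion, controlling each stage's error by a one-step regression error plus the bridge imputation error, and transporting errors across time with the concentrability coefficients. Write $V(\pi)=\mathbb E_{\rho_1}[V_1^\pi(S_1,0)]$ and $\widehat V(\pi)=\frac1n\sum_i \widehat V_1^\pi(S_{1,i},0)$. First split
\[
|\widehat V(\pi)-V(\pi)|\le \Big|\tfrac1n\textstyle\sum_i \widehat V_1^\pi(S_{1,i},0)-\mathbb E_{\rho_1}\widehat V_1^\pi(S_1,0)\Big|+\big|\mathbb E_{\rho_1}[(\widehat V_1^\pi-V_1^\pi)(S_1,0)]\big|.
\]
The first term is an empirical-process term over the bounded class induced by $\mathcal Q^{(1)}$. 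It is of order $\delta_{1,*}$ by a uniform deviation bound with the critical radius of $\Delta\mathcal Q^{(1)}$.

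For the second term, define the population Bellman target at stage $t$ given the estimated next-stage value:
\[
\bar Q_t(s,a):=\mathbb E\big[\widetilde R_t+\widehat V_{t+1}^\pi(S_{t+1},O_t)\mid S_t=s,A_t=a\big],
\]
with $\widetilde R_t$ as in \eqref{eq:impreward} using the true $b_t^*$. Then
\[
\widehat Q_t-Q_t^\pi=(\widehat Q_t-\widetilde Q_t)+(\widetilde Q_t-\bar Q_t)+(\bar Q_t-Q_t^\pi),
\]
where $\widetilde Q_t$ is the regression target built from the imputed $\widehat{\widetilde R}_t$ (using $\hat b_t$).

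\textbf{Regression term.} $\widehat Q_t-\widetilde Q_t$ is the error of the kernel ridge regression \eqref{eq:fitq}. Its responses are bounded because rewards, bridges and $\widehat V_{t+1}$ are bounded, though the responses depend on the data through $\widehat V_{t+1}^\pi$ and $\hat b_t$. We handle this uniformly: take a union over $\widehat V_{t+1}\in\mathcal Q^{(t+1)}$ and $\hat b_t\in \mathcal B^{(t)}$ using the critical radii of $\Delta\mathcal Q^{(t)},\Delta\mathcal Q^{(t+1)},\Delta\mathcal B^{(t)}$. Realizability (\cref{ass:realize}) and $\lambda_{Q,t}\asymp\delta^2$ then give the standard bound $\|\widehat Q_t-\widetilde Q_t\|_{2,\mu_t}\lesssim\delta_{t,*}$ under the behavior distribution $\mu_t$. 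A union bound over $t$ produces the $\log(c_1T/\zeta)$.

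\textbf{Imputation term.} By construction,
\[
\widetilde Q_t-\bar Q_t=\mathbb E\big[(1-O_t)(\hat b_t-b_t^*)(S_t,A_t,S_{t+1})\mid S_t,A_t\big],
\]
so Jensen gives $\|\widetilde Q_t-\bar Q_t\|_{2,\mu_t}\le\|\hat b_t-b_t^*\|_2$. The latter is measured under the full distribution while $\hat b_t$ is fit on $O_t=1$. Exclusion (\cref{ass:exclusion}) and positivity (\cref{ass:positivity}) give a bounded density ratio between the two, so \cref{thm:bridgeerr} yields $\lesssim\tau_t\delta_{t}$. Corollary \ref{cor:bridgeerrrkhs}-type critical radius calculations then turn every $\delta$ into $\sqrt{\log(c_1T/\zeta)}\,n^{-\alpha_{\min}/(2\alpha_{\min}+1)}\log n$, noting $n_t\asymp n$ by positivity.

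\textbf{Propagation term.} By the Bellman equation \eqref{eq:bellman} together with \eqref{eq:bridgecond}/\eqref{eq:unbiased},
\[
\bar Q_t-Q_t^\pi=\mathbb E\Big[\textstyle\sum_a\pi_{t+1}(a\mid \widetilde S_{t+1})(\widehat Q_{t+1}-Q_{t+1}^\pi)(S_{t+1},a)\mid S_t,A_t\Big].
\]
Unrolling from $t=1$ to $T$, the stage-$1$ error becomes a sum over $t$ of the stage-$t$ one-step errors evaluated under the state–action distribution induced by $\pi$ up to time $t$. The main obstacle is changing measure from this $\pi$-induced distribution (with augmented state $(S_t,O_{t-1})$) to the behavior distribution $\mu_t$. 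For this we use Cauchy–Schwarz together with the Markov property of the augmented process (\cref{ass:markov}), and the fact that the state-and-$O$ dynamics are shared and only actions differ. This should bound the $L_2$ norm of the density ratio by $(\prod_{j\le t}\kappa_j)^{1/2}\le K$ via \cref{ass:concentrability} and the Corollary.

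The care needed is that $\pi_t$ depends on $O_{t-1}$ while $\pi^b_t$ does not. Because $\pi_t(a\mid s,o)\le\kappa_t\pi_t^b(a\mid s)$ holds pointwise in $o$, the likelihood ratio still factorizes into per-step ratios bounded by $\kappa_j$. Summing the $T$ stages, each contributing $K(\delta_{t,*}+\tau_t\delta_t)$, with an extra factor of order $T$ from propagating the earlier-stage estimated values uniformly (the responses at stage $t$ have range up to $T-t+1$, scaling the regression radius), gives $K\tau_{\max}T^2\sqrt{\log(c_1T/\zeta)}\,n^{-\alpha_{\min}/(2\alpha_{\min}+1)}\log n$ with probability at least $1-\zeta$.
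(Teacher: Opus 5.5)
\textbf{Gap: the regression step needs Bellman completeness.} You claim that realizability (\cref{ass:realize}) and $\lambda_{Q,t}\asymp\delta^2$ give $\|\widehat Q_t-\widetilde Q_t\|_{2,\mu_t}\lesssim\delta_{t,*}$. But the regression function of the actual FQE responses is $\widetilde Q_t=\mathbb E[(1-O_t)\hat b_t+O_tR_t+\widehat V_{t+1}^\pi(S_{t+1},O_t)\mid S_t,A_t]$. Nothing in the theorem places this function in $\mathcal Q^{(t)}$, let alone in a fixed norm ball uniformly over $\hat b_t\in\mathcal B^{(t)}$ and $\widehat Q_{t+1}\in\mathcal Q^{(t+1)}$. \cref{ass:realize} only gives $Q_t^\pi\in\mathcal Q^{(t)}$. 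Your argument needs Bellman completeness: the map $(b,Q')\mapsto\mathbb E[(1-O_t)b+O_tR_t+\sum_a\pi_{t+1}(a\mid S_{t+1},O_t)Q'(S_{t+1},a)\mid S_t,A_t]$ must send $\mathcal B^{(t)}\times\mathcal Q^{(t+1)}$ into a bounded ball of $\mathcal Q^{(t)}$. Without it, every stage's Bellman residual $\widehat Q_t-\bar Q_t$ carries an uncontrolled approximation error. Your performance-difference telescoping is driven entirely by these residuals, so the union-bound and critical-radius argument cannot repair it. Your change-of-measure step is otherwise sound, since the trajectory likelihood ratio is pointwise at most $\prod_{j\le t}\kappa_j$ and so has $L_2(\mu_t)$ norm at most $K$. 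Separately, the density-ratio claim for $\|\hat b_t-b_t^*\|_2$ needs $e_t$ bounded away from $0$, which is stronger than \cref{ass:positivity} as stated; the paper passes over this point too.

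\textbf{The missing idea is the paper's oracle comparator.} Let $\widehat Q_t^*$ be the fit \cref{eq:fitq} on responses $y_{t,i}^*=\widetilde R_{t,i}+V_{t+1}^\pi(S_{t+1,i},O_{t,i})$, built from $b_t^*$ and the true $V_{t+1}^\pi$. Its regression function is exactly $Q_t^\pi$, so \cref{lem:plsradembound} applies under realizability alone. Kernel ridge regression is a linear smoother whose hat matrix $\mathbf K(\mathbf K+n\lambda_{Q,t}I)^{-1}$ has spectrum in $[0,1)$. Hence $\|\widehat Q_t-\widehat Q_t^*\|_{2,n}\le\|y_t-y_t^*\|_{2,n}\le\|\hat b_t-b_t^*\|_{2,n}+\|\widehat Q_{t+1}-Q_{t+1}^\pi\|_{2,n,\pi}$, with no condition on where the regression function of $y_t-y_t^*$ lives. \cref{lem:141} on the difference classes then converts empirical norms to population norms. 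The price is that you must abandon residual telescoping for an $L_2$ backward recursion under the behavior distribution: $\|\widehat Q_t-Q_t^\pi\|_2\lesssim\|\hat b_t-b_t^*\|_2+\sqrt{\kappa_{t+1}}\|\widehat Q_{t+1}-Q_{t+1}^\pi\|_2+(T-t+1)\delta_{t,*}$. Unrolling this with $\prod_j\sqrt{\kappa_j}\le K$ and \cref{thm:bridgeerr} gives the stated $K\tau_{\max}T^2$ rate. If you keep your route, you must add a completeness assumption to the theorem. With that addition your telescoping would be the tidier argument, since each stage is transported to $\mu_t$ once rather than through a chain of per-step inequalities whose constants compound.
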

Without considering the ill-posedness, our OPE error bound achieves the optimal rate in $n$ in the classical nonparametric regression \cite{stone1982optimal}. Our error bound exhibits a $T^2$ dependence on the horizon, which arises from error propagation through the Bellman recursion and the additional complexity introduced by estimating bridge functions under the MNAR setting. 
For comparison, \citet{wang2024fine} provide a fine-grained analysis of FQE under fully observed rewards. Under the completeness assumption for $Q$-functions alone, they establish an error bound of order $\mathcal{O}(T^{1.5}\sqrt{1/n})$ for both parametric and nonparametric settings, improving upon the $\mathcal{O}(T^2\sqrt{\kappa/n})$ bounds in prior work \citep{duan2020minimax,zhang2022offline}. With an additional realizability assumption on the probability ratio functions $w_t^\pi$, the rate further improves to $\mathcal{O}(T\tilde{\kappa}\sqrt{1/n})$, matching the sharpest known bound under the tabular setting \citep{yin2020asymptotically}. The additional $T$ factor in our bound compared to the $T^{1.5}$ rate in \citet{wang2024fine} partially reflects the cost of correcting for MNAR rewards through the bridge function mechanism.

\section{Experiments}
In this section, we evaluate the performance of the proposed OPE estimator with rewards MNAR on both simulated and real-world data. We compare our method with four baselines: an IPW-based OPE method \cite{wang2025off}, a naive FQE baseline, an imputation-based OPE method and a reward-shaping-based OPE method \cite{parbhoo2020shaping}. The naive FQE estimator ignores the MNAR mechanism and applies FQE directly to the observed rewards. The IPW baseline adapts the weighting scheme of \citet{wang2025off}, which was originally developed for a trajectory dropout model. The imputation-based baseline learns the reward function by regression and imputes the unobserved reward using the fitted model. The reward-shaping baseline, named SCOPE, uses per-step importance sampling with potential-based reward shaping. 

We present results on simulation studies and a real-data application below. Our code is available at \url{https://github.com/NAIVlab/ShadOPE}.

\subsection{Simulation studies}
We conduct simulation studies in finite-horizon episodic MDPs to evaluate the proposed method under varying missingness levels and reward generation mechanisms. We compare ProxFQE against four baselines, naive FQE, IPW-FQE, Impute-FQE, and SCOPE, over 50 random seeds. 

We set state $S_t = (S_{t,1}, S_{t,2})^\top \in \mathcal S = \mathbb R^2$ as a two-dimensional vector. The action space is binary, $\mathcal A=\{-1,1\}$. Let $\mathcal{O}=\{0,1\}$ and $\mathcal{R}=\mathbb{R}$. The propensity score is set as 
\[e_t(S_t,A_t,R_t) = \operatorname{expit}\big(c_0 - 0.1A_t + 0.2(1,-2)^\top S_t +2.5 R_t \big)\] where $c_0$ is calibrated to attain target missing rates ranging from $20\%$ to $80\%$. The target policy we want to evaluate is given by 
\begin{align*}
    P_{\pi}(A_t=1 \mid S_t, O_{t-1}) =& \operatorname{expit}\big\{3[(1,0.3)^\top S_t +0.5\\&- 0.8 (2O_{t-1}-1)]\bigr\}
\end{align*}
See \cref{fig:data-overview-panel} in \cref{app:experiments} for visualization of the generated data. 

For function classes, we choose Gaussian kernels for $\mathcal{G}^{(t)}$ and $\mathcal{B}^{(t)}$. We report MSE under three settings: varying the sample size, varying the horizon length, and varying the reward generation mechanism. Specifically, we consider $n \in \{64,128,256,512,1024,2048\}$ with $T=8$, and $T \in \{2,4,8,16,32\}$ with $n=512$. More simulation details can be found in \cref{app:addisimsetup}.

\begin{figure*}[ht]
    \centering
    \includegraphics[width=0.98\textwidth]{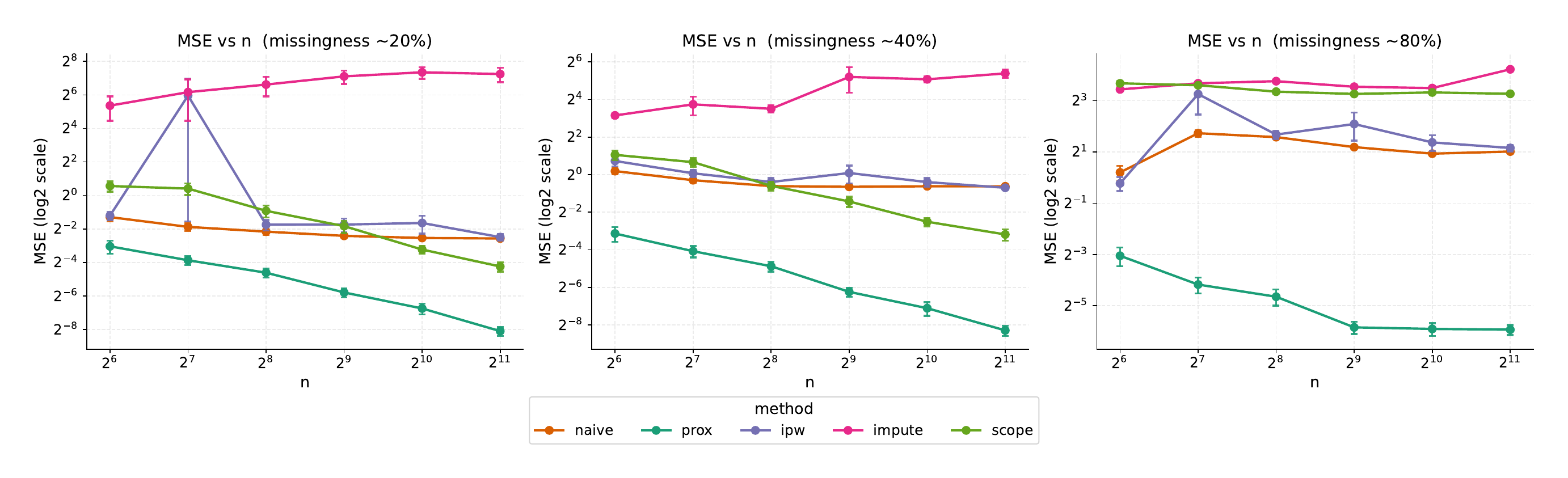}
    \caption{MSE vs.\ sample size ($n$) under three MNAR missingness percentages (${\sim}20\%$, ${\sim}40\%$, ${\sim}80\%$). \texttt{prox} consistently achieves the lowest MSE across all sample sizes and missingness levels, with MSE decreasing steadily as $n$ grows. }                
    \label{fig:mse-vs-n-missing}
\end{figure*}

\cref{fig:mse-vs-n-missing} shows the MSE of \texttt{prox} (our method), and four baselines \texttt{naive} (naive FQE), \texttt{ipw} (IPW-FQE), \texttt{impute} (Impute-FQE), and \texttt{scope} (SCOPE) under three MNAR missingness levels, approximately $20\%$, $40\%$, and $80\%$. Across all three panels, \texttt{prox} achieves the fastest error decay in $n$ and consistently the smallest MSE, with its advantage growing under heavier missingness. The baselines plateau at non-vanishing bias floors (\texttt{naive}, \texttt{ipw}), amplify selection bias (\texttt{impute}), or degrade under high missingness (\texttt{scope}), consistent with our theoretical analysis. Overall, the results demonstrate that \texttt{prox} is the most accurate and stable estimator in this setting.

Additional simulation results for varying horizon lengths and different reward generation mechanisms are reported in \cref{app:experiments} in the Supplemental Materials.

\subsection{Application: MIMIC-III Sepsis data}
We evaluate our method on a real-world clinical dataset from the MIMIC-III database \citep{johnson2016mimic}, using the sepsis cohort and pre-processing pipeline of \citet{raghu2017deep}. The dataset consists of ICU   
patients meeting the Sepsis-3 criteria, where physiological measurements, lab values, and treatment records are aggregated into 4-hour windows. Each patient's state at time $t$ is represented by a 48-dimensional feature vector comprising demographics and static indices (e.g., age, SOFA score, SIRS), laboratory values, vital sign, and intake/output variables. The action space consists of 25 discrete treatment options formed by 5 vasopressor dose levels $\times$ 5 IV fluid dose levels, where non-zero dosages are discretized into quartiles. 

We retain patients with at least 10 recorded time steps and truncate to the first 10, yielding a horizon of $T = 10$ with 13{,}943 patients. The reward at each step is defined as $R_t = -(SOFA_{t+1} - SOFA_t)$. Since the MIMIC-III rewards are fully observed, we introduce synthetic MNAR missingness to evaluate our method. The target policy is constructed by training a Double DQN on the fully-observed data and then applying a conservative dose reduction, yielding a missingness-aware policy $\pi(a \mid s_t, o_{t-1})$. Additional setups are provided in \cref{app:addirealdatasetup}.

We compare \texttt{prox} against \texttt{oracle} (oracle FQE), which uses fully observed rewards as a reference), \texttt{naive}, \texttt{impute}, \texttt{ipw}, and \texttt{scope}. 

\begin{figure*}[ht]
      \centering
      \includegraphics[width=0.7\textwidth]{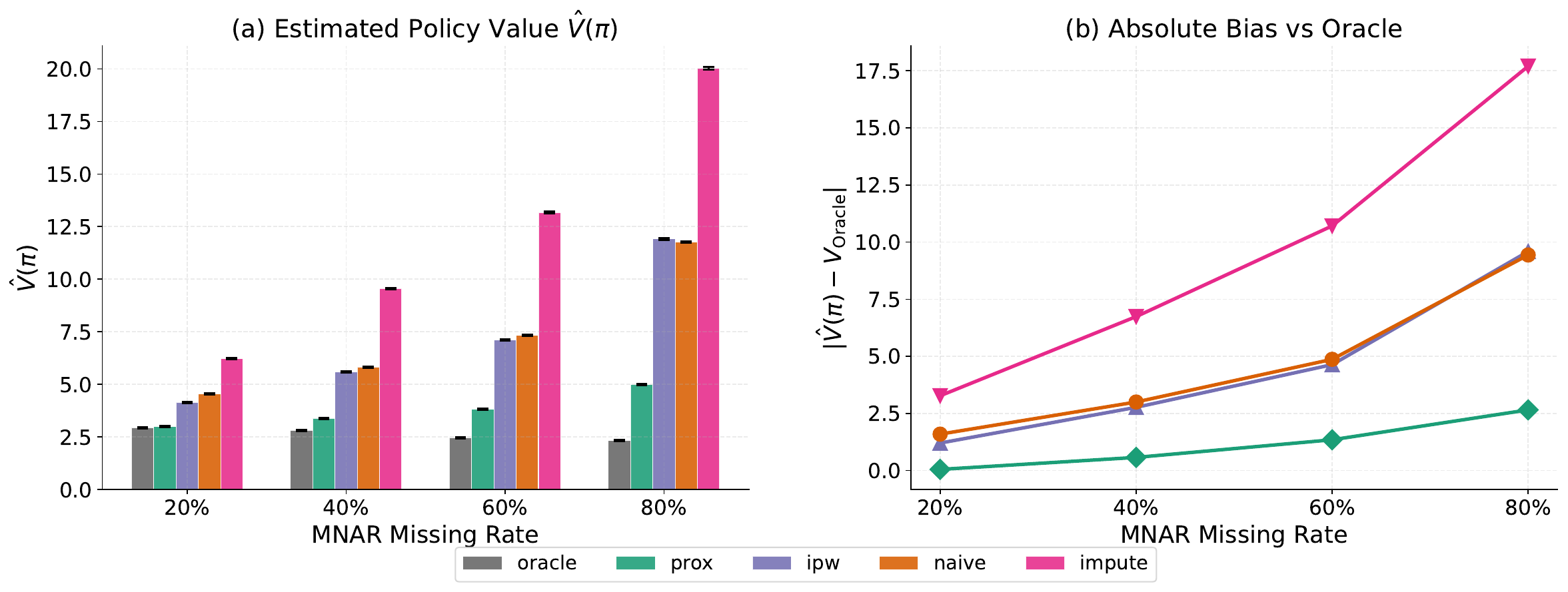}
      \caption{OPE results on MIMIC-III sepsis data under MNAR missing rates from 20\% to 80\%. (a) Estimated policy value $\hat{V}(\pi)$ with standard error bars. oracle FQE (gray) uses fully observed rewards as a reference. (b) Absolute bias relative to oracle FQE. SCOPE is excluded due to degenerate estimates.}            
      \label{fig:sepsis_ope}
  \end{figure*} 

\cref{fig:sepsis_ope} presents the results (full numerical results are provided in \cref{tab:sepsis_ope} in \cref{app:addirealdatasetup}). \texttt{prox} attains the lowest bias across all missing rates and remains close to \texttt{oracle} even under severe missingness, while all baselines exhibit substantially larger and growing bias as the missing rate increases. \texttt{scope} produces degenerate estimates due to near-zero importance weight overlap and is excluded from the figure.

\section{Conclusion and Discussion}
We study OPE in MDPs with MNAR rewards, proposing a bridge function approach that recovers the conditional mean reward without explicitly modeling the missingness mechanism. Unlike IPW-based methods \citep{wang2025off} that require external auxiliary variables, our method uses the next state as an endogenous shadow variable, avoiding additional data requirements and IPW variance inflation.

A limitation of our framework is that the reward missingness process may still be influenced by unobserved confounding factors that are not fully captured by the observed trajectories. While our identification relies on assumptions through the bridge function, violations of these assumptions may lead to biased policy value estimates in practice.

To address this concern, an important future direction is to incorporate sensitivity analysis into the proposed OPE framework. By parameterizing deviations from the bridge moment conditions or the completeness assumptions, one can assess the robustness of policy value estimates to potential unobserved confounding in the missingness mechanism. Such sensitivity analyses have been studied in proximal causal inference and POMDPs, and adapting their frameworks from proximal causal inference and POMDPs to the reward-missingness mechanism in MDPs is an important direction for future work.

\section*{Acknowledgement}
Qu's research is partially supported by NSF Grant DMS 2515275, NCI grant 1R01CA297869, NSF Grant CDS\&E-MSS 2401271. Miao's research is partially supported by Texas Artificial Intelligence Institute at University of Texas at Dallas.

\section*{Impact Statement}
This paper presents work whose goal is to advance the field of machine learning. There are many potential societal consequences of our work, none of which we feel must be specifically highlighted here.

\bibliography{references}

@article{miao2016varieties,
  title={On varieties of doubly robust estimators under missingness not at random with a shadow variable},
  author={Miao, Wang and Tchetgen Tchetgen, Eric J},
  journal={Biometrika},
  volume={103},
  number={2},
  pages={475--482},
  year={2016},
  publisher={Oxford University Press}
}

@article{miao2018identification,
  title={Identification and inference with nonignorable missing covariate data},
  author={Miao, Wang and Tchetgen, Eric Tchetgen},
  journal={Statistica Sinica},
  volume={28},
  number={4},
  pages={2049},
  year={2018}
}

@article{miao2022off,
  title={Off-policy evaluation for episodic partially observable markov decision processes under non-parametric models},
  author={Miao, Rui and Qi, Zhengling and Zhang, Xiaoke},
  journal={Advances in Neural Information Processing Systems},
  volume={35},
  pages={593--606},
  year={2022}
}

@book{sutton1998reinforcement,
  title={Reinforcement learning: An introduction},
  author={Sutton, Richard S and Barto, Andrew G and others},
  volume={1},
  number={1},
  year={1998},
  publisher={MIT press Cambridge}
}

@inproceedings{baird1995residual,
  title={Residual algorithms: Reinforcement learning with function approximation},
  author={Baird, Leemon and others},
  booktitle={Proceedings of the twelfth international conference on machine learning},
  pages={30--37},
  year={1995}
}

@article{dikkala2020minimax,
  title={Minimax estimation of conditional moment models},
  author={Dikkala, Nishanth and Lewis, Greg and Mackey, Lester and Syrgkanis, Vasilis},
  journal={Advances in Neural Information Processing Systems},
  volume={33},
  pages={12248--12262},
  year={2020}
}

@book{wainwright2019high,
  title={High-dimensional statistics: A non-asymptotic viewpoint},
  author={Wainwright, Martin J},
  volume={48},
  year={2019},
  publisher={Cambridge university press}
}

@book{kress1989linear,
  title={Linear integral equations},
  author={Kress, Rainer},
  volume={82},
  year={1989},
  publisher={Springer}
}

@article{fukumizu2009kernel,
  title={Kernel choice and classifiability for RKHS embeddings of probability distributions},
  author={Fukumizu, Kenji and Gretton, Arthur and Lanckriet, Gert and Sch{\"o}lkopf, Bernhard and Sriperumbudur, Bharath K},
  journal={Advances in neural information processing systems},
  volume={22},
  year={2009}
}

@article{foster2023orthogonal,
  title={Orthogonal statistical learning},
  author={Foster, Dylan J and Syrgkanis, Vasilis},
  journal={The Annals of Statistics},
  volume={51},
  number={3},
  pages={879--908},
  year={2023},
  publisher={Institute of Mathematical Statistics}
}

@article{bartlett2005local,
  title={Local rademacher complexities},
  author={Bartlett, Peter L and Bousquet, Olivier and Mendelson, Shahar},
  year={2005}
}

@inproceedings{munos2003error,
  title={Error bounds for approximate policy iteration},
  author={Munos, R{\'e}mi},
  booktitle={Proceedings of the Twentieth International Conference on International Conference on Machine Learning},
  pages={560--567},
  year={2003}
}

@article{munos2007performance,
  title={Performance bounds in l\_p-norm for approximate value iteration},
  author={Munos, R{\'e}mi},
  journal={SIAM journal on control and optimization},
  volume={46},
  number={2},
  pages={541--561},
  year={2007},
  publisher={SIAM}
}

@inproceedings{chen2019information,
  title={Information-theoretic considerations in batch reinforcement learning},
  author={Chen, Jinglin and Jiang, Nan},
  booktitle={International conference on machine learning},
  pages={1042--1051},
  year={2019},
  organization={PMLR}
}

@inproceedings{duan2021risk,
  title={Risk bounds and rademacher complexity in batch reinforcement learning},
  author={Duan, Yaqi and Jin, Chi and Li, Zhiyuan},
  booktitle={International Conference on Machine Learning},
  pages={2892--2902},
  year={2021},
  organization={PMLR}
}

@article{krieg2018tensor,
  title={Tensor power sequences and the approximation of tensor product operators},
  author={Krieg, David},
  journal={Journal of Complexity},
  volume={44},
  pages={30--51},
  year={2018},
  publisher={Elsevier}
}

@article{hong2023policy,
  title={A policy gradient method for confounded pomdps},
  author={Hong, Mao and Qi, Zhengling and Xu, Yanxun},
  journal={arXiv preprint arXiv:2305.17083},
  year={2023}
}

@article{li2025reinforcement,
  title={Reinforcement Learning with Continuous Actions Under Unmeasured Confounding},
  author={Li, Yuhan and Han, Eugene and Hu, Yifan and Zhou, Wenzhuo and Qi, Zhengling and Cui, Yifan and Zhu, Ruoqing},
  journal={Journal of the American Statistical Association},
  number={just-accepted},
  pages={1--26},
  year={2025},
  publisher={Taylor \& Francis}
}

@article{cui2024semiparametric,
  title={Semiparametric proximal causal inference},
  author={Cui, Yifan and Pu, Hongming and Shi, Xu and Miao, Wang and Tchetgen Tchetgen, Eric},
  journal={Journal of the American Statistical Association},
  volume={119},
  number={546},
  pages={1348--1359},
  year={2024},
  publisher={Taylor \& Francis}
}

@article{fischer2020sobolev,
  title={Sobolev norm learning rates for regularized least-squares algorithms},
  author={Fischer, Simon and Steinwart, Ingo},
  journal={Journal of Machine Learning Research},
  volume={21},
  number={205},
  pages={1--38},
  year={2020}
}

@article{chen2011rate,
  title={On rate optimality for ill-posed inverse problems in econometrics},
  author={Chen, Xiaohong and Reiss, Markus},
  journal={Econometric Theory},
  volume={27},
  number={3},
  pages={497--521},
  year={2011},
  publisher={Cambridge University Press}
}

@article{chen2012estimation,
  title={Estimation of nonparametric conditional moment models with possibly nonsmooth generalized residuals},
  author={Chen, Xiaohong and Pouzo, Demian},
  journal={Econometrica},
  volume={80},
  number={1},
  pages={277--321},
  year={2012},
  publisher={Wiley Online Library}
}

@article{uehara2023future,
  title={Future-dependent value-based off-policy evaluation in pomdps},
  author={Uehara, Masatoshi and Kiyohara, Haruka and Bennett, Andrew and Chernozhukov, Victor and Jiang, Nan and Kallus, Nathan and Shi, Chengchun and Sun, Wen},
  journal={Advances in neural information processing systems},
  volume={36},
  pages={15991--16008},
  year={2023}
}

@article{littman2001predictive,
  title={Predictive representations of state},
  author={Littman, Michael and Sutton, Richard S},
  journal={Advances in neural information processing systems},
  volume={14},
  year={2001}
}

@inproceedings{singh2003learning,
  title={Learning predictive state representations},
  author={Singh, Satinder P and Littman, Michael L and Jong, Nicholas K and Pardoe, David and Stone, Peter},
  booktitle={Proceedings of the 20th International Conference on Machine Learning (ICML-03)},
  pages={712--719},
  year={2003}
}

@inproceedings{xu2023instrumental,
  title={An instrumental variable approach to confounded off-policy evaluation},
  author={Xu, Yang and Zhu, Jin and Shi, Chengchun and Luo, Shikai and Song, Rui},
  booktitle={International Conference on Machine Learning},
  pages={38848--38880},
  year={2023},
  organization={PMLR}
}

@article{miao2015identification,
  title={Identification and doubly robust estimation of data missing not at random with an ancillary variable},
  author={Miao, Wang and Tchetgen Tchetgen, Eric and Geng, Zhi},
  year={2015},
  publisher={bepress}
}

@article{wang2024fine,
  title={A fine-grained analysis of fitted Q-evaluation: beyond parametric models},
  author={Wang, Jiayi and Qi, Zhengling and Wong, Raymond KW},
  journal={arXiv preprint arXiv:2406.10438},
  year={2024}
}

@inproceedings{parbhoo2020shaping,
  title={Shaping control variates for off-policy evaluation},
  author={Parbhoo, Sonali and Gottesman, Omer and Doshi-Velez, Finale},
  booktitle={Offline Reinforcement Learning Workshop at Neural Information Processing Systems (NeurIPS)},
  pages={93},
  year={2020}
}

@article{johnson2016mimic,
  title={MIMIC-III, a freely accessible critical care database},
  author={Johnson, Alistair EW and Pollard, Tom J and Shen, Lu and Lehman, Li-wei H and Feng, Mengling and Ghassemi, Mohammad and Moody, Benjamin and Szolovits, Peter and Anthony Celi, Leo and Mark, Roger G},
  journal={Scientific data},
  volume={3},
  number={1},
  pages={1--9},
  year={2016},
  publisher={Nature Publishing Group}
}

@article{raghu2017deep,
  title={Deep reinforcement learning for sepsis treatment},
  author={Raghu, Aniruddh and Komorowski, Matthieu and Ahmed, Imran and Celi, Leo and Szolovits, Peter and Ghassemi, Marzyeh},
  journal={arXiv preprint arXiv:1711.09602},
  year={2017}
}

@inproceedings{chu2023multiply,
  title={Multiply robust off-policy evaluation and learning under truncation by death},
  author={Chu, Jianing and Yang, Shu and Lu, Wenbin},
  booktitle={International Conference on Machine Learning},
  pages={6195--6227},
  year={2023},
  organization={PMLR}
}

@article{park2025evaluating,
  title={Evaluating and Learning Optimal Dynamic Treatment Regimes under Truncation by Death},
  author={Park, Sihyung and Lu, Wenbin and Yang, Shu},
  journal={arXiv preprint arXiv:2510.07501},
  year={2025}
}

@article{wang2025off,
  title={Off-Policy Evaluation Under Nonignorable Missing Data},
  author={Wang, Han and Xu, Yang and Lu, Wenbin and Song, Rui},
  journal={arXiv preprint arXiv:2507.06961},
  year={2025}
}

@article{ouyang2022training,
  title={Training language models to follow instructions with human feedback},
  author={Ouyang, Long and Wu, Jeffrey and Jiang, Xu and Almeida, Diogo and Wainwright, Carroll and Mishkin, Pamela and Zhang, Chong and Agarwal, Sandhini and Slama, Katarina and Ray, Alex and others},
  journal={Advances in Neural Information Processing Systems},
  volume={35},
  pages={27730--27744},
  year={2022}
}

@article{rafailov2023direct,
  title={Direct preference optimization: Your language model is secretly a reward model},
  author={Rafailov, Rafael and Sharma, Archit and Mitchell, Eric and Manning, Christopher D and Ermon, Stefano and Finn, Chelsea},
  journal={Advances in neural information processing systems},
  volume={36},
  pages={53728--53741},
  year={2023}
}

@article{achiam2023gpt,
  title={GPT-4 technical report},
  author={Achiam, Josh and Adler, Steven and Agarwal, Sandhini and Ahmad, Lama and Akkaya, Ilge and Aleman, Florencia Leoni and Almeida, Diogo and Altenschmidt, Janko and Altman, Sam and Anadkat, Shyamal and others},
  journal={arXiv preprint arXiv:2303.08774},
  year={2023}
}

@book{tsiatis2019dynamic,
  title={Dynamic Treatment Regimes: Statistical Methods for Precision Medicine},
  author={Tsiatis, Anastasios A and Davidian, Marie and Holloway, Shannon T and Laber, Eric B},
  year={2019},
  publisher={CRC press}
}

@article{murphy2003optimal,
  title={Optimal dynamic treatment regimes},
  author={Murphy, Susan A},
  journal={Journal of the Royal Statistical Society: Series B (Statistical Methodology)},
  volume={65},
  number={2},
  pages={331--355},
  year={2003},
  publisher={Wiley Online Library}
}

@article{mandel2014offline,
  title={Offline policy evaluation across representations with applications to educational games},
  author={Mandel, Travis and Liu, Yun-En and Levine, Sergey and Brunskill, Emma and Popovic, Zoran},
  journal={Proceedings of AAMAS},
  year={2014}
}

@article{levine2020offline,
  title={Offline reinforcement learning: Tutorial, review, and perspectives on open problems},
  author={Levine, Sergey and Kumar, Aviral and Tucker, George and Fu, Justin},
  journal={arXiv preprint arXiv:2005.01643},
  year={2020}
}

@article{uehara2022review,
  title={A Review of Off-Policy Evaluation in Reinforcement Learning},
  author={Uehara, Masatoshi and Shi, Chengchun and Kallus, Nathan},
  journal={arXiv preprint arXiv:2212.06355},
  year={2022}
}

@article{voloshin2019empirical,
  title={Empirical study of off-policy policy evaluation for reinforcement learning},
  author={Voloshin, Cameron and Le, Hoang M and Jiang, Nan and Yue, Yisong},
  journal={arXiv preprint arXiv:1911.06854},
  year={2019}
}

@article{little2019statistical,
  title={Statistical Analysis with Missing Data},
  author={Little, Roderick JA and Rubin, Donald B},
  journal={John Wiley \& Sons},
  edition={3rd},
  year={2019}
}

@article{kallus2020confounding,
  title={Confounding-robust policy evaluation in infinite-horizon reinforcement learning},
  author={Kallus, Nathan and Zhou, Angela},
  journal={Advances in Neural Information Processing Systems},
  volume={33},
  pages={22293--22304},
  year={2020}
}

@book{kaelbling1998planning,
  title={Planning and acting in partially observable stochastic domains},
  author={Kaelbling, Leslie Pack and Littman, Michael L and Cassandra, Anthony R},
  journal={Artificial Intelligence},
  volume={101},
  number={1-2},
  pages={99--134},
  year={1998},
  publisher={Elsevier}
}

@article{jaakkola1995reinforcement,
  title={Reinforcement learning algorithm for partially observable Markov decision problems},
  author={Jaakkola, Tommi and Singh, Satinder P and Jordan, Michael I},
  journal={Advances in Neural Information Processing Systems},
  pages={345--352},
  year={1995}
}

@article{tennenholtz2020off,
  title={Off-Policy Evaluation in Partially Observable Environments},
  author={Tennenholtz, Guy and Shalit, Uri and Mannor, Shie},
  journal={Proceedings of the AAAI Conference on Artificial Intelligence},
  volume={34},
  number={06},
  pages={10276--10283},
  year={2020}
}

@inproceedings{bennett2021off,
  title={Off-policy evaluation in infinite-horizon reinforcement learning with latent confounders},
  author={Bennett, Andrew and Kallus, Nathan and Li, Lihong and Mousavi, Ali},
  booktitle={International Conference on Artificial Intelligence and Statistics},
  pages={1999--2007},
  year={2021},
  organization={PMLR}
}

@inproceedings{shi2022minimax,
  title={A minimax learning approach to off-policy evaluation in confounded partially observable markov decision processes},
  author={Shi, Chengchun and Uehara, Masatoshi and Huang, Jiawei and Jiang, Nan},
  booktitle={International Conference on Machine Learning},
  pages={20057--20094},
  year={2022},
  organization={PMLR}
}

@inproceedings{ng1999policy,
  title={Policy invariance under reward transformations: Theory and application to reward shaping},
  author={Ng, Andrew Y and Harada, Daishi and Russell, Stuart},
  booktitle={International Conference on Machine Learning},
  pages={278--287},
  year={1999}
}

@article{devlin2012dynamic,
  title={Dynamic potential-based reward shaping},
  author={Devlin, Sam and Kudenko, Daniel},
  journal={Proceedings of the 11th International Conference on Autonomous Agents and Multiagent Systems},
  pages={433--440},
  year={2012}
}

@article{harutyunyan2015off,
  title={Off-policy reward shaping with ensembles},
  author={Harutyunyan, Anna and Brys, Tim and Vrancx, Peter and Now{\'e}, Ann},
  journal={arXiv preprint arXiv:1502.03248},
  year={2015}
}

@inproceedings{zhan2022offline,
  title={Offline reinforcement learning with realizability and single-policy concentrability},
  author={Zhan, Wenhao and Huang, Baihe and Huang, Audrey and Jiang, Nan and Lee, Jason D},
  booktitle={Conference on Learning Theory},
  pages={2730--2775},
  year={2022},
  organization={PMLR}
}

@inproceedings{rashidinejad2021bridging,
  title={Bridging offline reinforcement learning and imitation learning: A tale of pessimism},
  author={Rashidinejad, Paria and Zhu, Banghua and Ma, Cong and Jiao, Jiantao and Russell, Stuart},
  booktitle={Advances in Neural Information Processing Systems},
  volume={34},
  pages={11702--11716},
  year={2021}
}

@inproceedings{xie2021bellman,
  title={Bellman-consistent pessimism for offline reinforcement learning},
  author={Xie, Tengyang and Cheng, Ching-An and Jiang, Nan and Mineiro, Paul and Agarwal, Alekh},
  booktitle={Advances in Neural Information Processing Systems},
  volume={34},
  pages={6683--6694},
  year={2021}
}

@inproceedings{shi2023pessimistic,
  title={Pessimistic Q-learning for offline reinforcement learning: Towards optimal sample complexity},
  author={Shi, Laixi and Li, Gen and Wei, Yuting and Chen, Yuxin and Chi, Yuejie},
  booktitle={International Conference on Machine Learning},
  pages={31335--31385},
  year={2023},
  organization={PMLR}
}

@article{le2019batch,
  title={Batch policy learning under constraints},
  author={Le, Hoang M and Voloshin, Cameron and Yue, Yisong},
  journal={International Conference on Machine Learning},
  pages={3703--3712},
  year={2019},
  organization={PMLR}
}

@article{liu2018breaking,
  title={Breaking the curse of horizon: Infinite-horizon off-policy estimation},
  author={Liu, Qiang and Li, Lihong and Tang, Ziyang and Zhou, Dengyong},
  journal={Advances in Neural Information Processing Systems},
  volume={31},
  year={2018}
}

@inproceedings{kallus2020double,
  title={Double reinforcement learning for efficient off-policy evaluation in Markov decision processes},
  author={Kallus, Nathan and Uehara, Masatoshi},
  booktitle={International Conference on Machine Learning},
  pages={5078--5088},
  year={2020},
  organization={PMLR}
}

@article{miao2018identifying,
  title={Identifying causal effects with proxy variables of an unmeasured confounder},
  author={Miao, Wang and Geng, Zhi and Tchetgen Tchetgen, Eric J},
  journal={Biometrika},
  volume={105},
  number={4},
  pages={987--993},
  year={2018},
  publisher={Oxford University Press}
}

@article{tchetgen2020introduction,
  title={An introduction to proximal causal learning},
  author={Tchetgen Tchetgen, Eric J and Ying, Andrew and Cui, Yifan and Shi, Xu and Miao, Wang},
  journal={arXiv preprint arXiv:2009.10982},
  year={2020}
}

@article{bennett2021proximal,
  title={Proximal reinforcement learning: Efficient off-policy evaluation in partially observed markov decision processes},
  author={Bennett, Andrew and Kallus, Nathan},
  journal={arXiv preprint arXiv:2110.15332},
  year={2021}
}

@article{mohan2021graphical,
  title={Graphical models for processing missing data},
  author={Mohan, Karthika and Pearl, Judea},
  journal={Journal of the American Statistical Association},
  volume={116},
  number={534},
  pages={1023--1037},
  year={2021},
  publisher={Taylor \& Francis}
}

@article{sun2018semiparametric,
  title={Semiparametric estimation with data missing not at random using an instrumental variable},
  author={Sun, BaoLuo and Tchetgen Tchetgen, Eric J},
  journal={Statistica Sinica},
  volume={28},
  number={4},
  pages={1965},
  year={2018},
  publisher={NIH Public Access}
}

@article{shao2016semiparametric,
  title={Semiparametric inverse propensity weighting for nonignorable missing data},
  author={Shao, Jun and Wang, Lei},
  journal={Biometrika},
  volume={103},
  number={1},
  pages={175--187},
  year={2016},
  publisher={Oxford University Press}
}

@inproceedings{duan2020minimax,
  title        = {Minimax-Optimal Off-Policy Evaluation with Linear Function Approximation},
  author       = {Duan, Yaqi and Jia, Zeyu and Wang, Mengdi},
  booktitle    = {International Conference on Machine Learning},
  pages        = {2701--2709},
  year         = {2020},
  organization = {PMLR}
}

@inproceedings{zhang2022offline,
  title={Off-policy fitted q-evaluation with differentiable function approximators: Z-estimation and inference theory},
  author={Zhang, Ruiqi and Zhang, Xuezhou and Ni, Chengzhuo and Wang, Mengdi},
  booktitle={International Conference on Machine Learning},
  pages={26713--26749},
  year={2022},
  organization={PMLR}
}

@inproceedings{yin2020asymptotically,
  title        = {Asymptotically Efficient Off-Policy Evaluation for Tabular Reinforcement Learning},
  author       = {Yin, Ming and Wang, Yu-Xiang},
  booktitle    = {International Conference on Artificial Intelligence and Statistics},
  pages        = {3948--3958},
  year         = {2020},
  volume       = {108},
  series       = {Proceedings of Machine Learning Research},
  organization = {PMLR}
}

@article{stone1982optimal,
  title     = {Optimal global rates of convergence for nonparametric regression},
  author    = {Stone, Charles J},
  journal   = {The annals of statistics},
  pages     = {1040--1053},
  year      = {1982},
  publisher = {JSTOR}
}

@inproceedings{yang2018unbiased,
  title={Unbiased offline recommender evaluation for missing-not-at-random implicit feedback},
  author={Yang, Longqi and Cui, Yin and Xuan, Yuan and Wang, Chenyang and Belongie, Serge and Estrin, Deborah},
  booktitle={Proceedings of the 12th ACM conference on recommender systems},
  pages={279--287},
  year={2018}
}

@book{enders2022applied,
  title={Applied missing data analysis},
  author={Enders, Craig K},
  year={2022},
  publisher={Guilford Publications}
}

@article{zhang2024curses,
  title={On the curses of future and history in future-dependent value functions for off-policy evaluation},
  author={Zhang, Yuheng and Jiang, Nan},
  journal={Advances in Neural Information Processing Systems},
  volume={37},
  pages={124756--124790},
  year={2024}
}

@article{zhang2025statistical,
  title={Statistical tractability of off-policy evaluation of history-dependent policies in pomdps},
  author={Zhang, Yuheng and Jiang, Nan},
  journal={arXiv preprint arXiv:2503.01134},
  year={2025}
}

@inproceedings{majumdar2025concept,
  title={Concept-Based Off-Policy Evaluation},
  author={Majumdar, Ritam and Teversham, Jack and Parbhoo, Sonali},
  booktitle={Reinforcement Learning Conference},
  year={2025}
}

@article{kuang2026breaking,
  title={Breaking the Order Barrier: Off-Policy Evaluation for Confounded POMDPs},
  author={Kuang, Qi and Wang, Jiayi and Zhou, Fan and Qi, Zhengling},
  journal={Advances in Neural Information Processing Systems},
  volume={38},
  pages={49491--49538},
  year={2026}
}

@article{finn2016generalizing,
  title={Generalizing skills with semi-supervised reinforcement learning},
  author={Finn, Chelsea and Yu, Tianhe and Fu, Justin and Abbeel, Pieter and Levine, Sergey},
  journal={arXiv preprint arXiv:1612.00429},
  year={2016}
}

@inproceedings{li2023mahalo,
  title={Mahalo: Unifying offline reinforcement learning and imitation learning from observations},
  author={Li, Anqi and Boots, Byron and Cheng, Ching-An},
  booktitle={International Conference on Machine Learning},
  pages={19360--19384},
  year={2023},
  organization={PMLR}
}

@inproceedings{zheng2023semi,
  title={Semi-supervised offline reinforcement learning with action-free trajectories},
  author={Zheng, Qinqing and Henaff, Mikael and Amos, Brandon and Grover, Aditya},
  booktitle={International conference on machine learning},
  pages={42339--42362},
  year={2023},
  organization={PMLR}
}
\bibliographystyle{icml2026}

\newpage
\appendix
\onecolumn

\section{Related Work}

\paragraph{OPE.}
Off-policy evaluation has been extensively studied in the RL literature. Classical methods include IS and its variants \citep{liu2018breaking}, FQE \citep{le2019batch}, and doubly robust estimators that combine both \citep{kallus2020double}. Recent advances in offline RL have developed pessimistic approaches \citep{xie2021bellman,rashidinejad2021bridging,shi2023pessimistic,zhan2022offline} that achieve near-optimal sample complexity. For comprehensive reviews, see \citet{uehara2022review} and \citet{levine2020offline}.

OPE in POMDPs has received growing attention, with works addressing latent confounding \citep{bennett2021off,kallus2020confounding}, partial observability \citep{tennenholtz2020off,shi2022minimax,miao2022off}, and future-dependent estimation \citep{uehara2023future,zhang2024curses}. More recent extensions study OPE of history-dependent policies through model-based methods \citep{zhang2025statistical}, confounded POMDPs \citep{kuang2026breaking}, and concept-based representations \citep{majumdar2025concept}. These works tackle partial observability in the state process, while our work addresses a complementary challenge: MNAR missingness in the reward process under a fully observed MDP.

\paragraph{Missing Data.}
Missing data problems have been extensively studied in statistics \citep{little2019statistical,enders2022applied}. Under missing at random (MAR) assumptions, inverse probability weighting and doubly robust methods are well-established. For MNAR, identification typically requires additional structure such as instrumental variables \citep{sun2018semiparametric}, shadow variables \citep{shao2016semiparametric,miao2016varieties,miao2018identifying}, or graphical constraints \citep{mohan2021graphical}. Proximal causal inference \citep{tchetgen2020introduction,bennett2021proximal,cui2024semiparametric} has emerged as a powerful framework for handling unmeasured confounding using proxy variables. Our work extends these ideas to the OPE setting with MNAR rewards, leveraging future states as shadow variables for identification.

\paragraph{Reward Shaping.} Potential-based reward shaping augments rewards with a potential difference to densify sparse feedback while preserving the optimal policy \citep{ng1999policy}, with later extensions to dynamic shaping and off-policy settings \citep{devlin2012dynamic,harutyunyan2015off,parbhoo2020shaping}. These methods are designed to accelerate policy learning or reduce variance under fully observed rewards, whereas our work targets identification under MNAR rewards via a bridge function.

\paragraph{Semi-supervised RL.}
Our setting also relates to semi-supervised RL and partially reward-labeled sequential decision-making, including semi-supervised RL with rewards available only in labeled MDPs \citep{finn2016generalizing}, semi-supervised offline RL with mixed labeled and unlabeled trajectories \citep{zheng2023semi}, and offline policy learning with partially reward-labeled trajectories \citep{li2023mahalo}. These works focus on policy learning or reward modeling under label scarcity, whereas we target off-policy \emph{evaluation} under an MNAR missingness mechanism that depends on the latent reward.

\section{Discussion on shadow variables}
In the causal inference literature on missing data, identification typically requires additional assumptions, most commonly instantiated through either instrumental variables or shadow variables. Shadow-variable approaches \cite{miao2015identification,miao2016varieties} leverage a fully observed variable that is informative about the outcome while being conditionally independent of the missingness mechanism given covariates and the (possibly unobserved) outcome. In contrast, instrumental-variable approaches \cite{sun2018semiparametric} posit a variable that shifts the missingness mechanism but has no direct effect on the outcome. 

In sequential decision making, \citet{wang2025off} develop an approach that requires specifying a stage-wise shadow variable $Z_t$ that satisfies conditional independence with dropout given $(S_t,A_t,R_{t+1},S_{t+1})$ while remaining informative about $(R_{t+1},S_{t+1})$ on the observed subset, effectively providing the identifying leverage needed to learn the dropout model.

While such a choice can be plausible, it may rely on additional measurements or domain knowledge to select a valid $Z_t$. In contrast, we adopt an endogenous choice of shadow variable, which is the next state $S_{t+1}$. Under the exclusion restriction and relevance condition in \cref{ass:exclusion,ass:relevance}, $S_{t+1}$ provides a readily available proxy for the MNAR reward without introducing an extra auxiliary variable.  

Moreover, a natural extension of $S_{t+1}$ is a multi-step future variable, or a low-dimensional summary thereof. This aligns with a broader predictive-state perspective in POMDPs, where future observations are used to encode information about the latent state \citep{littman2001predictive,singh2003learning}. More recently, \citet{xu2023instrumental,uehara2023future} use multi-step futures to stand in for the unobserved state: instead of conditioning on the latent state, they condition on a future window and learn quantities from it, using the future window as a proxy that carries latent-state information.

In missing data problems in MDPs, using longer futures can carry richer information about the missing rewards, and may make relevance and completeness-type conditions more plausible, but it also increases the statistical and computational burden as the future window grows. In practice, these tradeoffs motivate using compact summaries of multi-step futures.

\section{Additional Experiment Details} \label{app:experiments}
\subsection{Additional simulation setups}
\label{app:addisimsetup}
We set the behavior policy that generates the offline trajectories as
\[
P_{\pi^b}(A_t = 1 \mid S_t) = \operatorname{expit}\!\big(0.3 + (0.8,-0.3)^\top S_t\big).
\]
The next state $S_{t+1}$ is generated by transition kernel $S_{t+1} = 0.9 S_t + 0.2A_t\mathbf 1_2 +\mathcal N(0, 0.1^2 I_2)$ where $\mathbf 1_2 = (1,1)^\top$ and initial state \(S_1 \sim \mathcal N(0, I_2)\). We consider two reward generation mechanisms. The first is the sigmoid reward model used in other simulations:
\begin{align*}
   R_t = \operatorname{expit}([0.9-0.6A_t,-0.7]^\top S_t + [1.3,2]^\top S_{t+1} - 0.4A_t) + U_t,
\quad U_t \sim \mathrm{Unif}[-0.1,0.1], 
\end{align*}
and the second is a linear reward model:
\begin{align*}
 R_t = \mathrm{clip}([0.5,-0.3]^\top S_t + [0.8,0.6]^\top S_{t+1} - 0.3A_t + \epsilon_t,\,-1,\,1),
\quad \epsilon_t \sim N(0,0.01).   
\end{align*}

The true policy value is estimated by Monte Carlo using $5000$ independent trajectories generated under the target policy. We visualize the generated data for the setting $n=1000$, $T=10$, and random seed $44$; see \cref{fig:data-overview-panel} for an overview.

For all the RKHSs, the bandwidths are selected by median heuristic trick \cite{fukumizu2009kernel}; parameter $\delta$ is set to $\delta_t = 5n_t^{-0.4}$ according to \cite{dikkala2020minimax}. The penalty parameter $\lambda_{\mathrm{rkhs}}$ is chosen by cross-validation.

\begin{figure}[ht]
  \vskip 0.2in
  \begin{center}
    \centerline{\includegraphics[width=\columnwidth]{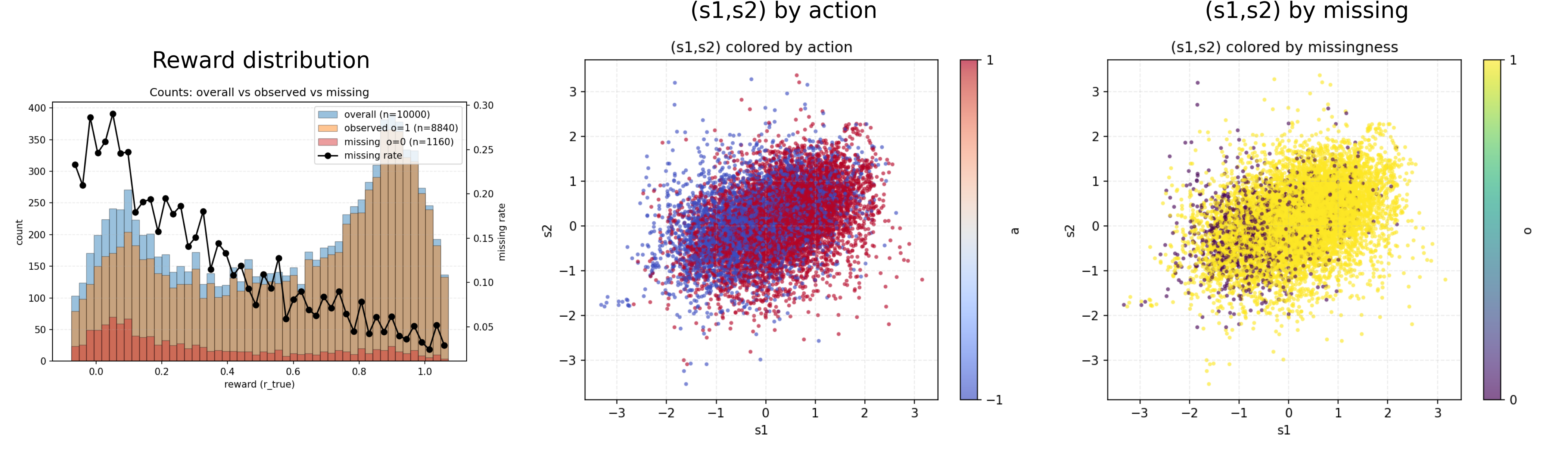}}
    \caption{\textbf{Data overview.}
    \textbf{Left}: histogram of the true reward \(r_{\text{true}}\) with three overlays: overall (blue), observed \(O{=}1\) (orange), and missing \(O{=}0\) (red). The total missing rate is $11.60\%$. Missing mass is relatively larger in the low–reward region.
    \textbf{Middle}: state value \(S_t = (s_1,s_2)\) colored by action (blue: \(a{=}-1\), red: \(a{=}+1\)) according to target policy, showing both actions across the state space without obvious coverage gaps.
    \textbf{Right}: state value \(S_t = (s_1,s_2)\) colored by missingness (yellow: \(O{=}1\), purple: \(O{=}0\)); the non-uniform placement of missing points indicates observation probability varies with state.}
    \label{fig:data-overview-panel}
  \end{center}
\end{figure}

\subsection{Baseline methods for simulation}
\subsubsection{Naive FQE}
In naive FQE baseline, we ignore the missingness mechanism, and perform FQE only on observed samples:
\[\widehat Q_t^{\mathrm{naive}}
=
\arg\min_{Q\in\mathcal Q^{(t)}}
\frac{1}{n_t}\sum_{i\in\mathcal I_t^{\mathrm{obs}}}\Big(Q(S_{t,i},A_{t,i})-y_{t,i}^{\mathrm{naive}}\Big)^2
+\lambda_{\rm rkhs}\|Q\|_{\mathcal Q^{(t)}}^2,\]
where 
\[y_{t,i}^{\mathrm{naive}}
=
\begin{cases}
R_{t,i}, & t=T,\\
R_{t,i}+\sum_{a\in\mathcal A}\pi(a\mid S_{t+1,i},O_{t,i})\,\widehat Q_{t+1}^{\mathrm{naive}}(S_{t+1,i},a), & t<T.
\end{cases}\]

As discussed in \cref{sec:identification}, under reward MNAR we generally have
\[
\mathbb E[R_t \mid S_t=s, A_t=a, O_t=1]\neq \mathbb E[R_t \mid S_t=s, A_t=a]
\quad \text{for some }(s,a),
\]
so naive FQE, which regresses on observed rewards, can be biased.

\subsubsection{IPW-FQE}
We adapt the IPW-based FQE method of \citet{wang2025off} to our reward MNAR setting. Since the true reward $R_t$ is unobserved when $O_t=0$, we cannot directly condition the propensity score on $R_t$. Instead, we first fit an RKHS bridge function $\hat b_t$ on observed samples $(O_t=1)$ to obtain $\widetilde R_t$, and then estimate the extended propensity score via logistic regression on features $(S_t, A_t, \hat b_t)$:                             
\[\hat e_t(S_t, A_t, \hat b_t) = \hat P(O_t = 1 \mid S_t, A_t, \hat b_t).\]                                          At each stage $t$, IPW-FQE solves a weighted kernel ridge regression on the observed subset $\mathcal I_t^{\mathrm{obs}}$:
\begin{align*}
    \widehat Q_t^{\mathrm{ipw}}
=
\arg\min_{Q\in\mathcal Q^{(t)}}
\frac{1}{n_t}\sum_{i\in \mathcal I_t^{\mathrm{obs}}} w_{t,i}\Big(Q(S_{t,i},A_{t,i})-y_{t,i}^{\mathrm{ipw}}\Big)^2 + \lambda_{\rm rkhs}\|Q\|_{\mathcal Q^{(t)}}^2,
\end{align*}     
  where $w_{t,i}=\frac{1}{\hat e_{t,i}}$ and 
  \[y_{t,i}^{\mathrm{ipw}}           
  =
  \begin{cases}                          
  R_{t,i}, & t=T,\\
  R_{t,i}+\sum_{a\in\mathcal A}\pi(a\mid S_{t+1,i},O_{t,i})\,\widehat Q_{t+1}^{\mathrm{ipw}}(S_{t+1,i},a), & t<T.                                    
  \end{cases}\]     
  
\subsubsection{Impute-FQE}
At each step $t$, we fit a kernel ridge regression on the observed subset to learn the conditional mean of the reward function $m_t(S_t, A_t)$. The imputed rewards are then constructed as
  \[              
  \widetilde R_{t,i}^{\mathrm{imp}} = O_{t,i} \cdot R_{t,i} + (1 - O_{t,i}) \cdot \hat m_t(S_{t,i}, A_{t,i}),
  \]
and FQE proceeds on all $n_t$ samples using $\widetilde R_{t,i}^{\mathrm{imp}}$ in place of $R_{t,i}$:
  \[\widehat Q_t^{\mathrm{imp}} = \arg\min_{Q \in \mathcal{Q}^{(t)}} \frac{1}{n} \sum_{i=1}^{n} \Big(Q(S_{t,i}, A_{t,i}) - y_{t,i}^{\mathrm{imp}}\Big)^2
  + \lambda_{\mathrm{rkhs}} \|Q\|_{\mathcal{Q}^{(t)}}^2,\]
  where
  \[y_{t,i}^{\mathrm{imp}} = \begin{cases}
  \widetilde R_{t,i}^{\mathrm{imp}}, & t = T, \\                              \widetilde R_{t,i}^{\mathrm{imp}} + \sum_{a \in \mathcal{A}} \pi(a \mid S_{t+1,i}, O_{t,i})\, \widehat Q_{t+1}^{\mathrm{imp}}(S_{t+1,i}, a), & t < T.
  \end{cases}\]
  Since $\hat m_t$ is trained only on the observed subset $\{i : O_{t,i} = 1\}$, it estimates $\mathbb{E}[R_t \mid S_t, A_t, O_t = 1]$ rather than         
  $\mathbb{E}[R_t \mid S_t, A_t]$. Under MNAR, the imputed values inherit the selection bias from the observed subsample, leading to biased Q-function     
  estimates.

\subsubsection{SCOPE}
SCOPE \citep{parbhoo2020shaping} is a per-step importance sampling estimator designed for sparse reward settings, which incorporates potential-based reward shaping \cite{ng1999policy} as a control variate to reduce variance. To apply SCOPE in our MNAR setting, we replace unobserved rewards with zero. This introduces bias because under MNAR, missingness is informative and zero-imputation does not recover the true expected reward.

\subsection{Other simulation results}
\cref{fig:mse_vs_T} reports MSE as the horizon $T$ varies from $2$ to $32$. Error compounding in backward induction affects all methods, but the growth
  rate differs markedly. \texttt{prox} remains below MSE $\approx 1$ for $T \le 16$ across all missingness levels, whereas \texttt{ipw} already exceeds 
  $10^2$ at $T=16$ under ${\sim}40\%$ missingness. \texttt{impute} is the worst-performing method at longer horizons, with MSE exceeding $10^8$ at $T=32$ under ${\sim}20\%$ missingness, because imputation errors at each step feed into subsequent Q-function fits and amplify exponentially. \texttt{scope} is the most competitive baseline for short horizons ($T\le 4$) but its per-step IS weights accumulate variance as $T$ grows. These results highlight that the bridge-function approach in \texttt{prox} is particularly advantageous in long-horizon problems.

  \begin{figure}[ht]
    \centering
    \includegraphics[width=\textwidth]{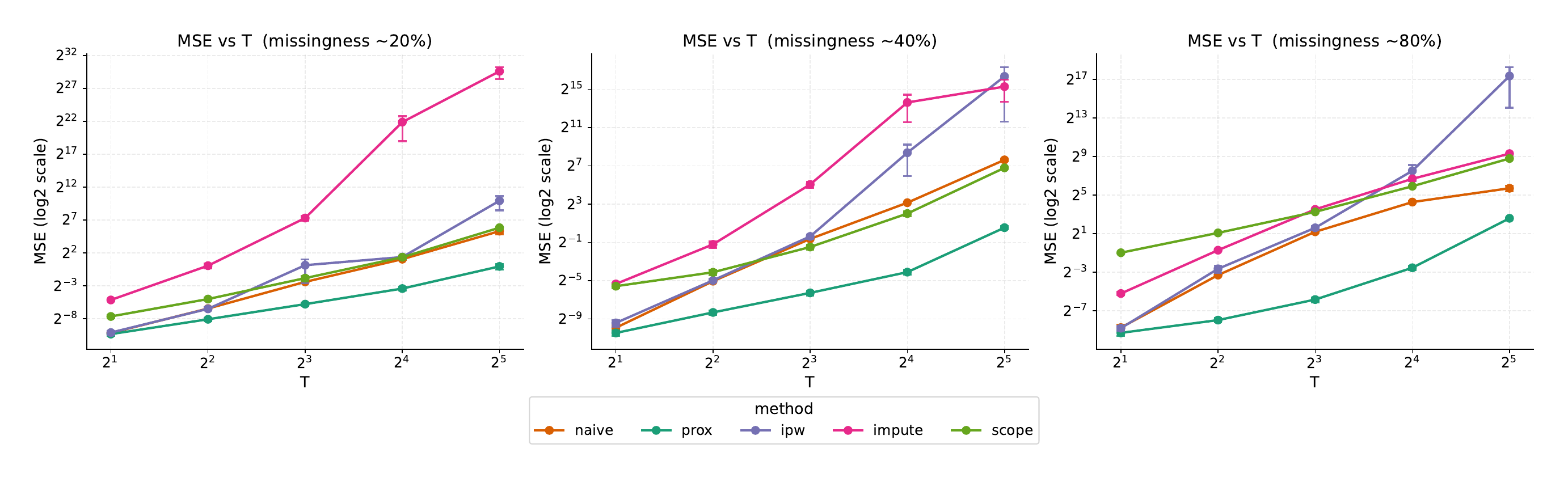}
    \caption{MSE vs.\ horizon length ($T$) under three MNAR missingness percentages (${\sim}20\%$, ${\sim}40\%$, ${\sim}80\%$). \texttt{prox} scales most gracefully as $T$ grows, maintaining orders-of-magnitude lower MSE than all baselines. \texttt{ipw} and \texttt{impute} explode at large $T$ due to variance blow-up and compounding imputation errors, respectively. \texttt{naive} grows steadily as MNAR bias compounds at each backward step.            
  \texttt{scope} remains more stable than \texttt{ipw} but degrades sharply under high missingness.} \label{fig:mse_vs_T}
  \end{figure}

For the reward type comparison, we fix $n=512$ and $T=8$ and evaluate under the two reward generation mechanisms: a bounded sigmoid reward and a clipped linear reward. \cref{fig:mse_reward} compares MSE across the two reward  generation mechanisms. Under the sigmoid reward, \texttt{prox} achieves MSE on the order of $10^{-2}$, while the best baseline (\texttt{naive}) is above $10^{-1}$ even at ${\sim}20\%$ missingness. The nonlinearity of the sigmoid function makes regression-based imputation particularly difficult, explaining the poor performance of \texttt{impute} on this reward type. Under the linear reward, all methods improve and the gap between \texttt{prox} and baselines narrows. Notably, \texttt{impute} and \texttt{scope} become competitive with \texttt{prox} at ${\sim}80\%$ missingness for the linear reward, suggesting that simpler reward structures are more amenable to naive correction strategies.

  \begin{figure}[ht]
    \centering
    \includegraphics[width=\textwidth]{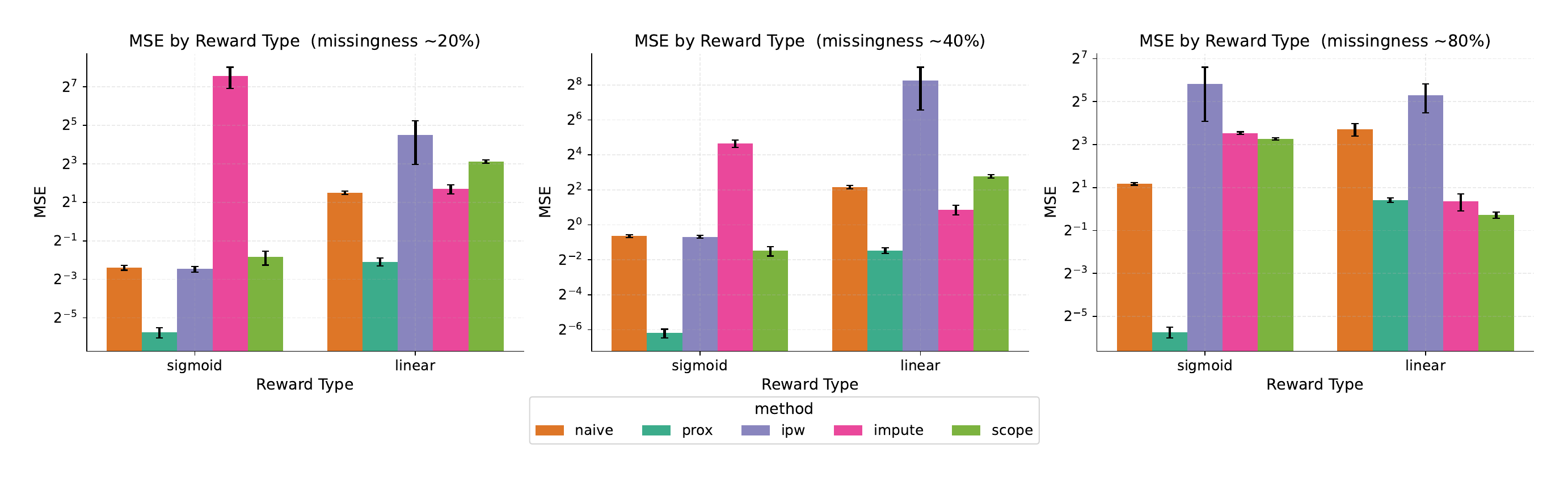}
    \caption{MSE by reward type (sigmoid vs.\ linear) under three MNAR missingness percentages (${\sim}20\%$, ${\sim}40\%$, ${\sim}80\%$). For the sigmoid 
  reward, \texttt{prox} achieves MSE orders of magnitude lower than all baselines. For the linear reward, the gap narrows but \texttt{prox} still leads.   
  \texttt{impute} performs comparably on the linear reward but poorly on sigmoid. \texttt{scope} degrades sharply under high missingness for sigmoid.}     
    \label{fig:mse_reward}                                 \end{figure}

\subsection{Additional real data experiment setups}
\label{app:addirealdatasetup}
The propensity score is set as
  \begin{align*}
      e_t(S_t,A_t,R_t) =  \operatorname{expit}(c_0 + 2.0 \cdot R_t^{\mathrm{std}} + 0.3 \cdot S_t^{\mathrm{signal}}- 0.1 \cdot A_t^{\mathrm{norm}}), 
  \end{align*}
where $R_t^{\mathrm{std}} = (R_t - \bar{R}) / \mathrm{sd}(R)$ is the standardized reward, $A_t^{\mathrm{norm}}$ is the standardized total treatment intensity (sum of vasopressor and IV fluid dose levels, then z-scored), and $S_t^{\mathrm{signal}}$ is a clinical severity signal defined as the normalized arterial lactate minus half the normalized mean blood pressure. The positive coefficient on the reward ($c_r = 2.0$) ensures that higher rewards are more likely to be observed while lower rewards tend to be missing, reflecting a clinically plausible scenario where SOFA-based outcomes depend on lab measurements that may be ordered less frequently for deteriorating patients. The intercept $c_0$ is calibrated via bisection to achieve target missing rates of 20\%, 40\%, 60\%, and 80\%. 

The target policy is constructed in two stages. First, we train a Double DQN with two hidden layers of size 128 on the original fully-observed data using the 48-dimensional state and $\gamma = 1.0$. This produces a greedy base policy $\pi_{\mathrm{DQN}}(s_t)$ over the 25 actions. Second, we construct a missingness-aware policy $\pi(a \mid s_t, o_{t-1})$ by applying a conservative dose reduction when the previous reward is unobserved: if $O_{t-1} = 0$, both the vasopressor and IV fluid dose levels recommended by the DQN are reduced by one (clipped at zero). This adjustment reflects a clinically natural response---reducing treatment intensity when the previous outcome is unknown---and produces a target policy whose action distribution depends on $O_{t-1}$, consistent with the policy class in our framework.

We split patients 60/40 by ID for model fitting and evaluation, respectively. On the fitting set, we train all OPE models including bridge functions, Q-networks, and auxiliary models for the baselines. Due to the high-dimensional state space and large action space, all methods use neural network function approximation, with architectures of $(512, 512, 256)$ for Q-networks and bridge networks and $(256, 256)$ for auxiliary networks (imputation regressor, propensity classifier). On the held-out 40\% test set, we compute the per-patient estimated value $\hat{V}_i(\pi)$ and report the mean and standard error. See detailed experiment results in \cref{tab:sepsis_ope}.

\begin{table}[h]
  \centering
  \small
  \setlength{\tabcolsep}{4pt}
  \caption{OPE results on MIMIC-III sepsis data. We report the estimated policy value $\hat{V}(\pi)$ with standard errors and absolute bias relative to oracle FQE.}
  \label{tab:sepsis_ope}
  \begin{tabular}{lcccccccc}
  \toprule
  & \multicolumn{2}{c}{20\% MNAR} & \multicolumn{2}{c}{40\% MNAR} & \multicolumn{2}{c}{60\% MNAR} & \multicolumn{2}{c}{80\% MNAR} \\
  \cmidrule(lr){2-3} \cmidrule(lr){4-5} \cmidrule(lr){6-7} \cmidrule(lr){8-9}
  Method & $\hat{V}(\pi)$ & Bias & $\hat{V}(\pi)$ & Bias & $\hat{V}(\pi)$ & Bias & $\hat{V}(\pi)$ & Bias \\
  \midrule
  oracle  & 2.94 $\pm$ 0.03 & --- & 2.81 $\pm$ 0.03 & --- & 2.46 $\pm$ 0.03 & --- & 2.32 $\pm$ 0.03 & --- \\
  Prox    & 2.99 $\pm$ 0.03 & 0.05 & 3.39 $\pm$ 0.03 & 0.58 & 3.81 $\pm$ 0.03 & 1.35 & 4.98 $\pm$ 0.03 & 2.66 \\
  IPW     & 4.15 $\pm$ 0.03 & 1.20 & 5.58 $\pm$ 0.03 & 2.77 & 7.10 $\pm$ 0.03 & 4.64 & 11.91 $\pm$ 0.04 & 9.59 \\
  naive   & 4.54 $\pm$ 0.03 & 1.59 & 5.81 $\pm$ 0.03 & 3.00 & 7.33 $\pm$ 0.03 & 4.87 & 11.76 $\pm$ 0.03 & 9.44 \\
  impute  & 6.22 $\pm$ 0.03 & 3.27 & 9.55 $\pm$ 0.04 & 6.74 & 13.17 $\pm$ 0.04 & 10.71 & 20.02 $\pm$ 0.06 & 17.70 \\
  scope   & 10408.2 $\pm$ 1991.0 & 10405.3 & $-$4089.2 $\pm$ 786.8 & 4092.0 & 10667.9 $\pm$ 2037.3 & 10665.4 & $-$7096.6 $\pm$ 1373.6 & 7099.0 \\
  \bottomrule
  \end{tabular}
  \end{table}

\section{Proof of \cref{thm:identification}}\label{app:proof-ident}
In this section we provide the proof of identification result in \cref{thm:identification}. 

\subsection*{Part A: Identification by bridge functions}
We first show that the policy value can be identified by the bridge functions $\{b_t\}_{t=1}^T$ if the bridges exist. 

Fix $t\in\{1,\dots,T\}$ and suppose there exists a measurable function
$b_t:\mathcal S\times\mathcal A\times\mathcal S\to\mathbb R$ satisfying the \cref{eq:bridge}. By \cref{ass:nofuture,ass:positivity}, conditioning on the event $O_t=1$ is well-defined and 
\[
\mathbb E[b_t(S_t,A_t,S_{t+1})\mid R_t,S_t,A_t,O_t=1]
=\mathbb E[b_t(S_t,A_t,S_{t+1})\mid R_t,S_t,A_t].
\]
So \cref{eq:bridge} holds if and only if \cref{eq:popbridge} holds. 

For the imputed reward $\widetilde R_t$ defined by \cref{eq:impreward}, \cref{eq:unbiased} implies that it has the same conditional mean reward as $R_t$ given $(S_t,A_t)$:
\[\mathbb E[\widetilde R_t\mid S_t = s, A_t = a] = \mathbb E[b_t(s,a,S_{t+1})\mid S_t=s,A_t=a] = \mathbb E[R_t\mid S_t=s,A_t=a]:= \bar r_t(s,a).\]

Now consider the augmented process. By \cref{ass:markov}, the augmented process is Markov, and the Bellman recursion holds with one-step reward $\bar r_t(S_t,A_t)$:
\[
\begin{aligned}
Q_t^\pi(s,a)
&=\mathbb E\!\big[\bar r_t(s,a)+V_{t+1}^\pi(S_{t+1},O_t)\mid S_t=s,A_t=a\big],\\
V_t^\pi(s,o_-)
&=\sum_a \pi_t(a\mid s,o_-)Q_t^\pi(s,a),\qquad V_{T+1}^\pi\equiv 0,
\end{aligned}
\]
which is equivalent to \cref{eq:bellman}. Therefore, the policy value $V(\pi)=\mathbb E[V_1^\pi(\widetilde S_1)]$ is identified.

\subsection*{Part B: Existence of bridge functions}
We now establish existence of the bridges. For a probability measure $\mu$, let $\mathcal L^2(\mu)$ denote the space of all squared integrable functions of $x$ with respect to measure $\mu(x)$, which is a Hilbert space endowed with the inner product $\langle g_1, g_2\rangle = \int g_1(x)g_2(x)d\mu(x)$. For any $s, a, t$, we define operator 
\[\mathcal T_{t\mid(s,a)}:\mathcal L^2\big(P_{S_{t+1}\mid s,a}\big)\to \mathcal L^2\!\big(P_{R_t\mid s,a}\big),\]
where $(\mathcal T_{t\mid(s,a)}h)(r):=\mathbb E\!\big[h(S_{t+1})\mid R_t=r,S_t=s,A_t=a\big]$. Its adjoint operator is defined by
\[\mathcal T_{t\mid(s,a)}^*:\mathcal L^2\big(P_{R_t\mid s,a}\big)\to \mathcal L^2\!\big(P_{S_{t+1}\mid s,a}\big),\]
where
$(\mathcal T_{t\mid(s,a)}^*g)(s')=\mathbb E\big[g(R_t)\mid S_{t+1}=s',S_t=s,A_t=a\big]$. Then, the bridge equation \eqref{eq:bridge} can be written as a first-kind Fredholm integral equation
\[
(\mathcal T_{t\mid(s,a)}h)(r)=g(r),
\]
where the unknown functions are $h(\cdot)=b_t(s,a,\cdot)\in \mathcal L^2\big(P_{S_{t+1}\mid s,a}\big)$ and the right hand side is $g(r)=r\in\mathcal L^2(P_{R_t\mid s,a})$.

\begin{assumption}[Hilbert-Schmidt property]\label{ass:integral}
    For any $(s,a)\in \mathcal {S\times A}$, for all $t = 1,\dots, T$, denote conditional densities $p_{S_{t+1}\mid R_t}(s'\mid r,s,a),\;\;  p_{R_t\mid S_{t+1}}(r\mid s',s,a)$. We have
\[\int_{\mathcal R}\int_{\mathcal S} p_{S_{t+1}\mid R_t}(s'\mid r,s,a) p_{R_t\mid S_{t+1}}(r\mid s',s,a) ds'dr<\infty.\]
\end{assumption}
This ensures that the operator $\mathcal T_{t\mid(s,a)}$ is Hilbert–Schmidt and thus admits a singular system
$\{(\sigma_{s,a,t,\nu},\varphi_{s,a,t,\nu},\psi_{s,a,t,\nu})\}_{\nu\ge1}$ satisfying
$\mathcal T_{t\mid(s,a)}\varphi_{s,a,t,\nu}=\sigma_{s,a,t,\nu}\psi_{s,a,t,\nu}$ and
$\mathcal T_{t\mid(s,a)}^\ast\psi_{s,a,t,\nu}=\sigma_{s,a,t,\nu}\varphi_{s,a,t,\nu}$.

\begin{assumption}\label{ass:sumfinite}
    Suppose $\{(\sigma_{s,a,t,\nu},\varphi_{s,a,t,\nu},\psi_{s,a,t,\nu})\}_{\nu\ge1}$ is a singular system of $\mathcal T_{t\mid(s,a)}$. Then for all $(s,a)\in \mathcal {S\times A}$ and $t = 1,\dots, T$, \[\sum_{\nu\ge1}\frac{\big\langle g,\psi_{s,a,t,\nu}\big\rangle_{\mathcal L^2(P_{R_t\mid s,a})}^2}{\sigma_{s,a,t,\nu}^2}<\infty,\] 
    where $\big\langle g,\psi_{s,a,t,\nu}\big\rangle_{\mathcal L^2(P_{R_t\mid s,a})} = \int_{\mathcal R} g(r) \cdot \psi_{s,a,t,\nu}(r)p_{R_t\mid s,a}(r) dr$.
\end{assumption}

\begin{lemma}[Picard's Theorem \cite{kress1989linear}]\label{lem:picard}
    Let $\mathcal H_1,\mathcal H_2$ be real Hilbert spaces and 
$K:\mathcal H_1\to\mathcal H_2$ a compact linear operator with adjoint $K^*:\mathcal H_2\to\mathcal H_1$.
Then, there exists a singular system $\{(\lambda_\nu,\phi_\nu,\psi_\nu)\}_{\nu=1}^\infty$ of $K$, 
with singular values $\lambda_\nu>0$ and orthonormal sequences $\{\phi_\nu\}\subset\mathcal H_1$,
$\{\psi_\nu\}\subset\mathcal H_2$ satisfying
\[
K\phi_\nu=\lambda_\nu\psi_\nu,\quad K^*\psi_\nu=\lambda_\nu\phi_\nu.
\]

Given $g\in\mathcal H_2$, the first–kind Fredholm equation $Kh=g$ has a solution 
$h\in\mathcal H_1$ if and only if
\begin{enumerate}
    \item[(a)] $g\in \ker(K^*)^\perp$;
    \item[(b)] $\sum_{\nu=1}^\infty \lambda_\nu^{-2}|\langle g,\psi_\nu\rangle_{\mathcal H_2}|^2<\infty$,
\end{enumerate}
where $\ker(K^*) = \{h: K^*h=0\}$ is the null space of $K^*$, and $\perp$ denotes the orthogonal complement to a set.
\end{lemma}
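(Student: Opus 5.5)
The plan is the classical argument via the spectral theorem for compact self-adjoint operators, in three steps.

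\textbf{Step 1: the singular system.} Since $K$ is compact, $K^*K:\mathcal H_1\to\mathcal H_1$ is compact, self-adjoint and nonnegative. By the spectral theorem it has nonzero eigenvalues $\lambda_\nu^2>0$, where $\lambda_\nu>0$, with orthonormal eigenvectors $\phi_\nu$. These eigenvalues are countable and accumulate only at $0$. Define $\psi_\nu:=\lambda_\nu^{-1}K\phi_\nu$. Then
\[
\langle\psi_\nu,\psi_\mu\rangle=\lambda_\nu^{-1}\lambda_\mu^{-1}\langle K^*K\phi_\nu,\phi_\mu\rangle=\delta_{\nu\mu},
\]
so the $\psi_\nu$ are orthonormal. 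Moreover $K^*\psi_\nu=\lambda_\nu^{-1}K^*K\phi_\nu=\lambda_\nu\phi_\nu$, which gives the singular system.

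The key structural fact is that $\{\psi_\nu\}$ is an orthonormal basis of $\ker(K^*)^\perp$. Indeed, $KK^*\psi_\nu=\lambda_\nu^2\psi_\nu$, so the $\psi_\nu$ are exactly the normalized eigenvectors of the compact self-adjoint operator $KK^*$ for its nonzero eigenvalues. Each nonzero eigenspace of $KK^*$ is the image under $K$ of the corresponding eigenspace of $K^*K$, so nothing is missed. The spectral theorem then gives
\[
\mathcal H_2=\ker(KK^*)\oplus\overline{\operatorname{span}}\{\psi_\nu\}.
\]
Finally, $\ker(KK^*)=\ker(K^*)$, because $KK^*y=0$ implies $\|K^*y\|^2=\langle KK^*y,y\rangle=0$. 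I expect this completeness step to be the main obstacle; everything else is bookkeeping.

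\textbf{Step 2: necessity.} Suppose $Kh=g$. For (a), take any $y\in\ker(K^*)$; then $\langle g,y\rangle=\langle h,K^*y\rangle=0$, so $g\in\ker(K^*)^\perp$. For (b), compute
\[
\langle g,\psi_\nu\rangle=\langle h,K^*\psi_\nu\rangle=\lambda_\nu\langle h,\phi_\nu\rangle.
\]
Hence $\sum_\nu\lambda_\nu^{-2}|\langle g,\psi_\nu\rangle|^2=\sum_\nu|\langle h,\phi_\nu\rangle|^2\le\|h\|^2<\infty$ by Bessel's inequality.

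\textbf{Step 3: sufficiency.} Assume (a) and (b), and set
\[
h:=\sum_\nu\lambda_\nu^{-1}\langle g,\psi_\nu\rangle\phi_\nu .
\]
By (b) and the orthonormality of $\{\phi_\nu\}$, the partial sums are Cauchy, so $h\in\mathcal H_1$. Since $K$ is bounded, it commutes with the limit of partial sums, and
\[
Kh=\sum_\nu\langle g,\psi_\nu\rangle\psi_\nu .
\]
This is the orthogonal projection of $g$ onto $\overline{\operatorname{span}}\{\psi_\nu\}$, which equals $\ker(K^*)^\perp$ by Step 1. By (a) this projection is $g$ itself, so $Kh=g$.

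If there are only finitely many $\nu$, all sums are finite and the same argument applies verbatim.
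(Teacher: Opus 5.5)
Your proof is correct, and it is the standard argument behind the result the paper cites from \citet{kress1989linear}. You build the singular system from the spectral theorem for $K^*K$, show that $\{\psi_\nu\}$ is an orthonormal basis of $\ker(K^*)^\perp$, get necessity from $\langle g,\psi_\nu\rangle=\lambda_\nu\langle h,\phi_\nu\rangle$ with Bessel's inequality, and get sufficiency from the explicit series $h=\sum_\nu\lambda_\nu^{-1}\langle g,\psi_\nu\rangle\phi_\nu$; the paper gives no proof of its own beyond this citation, so there is nothing further to compare.
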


\begin{proposition}[Existence of bridges]
    Under \cref{ass:complete} (1), \cref{ass:integral,ass:sumfinite}, for all $(s,a)\in \mathcal {S\times A}$ and $t = 1,\dots, T$, there exists a solution $h$ to equation \[\mathcal T_{t\mid (s,a)}h = g,\] where $h := b_t(s,a;\cdot)\in \mathcal L^2\big(P_{S_{t+1}\mid s,a}\big)$ and $g(r)=r\in\mathcal L^2(P_{R_t\mid s,a})$. Equivalently, for any fixed $(s,a)$ and $t$, there exists a function $b_t(s,a,\cdot)$ satisfying \cref{eq:bridge}.
\end{proposition}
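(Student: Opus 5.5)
We will prove the proposition by checking, for each fixed $(s,a)$ and $t$, the two conditions of Picard's theorem (\cref{lem:picard}) for $K=\mathcal T_{t\mid(s,a)}$, $\mathcal H_1=\mathcal L^2(P_{S_{t+1}\mid s,a})$, $\mathcal H_2=\mathcal L^2(P_{R_t\mid s,a})$ and $g(r)=r$.

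\textbf{Preliminaries.} First, $g\in\mathcal H_2$, since rewards are bounded in $[-1,1]$, so $\int r^2\,dP_{R_t\mid s,a}\le 1$.

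Second, $K$ is compact. Writing
\[(\mathcal T_{t\mid(s,a)}h)(r)=\int h(s')\,p_{S_{t+1}\mid R_t}(s'\mid r,s,a)\,ds',\]
this is an integral operator between the two weighted $\mathcal L^2$ spaces. Rewrite it with respect to the base measures $P_{S_{t+1}\mid s,a}$ and $P_{R_t\mid s,a}$. Its kernel is then the density ratio $k(r,s')=p(s',r\mid s,a)/\{p(s'\mid s,a)p(r\mid s,a)\}$. By Bayes' rule,
\[\|k\|^2_{\mathcal L^2(P_{R}\otimes P_{S'})}=\int\!\!\int p_{S_{t+1}\mid R_t}(s'\mid r,s,a)\,p_{R_t\mid S_{t+1}}(r\mid s',s,a)\,ds'\,dr.\]
This is exactly the quantity assumed finite in \cref{ass:integral}. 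Hence $K$ is Hilbert--Schmidt, therefore compact, and it admits the singular system $\{(\sigma_{s,a,t,\nu},\varphi_{s,a,t,\nu},\psi_{s,a,t,\nu})\}$. The adjoint is the stated conditional expectation $(\mathcal T^*_{t\mid(s,a)}g)(s')=\mathbb E[g(R_t)\mid S_{t+1}=s',S_t=s,A_t=a]$, which follows by Fubini.

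\textbf{Condition (a).} By \cref{ass:complete}~(1), if $K^*h=\mathbb E[h(R_t)\mid s,a,S_{t+1}]=0$ almost surely, then $h=0$ almost surely. So $\ker(K^*)=\{0\}$, its orthogonal complement is all of $\mathcal H_2$, and (a) holds trivially for $g$.

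\textbf{Condition (b).} This is precisely \cref{ass:sumfinite} applied to $g(r)=r$.

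\textbf{Conclusion.} Picard's theorem yields $h\in\mathcal H_1$ with $Kh=g$. Explicitly,
\[h=\sum_\nu\sigma_{s,a,t,\nu}^{-1}\langle g,\psi_{s,a,t,\nu}\rangle\,\varphi_{s,a,t,\nu},\]
which converges in $\mathcal H_1$ by (b) and satisfies $Kh=g$ because $g$ lies in the closure of the range of $K$ by (a). Setting $b_t(s,a,\cdot):=h$ gives $\mathbb E[b_t(s,a,S_{t+1})\mid R_t=r,S_t=s,A_t=a]=r$ for $P_{R_t\mid s,a}$-almost every $r$, which is \cref{eq:bridge}.

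\textbf{Main obstacle.} The delicate step is the compactness argument: identifying the correct kernel relative to the weighted measures and matching its squared $\mathcal L^2$ norm to the integral in \cref{ass:integral}. I would do this by the density-ratio computation above. A minor measurability point, namely that $(s,a,s')\mapsto b_t(s,a,s')$ can be chosen jointly measurable, I would note but not pursue, since the statement only asks for existence for each fixed $(s,a)$.
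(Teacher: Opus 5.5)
Your proof is correct and matches the paper's argument step for step. The chain is the same: \cref{ass:integral} gives a Hilbert--Schmidt (hence compact) operator; \cref{ass:complete}~(1) gives $\ker(\mathcal T^*_{t\mid(s,a)})=\{0\}$ and hence condition (a); \cref{ass:sumfinite} gives condition (b); bounded rewards give $g\in\mathcal L^2(P_{R_t\mid s,a})$; and Picard's theorem finishes. Beyond the paper, you verify that the integral in \cref{ass:integral} equals the squared Hilbert--Schmidt norm of the density-ratio kernel under the weighted measures, and you write out the explicit series solution; the paper only asserts the Hilbert--Schmidt property.
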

\begin{proof}
    By \cref{ass:integral}, for all $(s,a)\in \mathcal {S\times A}$ and $t = 1,\dots, T$, the operator $\mathcal T_{t\mid(s,a)}$ is Hilbert–Schmidt and thus compact. Suppose there exists function $f\in \ker(\mathcal{T}_{t\mid(s,a)}^*)$, then by definition, $\mathcal{T}_{t\mid(s,a)}^*f = 0$. By \cref{ass:complete} (1), we have $f(R_t)=0,\ a.s.$ Therefore, $\ker(\mathcal{T}_{t\mid(s,a)}^*) = \{0\}$, and hence $\ker(\mathcal{T}_{t\mid(s,a)}^*)^\perp = \mathcal L^2\big(P_{R_t\mid s,a}\big)$. Because reward $R_t$ is bounded, then $g \in \mathcal L^2\big(P_{R_t\mid s,a}\big)$. So the condition (a) in \cref{lem:picard} is satisfied. 
    
    Additionally, condition (b) is also satisfied by \cref{ass:sumfinite}. By \cref{lem:picard}, there exists a solution $h \in \mathcal{L}^2 (P_{S_{t+1} \mid s,a})$ to $\mathcal T_{t\mid (s,a)}h = g$ where $g(r) = r$, i.e., there exists function $b_t(s,a,\cdot)$ such that:
\[
\mathbb E\big[b_t(s,a,S_{t+1})\mid R_t=r,S_t=s,A_t=a\big]=r.
\]
\end{proof}

\subsection*{Part C: Uniqueness of bridge functions}
Next we study the uniqueness of the bridge functions.

\begin{proposition}[Uniqueness of bridges]
Under \cref{ass:complete} (2), any bridge function $b_t$  satisfying 
\[\mathbb E[b_t(S_t, A_t,S_{t+1})\mid R_t,S_t,A_t]= R_t,\quad a.s.\]
is unique. 
\end{proposition}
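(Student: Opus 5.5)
The argument is the usual linearity-plus-completeness one. Suppose $b_t$ and $b_t'$ both satisfy the bridge equation \eqref{eq:bridge}, and set $\Delta_t := b_t - b_t'$. Linearity of conditional expectation gives
\[
\mathbb E[\Delta_t(S_t,A_t,S_{t+1})\mid R_t,S_t,A_t] = R_t - R_t = 0,\quad a.s.
\]
We then localize at a fixed $(s,a)$. For $P_{(S_t,A_t)}$-almost every $(s,a)$, define $g_{s,a}(\cdot) := \Delta_t(s,a,\cdot)$. The display becomes
\[
\mathbb E[g_{s,a}(S_{t+1})\mid R_t, S_t=s, A_t=a]=0,\quad a.s.\ \text{in } R_t,
\]
which is exactly $\mathcal T_{t\mid(s,a)} g_{s,a} = 0$ in the notation of Part B.

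Next we invoke \cref{ass:complete}~(2). This requires $g_{s,a}$ to be square integrable with respect to $P_{S_{t+1}\mid s,a}$. That holds whenever both bridges are taken in $\mathcal L^2\big(P_{S_{t+1}\mid s,a}\big)$, which is the class in which the existence result of Part B produces them. We state this as the standing class in which uniqueness is claimed. Completeness then forces $g_{s,a}(S_{t+1})=0$ almost surely under $P_{S_{t+1}\mid s,a}$. Hence $b_t(s,a,\cdot)=b_t'(s,a,\cdot)$ a.s. for a.e.\ $(s,a)$, and integrating over $(s,a)$ gives $b_t = b_t'$, $P_{(S_t,A_t,S_{t+1})}$-a.s.

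The main delicate point is passing from the joint a.s.\ statement to the statement for each fixed $(s,a)$. The conditional-moment identity holds almost surely with respect to the joint law of $(R_t,S_t,A_t)$, so we must disintegrate. By Fubini, the set of $(s,a)$ for which the inner identity fails in $r$ on a set of positive $P_{R_t\mid s,a}$-measure is $P_{(S_t,A_t)}$-null. Assumption~\ref{ass:complete} is stated for all $(s,a)$, so it applies on the complement of this null set, and this null set does not affect the a.s.\ conclusion. The remaining care point is measurability of $(s,a,s')\mapsto \Delta_t(s,a,s')$, which is inherited from $b_t$ and $b_t'$.

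Finally, uniqueness of $b_t$ means the imputed reward $\widetilde R_t$ in \eqref{eq:impreward} is well defined almost surely. Combined with Parts A and B, this completes \cref{thm:identification}.
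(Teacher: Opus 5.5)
Your proof is correct and follows the paper's argument: take the difference of two bridges, use linearity of conditional expectation to obtain a zero conditional moment given $(R_t,S_t,A_t)$, and apply \cref{ass:complete}~(2). The paper argues by contradiction and leaves implicit the localization at fixed $(s,a)$ and the restriction to square-integrable bridges, both of which you make explicit.
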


\begin{proof}
    Suppose there exist different bridge functions $b_{1,t}$ and $b_{2,t}$ satisfying the equation above for any $t$. Then, 
    \[\mathbb E[b_{1,t}(S_t, A_t,S_{t+1}) - b_{2,t}(S_t, A_t,S_{t+1})\mid R_t,S_t,A_t]= 0,\quad a.s.\]
    By \cref{ass:complete} (2), we have
    \[b_{1,t} - b_{2,t}=0,\quad a.s.,\;\; \forall t = 1,\dots, T, \]
    which contradicts $b_{1,t}\neq b_{2,t}$. Thus uniqueness holds.
\end{proof}

\section{Proof of \cref{thm:projbridgeerr}}
For a function class $\mathcal G$ and radius $\delta>0$, given sample $\{X_i\}$, the \textit{local empirical Rademacher complexity} is defined by
\[\widehat {\mathcal R}_{n}(\mathcal G,\delta)
=\mathbb E_{\varepsilon}\Big[\sup_{g\in\mathcal G:\|g\|_{2,n}\le \delta}\big|\frac1{n}\sum_{i=1}^{n}\varepsilon_i g(X_i)\big|\,\Big|\, \{X_i\}\Big]\] where $\|g\|_{2,n}^2 = \frac1n\sum_{i=1}^n g(X_i)^2$. The empirical critical radius $\hat \delta_n$ is the smallest solution to the inequality $\widehat {\mathcal R}_{n}(\mathcal G,\delta)\le \frac{\delta^2}{b}$. \citet{wainwright2019high} gives the relationship of critical radius and empirical critical radius: with probability at least $1-\zeta$, 
\[\delta_n\le \mathcal O(\hat\delta_n+\sqrt{\frac{\log(1/\zeta)}{n}}),\]
which enables us to study on empirical critical radius $\hat \delta_n$.

Define \[\Psi_{n}^t(b,g) = \frac1{n_t}\sum_{i\in \mathcal I_t^{\mathrm{obs}}}(b_t(S_{t,i}, A_{t,i}, S_{t+1,i}) - R^{\text{obs}}_{t,i}) g_t(R^{\text{obs}}_{t,i}, S_{t,i}, A_{t,i}),\] and population level version \[\Psi^{t}(b,g) = \mathbb E\Big[(b_t(S_t, A_t, S_{t+1}) - R_t) g_t(R_{t}, S_{t}, A_{t})\,\big|\, O_t=1\Big].\] Moreover, Let \[\Psi_{n}^{t,\lambda} (b,g) = \Psi_{n}^{t}(b,g) - \lambda \left( \|g_t\|_{\mathcal G^{(t)}}^2 + \frac{U}{\delta^2} \|g_t\|_{2,n_t}^2\right),\] \[\Psi^{t,\lambda} (b,g) = \Psi^{t}(b,g) - \lambda \left( \frac{2}{3} \|g_t\|_{\mathcal G^{(t)}}^2 + \frac{U}{2\delta^2} \|g_t\|_2^2\right).\] So the minimizer $\hat b_t$ can be written as
\[\hat b_t = \arg\min_{b_t\in \mathcal B^{(t)} }\sup_{g_t\in\mathcal G^{(t)}}\Psi_{n}^{t,\lambda}(b,g)+\lambda\mu\|b_t\|_{\mathcal B^{(t)}}^2.\]

By \cref{lem:141}, with probability at least $1-\zeta$, for any function $g_t\in \mathcal G^{(t)}_{3U}$, 
\[\bigl|\|g_t\|_{2,n_t}^2-\|g_t\|_{2}^2\bigr|\le \frac12\|g_t\|_{2}^2+(\delta_t^{\mathcal G})^2,\]
where $\delta_t^{\mathcal G} = \delta_{n_t}^{\mathcal G} + c_0\sqrt{\frac{\log(c_1/\zeta)}{n_t}}$, and $\delta_{n_t}^{\mathcal G}$ is the upper bound of the empirical critical radii of function class $\mathcal G^{(t)}_{3U}$. For any $\|g_t\|_{\mathcal G^{(t)}}^2\ge 3U$, consider rescaling $g_t$ by $\frac{\sqrt{3U}}{\|g_t\|_{\mathcal G^{(t)}}}g_t \in \mathcal G^{(t)}_{3U}$, and we have
\[\bigl|\|g_t\|_{2,n_t}^2-\|g_t\|_{2}^2\bigr|\le \frac12\|g_t\|_{2}^2+(\delta_t^{\mathcal G})^2\frac{\|g_t\|_{\mathcal G^{(t)}}^2}{3U}.\]
Combine the above inequalities, 
\begin{equation}\label{eq:gtbounded}
    \left| \|g_t\|_{2,n_t}^2 - \|g_t\|_2^2 \right|
\leq \frac{1}{2} \|g_t\|_2^2 + (\delta_t^{\mathcal G})^2\max\left\{1,\frac{\|g_t\|_{\mathcal G^{(t)}}^2}{3U}\right\}.
\end{equation}
Thus,
\begin{equation}\label{eq:ineqbridgeerr}
    \begin{aligned}
\|g_t\|_{\mathcal G^{(t)}}^2+\frac{U}{\delta^2}\|g_t\|_{2,n_t}^2
&\ge \|g_t\|_{\mathcal G^{(t)}}^2+\frac{U}{(\delta_t^{\mathcal G})^2}
\left[\frac{1}{2}\|g_t\|_2^2-(\delta_t^{\mathcal G})^2\max\left\{1,\frac{\|g_t\|_{\mathcal G^{(t)}}^2}{3U}\right\}\right]\\
&\ge \frac{2}{3}\|g_t\|_{\mathcal G^{(t)}}^2+\frac{U}{2(\delta_t^{\mathcal G})^2}\|g_t\|_2^2-U.
    \end{aligned}
\end{equation}

Next, we study the upper and lower bounds of the centered empirical sup-loss
\[\sup_{g_t\in\mathcal G^{(t)}} \Psi_{n}^{t}(\hat b_t,g)-\Psi_{n}^{t}(b_t^*,g)
-2\lambda\left(\|g_t\|_{\mathcal G^{(t)}}^2+\frac{U}{(\delta_t^{\mathcal G})^2}\|g_t\|_{2,n_t}^2\right).\]
For simplicity we omit $t$ and write it as 
\begin{equation}
\label{eq:BoundObj-b}
\sup_{g\in\mathcal G} \Psi_{n}(\hat b,g)-\Psi_{n}(b^*,g)
-2\lambda\left(\|g\|_{\mathcal G}^2+\frac{U}{\delta^2}\|g\|_{2,n}^2\right).
\end{equation}

\subsection{Upper bounding the centered empirical sup-loss}
We first decompose $\Psi_{n}^{\lambda}(b,g)$ by 
\begin{align*}
\Psi_{n}^{\lambda}(b,g)
&=\Psi_{n}(b,g)-\Psi_{n}(b^*,g)
+\Psi_{n}(b^*,g)-\lambda\left(\|g\|_{\mathcal G}^2+\frac{U}{\delta^2}\|g\|_{2,n}^2\right)\\
&\ge \Psi_{n}(b,g)-\Psi_{n}(b^*,g)
-2\lambda\left(\|g\|_{\mathcal G}^2+\frac{U}{\delta^2}\|g\|_{2,n}^2\right)
-\sup_{g\in\mathcal G}\Psi_{n}^{\lambda}(b^*,g),
\end{align*}
where the last inequality holds by symmetry of $\mathcal G$. Then we have 
\begin{equation}\label{eq:upperbridgeerr2}
    \begin{aligned}
        \sup_{g\in\mathcal G} \Psi_{n}(\hat b,g)-\Psi_{n}(b^*,g)
-2\lambda\left(\|g\|_{\mathcal G}^2+\frac{U}{\delta^2}\|g\|_{2,n}^2\right)
&\le \sup_{g\in\mathcal G}\Psi_{n}^{\lambda}(b^*,g)+\Psi_{n}^{\lambda}(\hat b,g)\\
&\le \sup_{g\in\mathcal G}\Psi_{n}^{\lambda}(b^*,g)
+\bigl[\sup_{g\in\mathcal G}\Psi_{n}^{\lambda}(b^*,g)+\lambda\mu\big(\|b^*\|_{\mathcal B}^2-\|\hat b\|_{\mathcal B}^2\big)\bigr]\\
&\le 2\sup_{g\in\mathcal G}\Psi_{n}^{\lambda}(b^*,g)
+\lambda\mu\big(\|b^*\|_{\mathcal B}^2-\|\hat b\|_{\mathcal B}^2\big).
    \end{aligned}
\end{equation}
By \cref{lem:foster14}, for all $g\in \mathcal G$, similarly, with probability at least $1-\zeta$ we have
\begin{equation}\label{eq:upperbridgeerr}
    \begin{aligned}
        \big|\Psi_n(b^*, g) - \Psi(b^*,g)\big|&\le 36 \delta\big[\|g\|_2+\delta\max\left\{1,\frac{\|g\|_{\mathcal G}}{\sqrt{3U}}\right\}\big]\\
        &\le 36 \delta \big[\|g\|_2 + \delta(1 + \frac{\|g\|_{\mathcal G}}{\sqrt{3U}})\big].
    \end{aligned}
\end{equation}
Combine \cref{eq:ineqbridgeerr} and \cref{eq:upperbridgeerr}, we have with probability at least $1-2\zeta$, for all $b\in \mathcal B$ and $g\in \mathcal G$, 
\begin{align*}
    \Psi_n^\lambda(b^*,g) &= \Psi_n(b^*,g) - \lambda \left( \|g\|_{\mathcal G}^2 + \frac{U}{\delta^2} \|g\|_{2,n}^2\right)\\
    &\le \Psi(b^*,g) +|\Psi_n(b^*,g)-\Psi(b^*,g)| - \lambda \left( \|g\|_{\mathcal G}^2 + \frac{U}{\delta^2} \|g\|_{2,n}^2\right)\\
    &\le \Psi(b^*,g) + 36 \delta\big[\|g\|_2 + \delta(1 + \frac{\|g\|_{\mathcal G}}{\sqrt{3U}})\big] - \lambda \left( \|g\|_{\mathcal G}^2 + \frac{U}{\delta^2} \|g\|_{2,n}^2\right)\\
    &\le \Psi(b^*,g) + 36 \delta\big[\|g\|_2 + \delta(1 + \frac{\|g\|_{\mathcal G}}{\sqrt{3U}})\big] - \lambda \left( \frac23\|g\|_{\mathcal G}^2 + \frac{U}{2\delta^2} \|g\|_{2,n}^2\right)+\lambda U\\
    &\le \Psi(b^*,g)-\lambda \left( \frac13\|g\|_{\mathcal G}^2+ \frac{U}{4\delta^2} \|g\|_{2,n}^2\right) + 36\delta^2 + \lambda U + 36 \delta\|g\|_2 + 36\delta^2 \frac{\|g\|_{\mathcal G}}{\sqrt{3U}} - \lambda \frac{U}{4\delta^2} \|g\|_{2,n}^2 - \frac\lambda3\|g\|_{\mathcal G}^2\\
    &= \Psi^{\lambda/2}(b^*,g) + 36\delta^2 + \lambda U + \bigl(36 \delta\|g\|_2- \lambda \frac{U}{4\delta^2} \|g\|_{2,n}^2\bigr) + \bigl(36\delta^2 \frac{\|g\|_{\mathcal G}}{\sqrt{3U}}  - \frac\lambda3\|g\|_{\mathcal G}^2\bigr).\\
\end{align*}

Using the fact that for any $a,b>0$ and any norm, $\sup_{g\in\mathcal G}(a\|g\| - b\|g\|^2)\le \frac{a^2}{4b}$, suppose $\lambda\ge \frac{C_1\delta^2}{U}$, and then
\begin{align*}
    \sup_{g\in\mathcal G}\bigl(36 \delta\|g\|_2- \lambda \frac{U}{4\delta^2} \|g\|_{2,n}^2\bigr)&\le \frac{36^2\delta^4}{\lambda U}\le \frac{36^2\delta^2}{C_1};\\
    \sup_{g\in\mathcal G}\bigl(36\delta^2 \frac{\|g\|_{\mathcal G}}{\sqrt{3U}}  - \frac\lambda3\|g\|_{\mathcal G}^2\bigr)&\le \frac{18^2\delta^4}{\lambda U}\le \frac{18^2\delta^2}{C_1}.
\end{align*}
Therefore, 
\[\Psi_n^\lambda(b^*,g)\le \Psi^{\lambda/2}(b^*,g) + 36\delta^2 + \lambda U + \frac{5\times 18^2\delta^2}{C_1}\]

Combine this with \cref{eq:upperbridgeerr2} we get
\begin{equation}\label{eq:upperbridgefinal}
    \begin{aligned}
\sup_{g\in\mathcal G} \Psi_{n}(\hat b,g)-\Psi_{n}(b^*,g)
-2\lambda\left(\|g\|_{\mathcal G}^2+\frac{U}{\delta^2}\|g\|_{2,n}^2\right)
&\le 2\sup_{g\in\mathcal G}\Psi_{n}^{\lambda}(b^*,g)
+\lambda\mu\big(\|b^*\|_{\mathcal B}^2-\|\hat b\|_{\mathcal B}^2\big)\\
&\le 2\sup_{g\in\mathcal G}\Psi^{\lambda/2}(b^*,g) + 2\lambda U + (72+\frac{10\times 18^2}{C_1})\delta^2+\lambda\mu\big(\|b^*\|_{\mathcal B}^2-\|\hat b\|_{\mathcal B}^2\big)\\
&=2\lambda U + (72+\frac{10\times 18^2}{C_1})\delta^2+\lambda\mu\big(\|b^*\|_{\mathcal B}^2-\|\hat b\|_{\mathcal B}^2\big).
    \end{aligned}
\end{equation}

\subsection{Lower bounding the centered empirical sup-loss}
By \cref{ass:richness}, we write $g_b = \arg\inf_{g\in \mathcal G_{L^2\|b-b^*\|_{\mathcal B}^2}}
\|g-\mathcal T(b-b^*)\|_2$ where \[\sup_{b\in\mathcal B}\inf_{g\in \mathcal G_{L^2\|b-b^*\|_{\mathcal B}^2}}
\|g-\mathcal T(b-b^*)\|_2\le \eta<\infty.\]
Also, let $g_{\hat b} = \arg\inf_{g\in \mathcal G_{L^2\|\hat b-b^*\|_{\mathcal B}^2}}
\|g-\mathcal T(\hat b-b^*)\|_2$.

When $\|g_{\hat b}\|_2\le \delta$, then 
\[\|\mathcal T(\hat b-b^*)\|_2\le \|g_{\hat b}\|+\|g-\mathcal T(b-b^*)\|_2\le \delta+\eta;\]
if $\|g_{\hat b}\|_2\ge \delta$, let $r = \frac{\delta}{2\|g_{\hat b}\|_2}\in [0,\frac12]$, and $rg_{\hat b}\in\mathcal G_{L^2\|\hat b-b^*\|_{\mathcal B}^2}$ since $\mathcal G$ is star-shaped. Hence,
    \begin{align*}
        \sup_{g\in\mathcal G} \Psi_{n}(\hat b,g)-\Psi_{n}(b^*,g)
-2\lambda\left(\|g\|_{\mathcal G}^2+\frac{U}{\delta^2}\|g\|_{2,n}^2\right)
&\ge \Psi_{n}(\hat b,rg_{\hat b})-\Psi_{n}(b^*,rg_{\hat b})
-2\lambda\left(\|rg_{\hat b}\|_{\mathcal G}^2+\frac{U}{\delta^2}\|rg_{\hat b}\|_{2,n}^2\right)\\
&=\underbrace{r\left(\Psi_{n}(\hat b,g_{\hat b})-\Psi_{n}(b^*,g_{\hat b})\right)}_{(i)} - 2\lambda \underbrace{r^2\left(\|g_{\hat b}\|_{\mathcal G}^2+\frac{U}{\delta^2}\|g_{\hat b}\|_{2,n}^2\right)}_{(ii)}.
    \end{align*}
For $(ii)$, 
\begin{align}\label{eq:lowerbridgeerr1}
    r^2\left(\|g_{\hat b}\|_{\mathcal G}^2+\frac{U}{\delta^2}\|g_{\hat b}\|_{2,n}^2\right)
    &\le \frac14\|g_{\hat b}\|_{\mathcal G}^2+\frac{r^2U}{\delta^2}\|g_{\hat b}\|_{2,n}^2.
\end{align}
By \cref{eq:gtbounded}, with probability at least $1-\zeta$,
\[\frac{r^2U}{\delta^2}\|g_{\hat b}\|_{2,n}^2\le \frac{r^2U}{\delta^2}\Bigl[\|g_{\hat b}\|_{2}^2+\big|\|g_{\hat b}\|_{2,n}^2 - \|g_{\hat b}\|_{2}^2\big|\Bigr]\le \frac{r^2U}{\delta^2}\bigl(\frac32\|g_{\hat b}\|_{2}^2+\delta^2(1+\frac{\|g_{\hat b}\|_{\mathcal G}^2}{3U})\bigr).\]
Substitute this into \cref{eq:lowerbridgeerr1} and we get
\begin{equation}
    \begin{aligned}
        r^2\left(\|g_{\hat b}\|_{\mathcal G}^2+\frac{U}{\delta^2}\|g_{\hat b}\|_{2,n}^2\right)
    &\le \frac14\|g_{\hat b}\|_{\mathcal G}^2+\frac{r^2U}{\delta^2}\bigl(\frac32\|g_{\hat b}\|_{2}^2+\delta^2(1+\frac{\|g_{\hat b}\|_{\mathcal G}^2}{3U})\bigr)\\
    &\le (\frac14\|g_{\hat b}\|_{\mathcal G}^2 + \frac1{12}\|g_{\hat b}\|_{\mathcal G}^2)+\frac{3Ur^2}{2\delta^2}\|g_{\hat b}\|_2^2+\frac14 U\\
    & = \frac13\|g_{\hat b}\|_{\mathcal G}^2+(\frac38+\frac14) U\\
    &\le \frac13 L^2\|\hat b - b^*\|_\mathcal B^2+\frac58 U.
    \end{aligned}
\end{equation}

For $(i)$, we consider function class
\[\mathcal J_{B,L^2B} = \Big\{((s,a,s'),(r,s,a))\mapsto \alpha(b(s,a,s')-b^*(s,a,s'))g^{L^2B}_{b}(r,s,a)\mid b-b^* \in \mathcal B_{B}, \alpha\in[0,1]\Big\},\]
where $g^{L^2B}_{b}(r,s,a) = \arg\inf_{g\in \mathcal G_{L^2B}}\|g-\mathcal T(b-b^*)\|_2$. Choose $\delta = \delta_{n}^{\mathcal J} + c_0\sqrt{\frac{\log(c_1/\zeta)}{n}}$, where $\delta_{n}^{\mathcal J}$ is the upper bound of the empirical critical radii of function class $\mathcal J_{B,L^2B}$. Choose loss function $\mathcal L = (b-b^*)f$, then by \cref{lem:foster14}, we have with probability at least $1-\zeta$, for all $b\in \mathcal B$ and $g\in\mathcal G$, 
\begin{align*}
    |(\Psi_n(b, g_b) - \Psi_n(b^*,g_b)) - (\Psi(b, g_b) - \Psi(b^*,g_b))|&\le 18\delta(\|(b^*-b)g_b\|_2+\delta)\\
    &\le 18\delta(\|g_b\|_2+\delta),
\end{align*}
since $b-b^*\in \mathcal B_B$, which is $1$-uniformly bounded.

When $\|b-b^*\|_{\mathcal B}^2> B$, we rescale the function by $\frac{\sqrt{B}}{\|b-b^*\|_{\mathcal B}}(b-b^*)$, and similarly, we obtain that with probability at least $1-\zeta$,
\begin{align*}
|(\Psi_n(b, g_b) - \Psi_n(b^*,g_b)) - (\Psi(b, g_b) - \Psi(b^*,g_b))|&\le 18\delta(\|g_b\|_2+\delta)\max \Bigl\{1,\frac{\|b-b^*\|_{\mathcal B}^2}{B}\Bigr\}.
\end{align*}

Therefore, with probability at least $1-\zeta$, for any $g\in \mathcal G$,
\begin{align*}
    r\left(\Psi_{n}(\hat b,g_{\hat b})-\Psi_{n}(b^*,g_{\hat b})\right)&\ge r\left(\Psi(\hat b,g_{\hat b})-\Psi(b^*,g_{\hat b})\right) - r|(\Psi_n(b, g_{\hat b}) - \Psi_n(b^*,g_{\hat b})) - (\Psi(b, g_{\hat b}) - \Psi(b^*,g_{\hat b}))|\\
    &\ge \underbrace{r\left(\Psi(\hat b,g_{\hat b})-\Psi(b^*,g_{\hat b})\right)}_{(A)} - \underbrace{18\delta r(\|g_{\hat b}\|_2+\delta)\max \Bigl\{1,\frac{\|\hat b-b^*\|_{\mathcal B}^2}{B}\Bigr\}}_{(B)}.
\end{align*}

For $(A)$, 
\begin{align*}
    r\left(\Psi(\hat b,g_{\hat b})-\Psi(b^*,g_{\hat b})\right)&= \frac{\delta}{2\|g_{\hat b}\|_2}\mathbb E\Big[(\hat b(S,A,S') - b^*(S,A,S'))g_{\hat b}(R,S,A)\,\big|\, O_t=1\Big]\\
    &= \frac{\delta}{2\|g_{\hat b}\|_2}\mathbb E\Big[g_{\hat b}(R,S,A) \mathbb E\left(\hat b(S,A,S') - b^*(S,A,S')\mid R,S,A, O_t=1\right)\,\big|\, O_t=1\Big]\\
    &= \frac{\delta}{2\|g_{\hat b}\|_2}\mathbb E\Big\{g_{\hat b}(R,S,A) \big[\mathcal T(\hat b - b^*)(R,S,A)\big]\Big\}\\
    &= \frac{\delta}{2\|g_{\hat b}\|_2}\mathbb E\big\{(g_{\hat b}(R,S,A))^2 - g_{\hat b}(R,S,A)[g_{\hat b}(R,S,A) -\mathcal T(\hat b - b^*)(R,S,A)]\big\}\\
    &= \frac{\delta}{2\|g_{\hat b}\|_2}\Big\{\|g_{\hat b}\|_2^2 -\bigl\{\mathbb E g_{\hat b}(R,S,A)[g_{\hat b}(R,S,A) -\mathcal T(\hat b - b^*)(R,S,A)]\bigr\}\Big\}\\
    &\ge \frac{\delta}{2\|g_{\hat b}\|_2}\Big\{\|g_{\hat b}\|_2^2 -\|g_{\hat b}\|_2\|g_{\hat b} -\mathcal T(\hat b - b^*))\|_2\Big\}\\
    &= \frac{\delta}{2}\Big\{\|g_{\hat b}\|_2 -\|g_{\hat b} -\mathcal T(\hat b - b^*))\|_2\Big\}\\
    &\ge \frac{\delta}{2}\Big\{\|\mathcal T(\hat b - b^*)\|_2 -2\|g_{\hat b} -\mathcal T(\hat b - b^*))\|_2\Big\}\\
    &\ge \frac{\delta}{2}\Big\{\|\mathcal T(\hat b - b^*)\|_2 -2\eta\Big\}.\\
\end{align*}

For $(B)$, 
\begin{align*}
    18\delta r(\|g_{\hat b}\|_2+\delta)\max \Bigl\{1,\frac{\|\hat b-b^*\|_{\mathcal B}^2}{B}\Bigr\}& = 18\delta r(\frac{\delta}{2r}+\delta)\max \Bigl\{1,\frac{\|\hat b-b^*\|_{\mathcal B}^2}{B}\Bigr\}\\
    & = (9\delta^2+18r\delta^2)\max \Bigl\{1,\frac{\|\hat b-b^*\|_{\mathcal B}^2}{B}\Bigr\}\\
    & \le 18\delta^2+\frac{18\delta^2\|\hat b-b^*\|_{\mathcal B}^2}{B}.
\end{align*}

So when $\|g_{\hat b}\|_2\ge \delta$, with probability at least $1-2\zeta$,
\begin{align*}
    (i) = r\left(\Psi_{n}(\hat b,g_{\hat b})-\Psi_{n}(b^*,g_{\hat b})\right)
    &\ge (A) - (B)\\
    &\ge \frac{\delta}{2}\Big\{\|\mathcal T(\hat b - b^*)\|_2 -2\eta\Big\} - 18\delta^2-\frac{18\delta^2\|\hat b-b^*\|_{\mathcal B}^2}{B}.
\end{align*}

Therefore the lower bound of $\sup_{g\in\mathcal G} \Psi_{n}(\hat b,g)-\Psi_{n}(b^*,g)
-2\lambda\left(\|g\|_{\mathcal G}^2+\frac{U}{\delta^2}\|g\|_{2,n}^2\right)$ is given by
\begin{align*}
    &\sup_{g\in\mathcal G} \Psi_{n}(\hat b,g)-\Psi_{n}(b^*,g)
-2\lambda\left(\|g\|_{\mathcal G}^2+\frac{U}{\delta^2}\|g\|_{2,n}^2\right)\\\ge &(i) - 2\lambda(ii)\\
\ge &\frac{\delta}{2}\Big\{\|\mathcal T(\hat b - b^*)\|_2 -2\eta\Big\} - 18\delta^2-\frac{18\delta^2\|\hat b-b^*\|_{\mathcal B}^2}{B} - 2\lambda\bigl( \frac13 L^2\|\hat b - b^*\|_\mathcal B^2+\frac58 U\bigr)\\
\ge &\frac{\delta}{2}\|\mathcal T(\hat b - b^*)\|_2 -\eta\delta - 18\delta^2-(\frac{18\delta^2}{B}+\frac{2\lambda L^2}{3})\|\hat b-b^*\|_{\mathcal B}^2 - \frac54\lambda U.
\end{align*}

\subsection{Combining the upper and lower bounds}
Combine the upper bound and lower bound, and then we have either $\|g_{\hat b}\|_2\le \delta$, or with probability at least $1-3\zeta$, for all $b\in \mathcal B$,
\[\frac{\delta}{2}\|\mathcal T(\hat b - b^*)\|_2 -\eta\delta - 18\delta^2-(\frac{18\delta^2}{B}+\frac{2\lambda L^2}{3})\|\hat b-b^*\|_{\mathcal B}^2 - \frac54\lambda U\le 2\lambda U + (72+\frac{10\times 18^2}{C_1})\delta^2+\lambda\mu\big(\|b^*\|_{\mathcal B}^2-\|\hat b\|_{\mathcal B}^2\big).\]
So
\begin{align*}
   \frac{\delta}{2}\|\mathcal T(\hat b - b^*)\|_2&\le \frac{13}4\lambda U+ \eta\delta+\lambda\mu\big(\|b^*\|_{\mathcal B}^2-\|\hat b\|_{\mathcal B}^2\big)+(90+\frac{10\times 18^2}{C_1})\delta^2+(\frac{18\delta^2}{B}+\frac{2\lambda L^2}{3})\|\hat b-b^*\|_{\mathcal B}^2\\
   &\le \frac{13}4\lambda U+ \eta\delta+\lambda\mu\big(\|b^*\|_{\mathcal B}^2-\|\hat b\|_{\mathcal B}^2\big)+(90+\frac{10\times 18^2}{C_1})\delta^2+2\lambda(\frac{18\delta^2}{\lambda B}+\frac{2 L^2}{3})\big(\|b^*\|_{\mathcal B}^2+\|\hat b\|_{\mathcal B}^2\big).
\end{align*}
If $\mu \ge \frac{36\delta^2}{\lambda B}+\frac{4 L^2}{3}$, then 
\begin{align*}
    \|\mathcal T(\hat b - b^*)\|_2&\le \frac{2}{\delta}\Big(\frac{13}4\lambda U+ \eta\delta+\lambda\mu\big(\|b^*\|_{\mathcal B}^2-\|\hat b\|_{\mathcal B}^2\big)+(90+\frac{10\times 18^2}{C_1})\delta^2+\lambda\mu\big(\|b^*\|_{\mathcal B}^2+\|\hat b\|_{\mathcal B}^2\big)\Big)\\
    &=\frac{2}{\delta}\Big(\frac{13}4\lambda U+ \eta\delta+2\lambda\mu\|b^*\|_{\mathcal B}^2+(90+\frac{10\times 18^2}{C_1})\delta^2\Big)\\
    &\le \frac{13}2\frac{\lambda U}{\delta}+ 2\eta+4\frac{\lambda\mu}{\delta}\|b^*\|_{\mathcal B}^2+(180+\frac{20\times 18^2}{C_1})\delta.
\end{align*}
Suppose $\lambda\le \frac{C_2\delta^2}{U}$, then with probability at least $1-4\zeta$, 
\begin{align*}
    \|\mathcal T(\hat b - b^*)\|_2
    &\le \frac{13}2\frac{\lambda U}{\delta}+ 2\eta+4\frac{\lambda\mu}{\delta}\|b^*\|_{\mathcal B}^2+(180+\frac{20\times 18^2}{C_1})\delta\\
    &\le \frac{13}2C_2\delta + 2\eta + 4C_2\mu\delta\|b^*\|_{\mathcal B}^2+(180+\frac{20\times 18^2}{C_1})\delta\\
    &\le 4C_2\mu\delta\|b^*\|_{\mathcal B}^2+\bigl(\frac{13}2C_2+180+\frac{20\times 18^2}{C_1}\bigr)\delta+2\eta\\
    &\lesssim \delta\max\Big\{1,\|b^*\|_{\mathcal B}^2\Big\}.
\end{align*}

\section{Proof of \cref{cor:bridgeerrrkhs}}
Recall that the product class is denoted by
\[\mathcal J^{(t)}_{B,L^2B}:=
\Big\{((s,a,s'),(r,s,a))\mapsto \alpha(b_t(s,a,s')-b_t^*(s,a,
s'))g^{L^2B}_{b,t}(r,s,a)\mid b_t-b_t^* \in \mathcal B^{(t)}_{B}, \alpha\in[0,1]\Big\}.\]
Define the tensor product of two RKHSs $\mathcal B^{(t)}$ and $\mathcal G^{(t)}$ as $\mathcal H_\otimes^{(t)}$ endowed with kernel $K_{\otimes,t}\big((x,y),(x',y')\big) := K_{B,t}(x,x')\,K_{G,t}(y,y')$. Then, one can verify that $\mathcal J^{(t)}_{B,L^2B}$ satisfies
\[\mathcal J_{B,L^2B}^{(t)}
\subseteq
\Big\{f\in\mathcal H_\otimes^{(t)}:\ \|f\|_{\mathcal H_\otimes^{(t)}}\le \sqrt{B_t}\sqrt{L^2B_t}\Big\}
=: \mathcal H_{\otimes,LB}^{(t)}.\]
By \cref{lem:rkhsradem}, we have that
\begin{equation}\label{eq:rademJLB}
    \mathcal R_{n_t}(\mathcal J_{B,L^2B}^{(t)},\delta)\le LB\sqrt{\frac2{n_t}}\sqrt{\sum_{i=1}^\infty\sum_{j=1}^\infty \min \big\{\mu_{t,i}^\mathcal B\mu_{t,j}^\mathcal G,\delta^2\big\}}.
\end{equation}
Under the polynomial eigen decay assumptions
$\mu^{\mathcal B}_{t,i}\lesssim i^{-2\alpha_B}$ and $\mu^{\mathcal G}_{t,j}\lesssim j^{-2\alpha_G}$ with
$\alpha_{\min}:=\min\{\alpha_B,\alpha_G\}$, by \cref{lem:kriegpoly}, the tensor-product spectrum admits the bound
\[
\sum_{i=1}^\infty\sum_{j=1}^\infty \min\{\mu_{t,i}^\mathcal B\mu_{t,j}^\mathcal G,\delta^2\}
\lesssim
\delta^{2-\frac{1}{\alpha_{\min}}}\log\frac{1}{\delta}.
\]
plugging this into \cref{eq:rademJLB} yields 
\[
\delta_{n_t}^{\mathcal J}\ \lesssim\ LB\; n_t^{-\frac{\alpha_{\min}}{2\alpha_{\min}+1}}\log n_t.
\]
Similarly, we have
\[
\delta_{n_t}^{\mathcal G}\ \lesssim\ \sqrt{U_t}\; n_t^{-\frac{\alpha_G}{2\alpha_G+1}}\log n_t.
\]
Therefore, $\delta_{n_t}=\max\{\delta_{n_t}^{\mathcal G},\delta_{n_t}^{\mathcal J}\}$ satisfies
\[\delta_{n_t}\lesssim \max\{\sqrt{U_t},LB\}\,n_t^{-\frac{\alpha_{\min}}{2\alpha_{\min}+1}}\log n_t,\]
and the claim follows by \cref{thm:projbridgeerr}.

The min-max estimation problem of $b_t$ has a closed-form solution, which is discussed in \citet{dikkala2020minimax} Appendix E.3.

\section{Proof of \cref{thm:policyerr}}\label{app:proof-policyerr}
\subsection{Error Decomposition}
The estimation error bound can be decomposed as
\begin{align*}
    |\mathbb E[V_1^\pi]-\widehat V(\pi)|&\le \underbrace{|\mathbb E[V_1^\pi] - \mathbb E_{n}[V_1^\pi]|}_{(I)}+\underbrace{|\mathbb E[V_1^\pi]- \mathbb E[\widehat V_1^\pi]|}_{(II)} + \underbrace{|\mathbb E(V_1^\pi - \widehat V_1^\pi) - \mathbb E_{n}(V_1^\pi - \widehat V_1^\pi)|}_{(III)}, 
\end{align*}
where $\mathbb E_{n}[V_1^\pi] := \frac1n\sum_{i=1}^n V_1^\pi(S_{1,i},0)$.

\subsection{Bound of $(I)$}
For $(I)$, since rewards are bounded in $[-1,1]$, by Hoeffding inequality, with probability at least $1-\zeta$,
\[|\mathbb E[V_1^\pi] - \mathbb E_{n}[V_1^\pi]|\lesssim \|V_1^\pi\|_{\infty}\sqrt{\frac{\log(c_1/\zeta)}{n}}\le T\sqrt{\frac{\log(c_1/\zeta)}{n}},\]
where constant $c_1>0$.
\subsection{Bound of $(II)$}
For $(II)$,
\begin{align*}
    |\mathbb E[V_1^\pi]- \mathbb E[\widehat V_1^\pi]|&\le \|V_1^\pi- \widehat V_1^\pi\|_{2}\\
    & = \|\sum_a\pi_1(a\mid S_1, O_{0})(Q_1^\pi(S_1,a) - \widehat Q_1(S_1,a))\|_{2}\\
    & = \Big(\mathbb E\big[\big(\sum_a\pi_1(a\mid S_1, O_{0}=0)(Q_1^\pi(S_1,a) - \widehat Q_1(S_1,a))\big)^2\big]\Big)^{1/2}\\
    & \le \Big(\mathbb E\big[\sum_a\pi_1(a\mid S_1, O_{0}=0)(Q_1^\pi(S_1,a)- \widehat Q_1(S_1,a))^2\big]\Big)^{1/2}\\
    & := \|Q_1^\pi -\widehat  Q_1\|_{2,\pi},
\end{align*}
where $Q_t^\pi(s,a) = \mathbb E(R_t+ V_{t+1}^\pi\mid s,a)$. Since there is no shift on marginal distributions of states $\tilde d_t^\pi$ and $\tilde d_t^b$ when $t=1$, by \cref{ass:concentrability}, 
\begin{align*}
   \|Q_1^\pi -\widehat  Q_1\|_{2,\pi}^2 &= \mathbb E_{(S_1,O_0)\sim \tilde d_1^\pi}\big[\sum_a\pi_1(a\mid S_1, O_{0}=0)(Q_1^\pi (S_1,a)- \widehat Q_1(S_1,a))^2\big]\\
   &=\mathbb E_{(S_1,O_0)\sim \tilde d_1^b}\big[\sum_a\pi_1(a\mid S_1, O_{0}=0)(Q_1^\pi (S_1,a)- \widehat Q_1(S_1,a))^2\big]\\
   &\le \kappa_1\mathbb E_{(S_1,O_0)\sim \tilde d_1^b}\big[\sum_a\pi_1^b(a\mid S_1)(Q_1^\pi (S_1,a)- \widehat Q_1(S_1,a))^2\big]\\
   &:= \kappa_1\|Q_1^\pi -\widehat  Q_1\|^2_{2,\pi^b},
\end{align*}
and for simplicity we write $\|Q_1^\pi -\widehat Q_1\|_{2}^2$ instead of $\|Q_1^\pi -\widehat Q_1\|_{2,\pi^b}^2$. $\widehat Q_t$ is estimated from penalized nonparametric least square problem \cref{eq:fitq}. 

$\|\widehat Q_t - Q_t^\pi\|_2$ involves the estimation error of the fitted $Q$-functions produced by FQE. Unlike standard supervised regression, the regression targets in FQE are pseudo-labels that depend on nuisance estimates and on future-stage fitted values. Concretely, at stage $t<T$,
the target takes the form
\[
y_{t,i}=\widehat{\widetilde R}_{t,i}+\widehat V_{t+1}^{\pi}(S_{t+1,i},O_{t,i}),
\]
where $\widehat{\widetilde R}_{t,i}$ depends on the estimated bridge $\hat b_t$ and
$\widehat V_{t+1}^{\pi}$ depends on $\widehat Q_{t+1}$.
Therefore, the regression noise and the regression function are statistically coupled through
the common data, and a direct analysis of $\|\widehat Q_t-Q_t^\pi\|_2$ typically leads to
non-negligible cross terms that are difficult to control without additional device such as
sample splitting or cross-fitting. 

To decouple the effect of nuisance estimation from the intrinsic regression error, we introduce an oracle comparator $\widehat Q_t^*$, defined as the solution of the same
penalized regression problem as $\widehat Q_t$ but trained on an oracle pseudo-label $y_t^*$, in which the nuisance components are replaced by their population counterparts.
\begin{equation}\label{eq:fitqstar}
    \widehat Q_t^* = \arg\min_{f\in\mathcal Q^{(t)}}
\frac1{n}\sum_{i=1}^n
\big(f(S_{t,i},A_{t,i})-y_{t,i}^*\big)^2
+\lambda_{Q,t}\|f\|_{\mathcal Q^{(t)}}^2, 
\end{equation}
where 
\[y_{t,i}^{*}
=
\begin{cases}
\widetilde R_{t,i}, & t=T,\\
\widetilde R_{t,i}+V_{t+1}^{\pi}(S_{t+1,i},O_{t,i}), & t<T,
\end{cases}\]
and $V_{t+1}^{\pi}(S_{t+1,i},O_{t,i}) = \sum_a \pi_{t+1}(a\mid S_{t+1}, O_t)Q_{t+1}^\pi(S_{t+1},a)$.

Then we have 
\begin{align*}
    \|\widehat Q_t - Q_t^\pi\|_2&= \|(\widehat Q_t -\widehat Q_t^*)+(\widehat Q_t^*- Q_t^\pi)\|_2\\
    &\le \underbrace{\|\widehat Q_t -\widehat Q_t^*\|_2}_{(a)}+\underbrace{\|\widehat Q_t^*-Q_t^\pi\|_2}_{(b)}.
\end{align*}

For $(a)$, since $\widehat Q_t$, $\widehat Q_t^*$ are estimated from \cref{eq:fitq} and $\cref{eq:fitqstar}$, it can be verified that
\[\|\widehat Q_t - \widehat Q_t^*\|_{2,n}\le \|y_t - y_t^*\|_{2,n}\le \sqrt 2\|\widehat {\widetilde R}_t - \widetilde R_t\|_{2,n}+ \sqrt 2\|\widehat V_{t+1}^{\pi}-V_{t+1}^{\pi}\|_{2,n}.\]

Since 
\[\|\widehat {\widetilde R}_t - \widetilde R_t\|_{2,n} = \|(1-O_t)(\hat b_t-b_t^*)\|_{2,n}\le \|\hat b_t-b_t^*\|_{2,n},\]
and
\begin{align*}
    \|\widehat V_{t+1}^{\pi}-V_{t+1}^{\pi}\|_{2,n} 
    :=& \Big\|\sum_a\pi_{t+1}(a\mid S_{t+1}, O_{t})\bigl(\widehat Q_{t+1}(S_{t+1},a) - Q_{t+1}^\pi(S_{t+1},a)\bigr)\Big\|_{2,n}\\
    =& \Big(\frac1n \sum_{i=1}^n\big[\sum_a\pi_{t+1}(a\mid S_{t+1,i}, O_{t,i})\bigl(\widehat Q_{t+1}(S_{t+1,i},a) - Q_{t+1}^\pi (S_{t+1,i},a)\bigr)\big]^2\Big)^{1/2}\\
    \le& \Big(\frac1n \sum_{i=1}^n\sum_a\pi_{t+1}(a\mid S_{t+1,i}, O_{t,i})\bigl(\widehat Q_{t+1}(S_{t+1,i},a) - Q_{t+1}^\pi(S_{t+1,i},a)\bigr)^2\Big)^{1/2}\\
     :=& \|\widehat Q_{t+1} -  Q_{t+1}^\pi\|_{2,n,\pi},
\end{align*}
then
\begin{equation}\label{eq:empqerr}
    \|\widehat Q_t - \widehat Q_t^*\|_{2,n}\le\sqrt 2 \|\hat b_t-b_t^*\|_{2,n}+\sqrt 2\|\widehat Q_{t+1} - Q_{t+1}^\pi\|_{2,n,\pi}.
\end{equation}

Since $\mathcal Q^{(t)}$ is $(T-t+1)$-uniformly bounded, we consider scaling $\mathcal Q^{(t)}$ by $(T-t+1)$ for convenience.

According to \citet{fischer2020sobolev}, under mild conditions, the RKHS norm of kernel ridge regression estimators is bounded with high probability. and construct RKHS ball $\mathcal Q^{(t)}_{R_Q} = \{Q\in \mathcal Q^{(t)}: \|Q\|_{\mathcal Q^{(t)}}\le R_{Q}\}$ for all $t=1,\dots, T$. Denote difference class $\Delta \mathcal Q^{(t)} := \{\Delta_Q\mid \Delta_Q =Q_1 - Q_2, \;\;Q_1,Q_2\in \mathcal Q^{(t)}_{R_Q}\}$. Let $\bar \delta_{\Delta_Q^{(t)},n}$ be the upper bound of the empirical critical radii of scaled function class $\Delta \mathcal Q^{(t)}$.

For RKHS $\mathcal B^{(t)}$, Proposition 9 in \citet{dikkala2020minimax} gives the closed form of the inner maximization, which implies that $\|\hat b_t\|_{\mathcal B^{(t)}}$ can be bounded by a constant $R_B$. Thus, consider RKHS ball $\mathcal B^{(t)}_{R_B} = \{b_t\in \mathcal B^{(t)}, \|b_t\|_{\mathcal B^{(t)}}\le R_B\}$. Define difference class $\Delta \mathcal B^{(t)} = \{\Delta_b=b_1 - b_2,\;\;b_1,b_2\in \mathcal B^{(t)}_{R_B}\}$ and let $\bar \delta_{\Delta_b^{(t)},n}$ be the upper bound of the empirical critical radii of $\Delta \mathcal B^{(t)}$. Then, we define $\bar \delta_{\Delta_t}=\max \{\bar \delta_{\Delta_Q^{(t)},n}, \bar \delta_{\Delta_Q^{(t+1)},n},\bar \delta_{\Delta_b^{(t)},n}\}$
where $\delta_{\Delta_t} = \bar \delta_{\Delta_t} + c_0\sqrt{\frac{\log(c_1/\zeta)}{n}}$ for some $c_0, c_1>0$. Let $
\|\widehat Q_{t+1} - Q_{t+1}^\pi\|_{2,b,\pi}^2
:=
\mathbb E_{(S_{t+1},O_{t})\sim \tilde d_{t+1}^b}\Big[
\sum_{a\in\mathcal A}\pi_{t+1}(a\mid S_{t+1},O_{t})
\big(\widehat Q_{t+1}(S_{t+1},a) - Q_{t+1}^\pi(S_{t+1},a)\big)^2
\Big]$. 

Applying \cref{lem:141} on both sides of \cref{eq:empqerr}, we have with probability at least $1-\zeta$, 
\begin{equation}
\begin{aligned}
 \|\widehat Q_t - \widehat Q_t^*\|_{2}
\lesssim& \|\hat b_t-b_t^*\|_{2}
+\|\widehat Q_{t+1} - Q_{t+1}^\pi\|_{2,b,\pi}
+(T-t+1)\delta_{\Delta_t}\\
(\cref{ass:concentrability}~(2))\lesssim &  \|\hat b_t-b_t^*\|_{2}
+
\sqrt{\kappa_{t+1}}\|\widehat Q_{t+1} - Q_{t+1}^\pi\|_{2}
+(T-t+1)\delta_{\Delta_t}.
\end{aligned}
\end{equation}

$(b)$ corresponds to a standard penalized least square estimation error. Since $\|Q_t^*\|_{\mathcal Q^{(t)}}$ is bounded, by \cref{lem:plsradembound}, with probability at least $1-\zeta$, $(b)$ is bounded by
\[\|\widehat Q_t^*-Q_t^\pi\|_2\lesssim (\delta_{\Delta_Q^{(t)}}+\sqrt{\lambda_{Q,t}})(T-t+1),\]
where $\delta_{\Delta_Q^{(t)}} = \delta_{\Delta_Q^{(t)},n}+ c_0\sqrt{\frac{\log(c_1T/\zeta)}{n}}$ for some $c_0,c_1>0$.

Therefore, with probability at least $1-\zeta/T$, 

\begin{align*}
    \|\widehat Q_t - Q_t^\pi\|_2
    &\le \|\widehat Q_t -\widehat Q_t^*\|_2+\|\widehat Q_t^*-Q_t^\pi\|_2\\
    &\lesssim \|\hat b_t-b_t^*\|_{2}
+
\sqrt{\kappa_{t+1}}\|\widehat Q_{t+1} - Q_{t+1}^\pi\|_{2}
+(T-t+1)\delta_{\Delta_t}+(\delta_{\Delta_Q^{(t)}}+\sqrt{\lambda_{Q,t}})(T-t+1).
\end{align*}
Applying backward induction from $t=T$ down to $t=1$ yields the bound for $(II)$ with probability at least $1-\zeta$:
\begin{align*}
    |\mathbb E[V_1^\pi]- \mathbb E[\widehat V_1^\pi]|\le& \|V_1^\pi- \widehat V_1^\pi\|_{2}\\ \le& \sqrt{\kappa_1}
    \|\widehat Q_1 - Q_1^\pi\|_2\\ \lesssim &\sum_{t=1}^T\Big(\prod_{j=1}^T\sqrt{\kappa_j}\Big)\Big[\|\hat b_t - b_t^*\|_{2}
+(T-t+1)\delta_{\Delta_t}+(\delta_{\Delta_Q^{(t)}}+\sqrt{\lambda_{Q,t}})(T-t+1)\Big]\\
\lesssim &K\sum_{t=1}^T\Big[\tau_t\delta_t(1+\|b^*_t\|_{\mathcal B^{(t)}}^2)
+(\delta_{\Delta_t}+\delta_{\Delta_Q^{(t)}}+\sqrt{\lambda_{Q,t}})(T-t+1)\Big].
\end{align*}

\subsection{Bound of $(III)$}
For $(III)$, we first define function class $\Delta\mathcal V^{(t)}
= \big\{ \Delta = V_1 - V_2\mid V_1, V_2 \in \mathcal V^{(t)}_{R_V}\big\}$, where $\mathcal V^{(t)}_{R_V}$ is a $(T-t+1)$-uniformly bounded function class of value functions at time $t$, induced from $\mathcal Q^{(t)}$ under operator $\Pi_t$: $\mathcal V^{(t)}_{R_V}=\{\Pi_tQ: Q\in \mathcal Q^{(t)}_{R_Q}\}$. Here linear operator $\Pi_t$ is defined as $(\Pi_t Q)(s,o-) = \sum\pi_t(a\mid s,o_-)Q(s,a)$. We choose the cost function as $\mathcal L(f(X),Y) = f(X)$ and apply \cref{lem:bound3}. We here also scale $\mathcal V^{(t)}_{R_V}$ by $(T-t+1)$.

Then, with probability at least $1-\zeta$,
\[|\mathbb E(V_1^\pi - \widehat V_1^\pi) - \mathbb E_{n}(V_1^\pi - \widehat V_1^\pi)|\lesssim \delta_{\Delta_V^{(1)}}(\|V_1^\pi - \widehat V_1^\pi\|_{2}+T\delta_{\Delta_V^{(1)}}),\]
where $\delta_{\Delta_V^{(1)}} = \bar \delta_{\Delta_V^{(1)},n}+c_0\sqrt{\frac{\log(c_1/\zeta)}{n}}$, and $\bar \delta_{\Delta_V^{(1)},n}$ is the upper bound of the empirical critical radii of scaled function class $\Delta\mathcal V^{(1)}$. Moreover, $\delta_{\Delta_V^{(1)}}\le \delta_{\Delta_Q^{(1)}}$. 

\subsection{Policy value error bound}
Combine the above inequalities, we obtain the policy value estimation error bound with probability at least $1-\zeta$:
\begin{align*}\label{eq:policyerr}
    \big|\widehat V(\pi) - V(\pi)\big|\le& (I)+(II)+(III)\\
    \lesssim& T\sqrt{\frac{\log(c_1T/\zeta)}{n}}+(\delta_{\Delta_V^{(1)}}+1)\|V_1^\pi- \widehat V_1^\pi\|_{2}+T(\delta_{\Delta_V^{(1)}})^2\\
    \lesssim& T\sqrt{\frac{\log(c_1T/\zeta)}{n}} + T(\delta_{\Delta_V^{(1)}})^2 + K(\delta_{\Delta_V^{(1)}}+1)\Big[\tau_t\delta_t(1+\|b^*_t\|_{\mathcal B^{(t)}}^2) + \\&(\delta_{\Delta_t} + \delta_{\Delta_Q^{(t)}} + \sqrt{\lambda_{Q,t}})(T-t+1)\Big]\\
    \lesssim& T\sqrt{\frac{\log(c_1T/\zeta)}{n}} + T(\delta_{\Delta_V^{(1)}})^2+K(\delta_{\Delta_V^{(1)}}+1)\Big[\tau_t\delta_t(1+\|b_t^*\|_{\mathcal B^{(t)}}^2) +\\& (T-t+1)\delta_{t,*}\Big]\\
    \lesssim& T\sqrt{\frac{\log(c_1T/\zeta)}{n}}+K\tau_{\max}T\sum_{t=1}^T\delta_{t,*},
\end{align*}   

where $\delta_{t,*}$ as the maximum of the critical radii of difference classes $\Delta \mathcal Q^{(t)}$, $\Delta \mathcal Q^{(t+1)}$, $\Delta\mathcal B^{(t)}$, and $\mathcal G_U^{(t)}$ for $t=1,\dots, T$, namely $\delta_{\Delta^{(t)}_Q}$, $\delta_{\Delta^{(t+1)}_Q}$, $\delta_{\Delta^{(t)}_B}$ and $\delta_{G^{(t)}}$.

With polynomial decay
\[\mu^{\mathcal Q}_{t,j}\lesssim j^{-2\alpha_Q},\quad
\mu^{\mathcal B}_{t,j}\lesssim j^{-2\alpha_B},\quad
\mu^{\mathcal G}_{t,j}\lesssim j^{-2\alpha_G},
\quad \alpha_Q, \alpha_B, \alpha_G>1/2,\]
the corresponding critical radii satisfy
\begin{align*}
    \delta_{\Delta^{(t)}_Q},\delta_{\Delta^{(t+1)}_Q}&\lesssim R_{Q}^{\frac{1}{2\alpha_{Q}+1}}n^{-\frac{\alpha_{Q}}{2\alpha_{Q}+1}}\log n,\\
    \delta_{\Delta^{(t)}_B}&\lesssim R_{B}^{\frac{1}{2\alpha_{B}+1}}n^{-\frac{\alpha_{B}}{2\alpha_{B}+1}}\log n,\\
    \delta_{G^{(t)}}&\lesssim U_t^{\frac{1}{2\alpha_{G}+1}}n^{-\frac{\alpha_{G}}{2\alpha_{G}+1}}\log n.
\end{align*}
Thus, the critical radius $\delta_{t, *}$ satisfies
\[\delta_{t,*}\lesssim \max\{\sqrt{R_{Q}},\sqrt{R_{B}},\sqrt{U_t} \}n^{-\frac{\alpha_{\min}}{2\alpha_{\min}+1}}\log n,\]
where $\alpha_{\min} = \min\{\alpha_Q,\alpha_B,\alpha_G\}$. Therefore, with probability at least $1-\zeta$, the policy value is bound by
\[\big|\widehat V(\pi) - V(\pi)\big|\lesssim K\tau_{\max}T^{2}\sqrt{\log(c_1 T/\zeta)}n^{-\frac{\alpha_{\min}}{2\alpha_{\min}+1}}\log n.\]

\section{Auxiliary lemmas}
\begin{lemma}[\citet{wainwright2019high}, Theorem 14.20]\label{lem:bound3}
    Suppose function class $\mathcal F$ is symmetric, 1-uniformly bounded, and star-shaped around $f^*$. Let $\delta^2_n\ge \frac{c}{n}$ be any solution to the inequality $\mathcal R_n(\mathcal F^*, \delta)\le \delta^2$, where $\mathcal F^* = \{f-f^*\mid f\in\mathcal F\}$. Suppose the cost function $\mathcal L (f(X), Y)$ is $L$-Lipschitz in its first argument $f(X)$. Then for all $f\in \mathcal F$, with probability at least $1-c_1e^{-c_2n\delta^2_n}$, we have
\[|\mathbb E_n\big(\mathcal L (f(x), y) - \mathcal L (f^*(x), y)\big) - \mathbb E\big(\mathcal L (f(x), y) - \mathcal L (f^*(x), y)\big)|\le 10L\delta_n\big(\|f-f^*\|_2+\delta_n\big).\]
\end{lemma}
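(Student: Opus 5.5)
This result is Theorem 14.20 of \citet{wainwright2019high}, a uniform localized deviation bound for Lipschitz losses. My plan is to reproduce its standard proof: a peeling (localization) argument built from three tools. These are Talagrand's concentration inequality for suprema of bounded empirical processes, symmetrization, and the Ledoux--Talagrand contraction principle.

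\textbf{Step 1: reduce to a local process.} Write $Z_n(f)=(\mathbb E_n-\mathbb E)\big(\mathcal L(f(X),Y)-\mathcal L(f^*(X),Y)\big)$ and $h=f-f^*\in\mathcal F^*$. For a radius $r\ge\delta_n$, consider
\[
W(r)=\sup_{h\in\mathcal F^*,\ \|h\|_2\le r}|Z_n(f^*+h)|.
\]
Symmetrization bounds $\mathbb E\,W(r)\le 2\,\mathbb E\sup_{\|h\|_2\le r}|n^{-1}\sum_i\varepsilon_i(\mathcal L(f^*+h)-\mathcal L(f^*))|$. Contraction, using that $\mathcal L$ is $L$-Lipschitz in its first argument, then gives $\mathbb E\,W(r)\le 4L\,\mathcal R_n(\mathcal F^*,r)$. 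Star-shapedness of $\mathcal F^*$ makes $r\mapsto \mathcal R_n(\mathcal F^*,r)/r$ non-increasing. Hence $\mathcal R_n(\mathcal F^*,r)\le r\delta_n$ for all $r\ge\delta_n$, because $\delta_n$ solves the critical inequality.

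\textbf{Step 2: concentration at a fixed radius.} Each summand in $W(r)$ is bounded by $2L$ (by $1$-uniform boundedness) and has variance at most $L^2r^2$. Talagrand's inequality therefore gives
\[
W(r)\le \mathbb E\,W(r)+c\big(Lr\sqrt{t/n}+Lt/n\big)
\]
with probability at least $1-e^{-t}$. Taking $t\asymp n r\delta_n$, and using $\delta_n^2\ge c/n$ to absorb the $t/n$ term, yields $W(r)\le c'Lr\delta_n$ with probability at least $1-e^{-c_2nr\delta_n}$.

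\textbf{Step 3: peeling.} This is the main bookkeeping step. For $h$ with $\|h\|_2\le\delta_n$, the fixed-radius bound at $r=\delta_n$ gives $|Z_n|\lesssim L\delta_n^2$ directly. For larger $h$, I slice into shells $2^{m-1}\delta_n<\|h\|_2\le 2^m\delta_n$ and apply Step 2 on each shell with $r=2^m\delta_n$. This gives $|Z_n(f)|\lesssim L\,2^m\delta_n^2\le 2L\delta_n\|h\|_2$. A union bound over $m\ge1$ sums to $\sum_m e^{-c_2n2^m\delta_n^2}\le c_1e^{-c_2n\delta_n^2}$.

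Combining the two regimes gives $|Z_n(f)|\le cL\delta_n(\|f-f^*\|_2+\delta_n)$ uniformly over $f\in\mathcal F$. Tracking constants as in the reference gives the stated factor $10$.

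The main obstacle is making the variance and tail terms in Talagrand's inequality collapse to order $r\delta_n$ on every shell at once, so that the failure probabilities remain summable. I would handle this with the monotonicity of $\mathcal R_n(\mathcal F^*,r)/r$ combined with the lower bound $\delta_n^2\ge c/n$.
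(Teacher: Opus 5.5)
Step 1 is fine, and peeling is the standard route. The paper gives no proof of its own and only cites the book. The gap is in Step 2, and Step 3 fails with it.

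\textbf{Step 2 does not give the per-shell bound you claim.} With $t\asymp nr\delta_n$, the variance term in Talagrand's inequality is $Lr\sqrt{t/n}\asymp Lr\delta_n\sqrt{r/\delta_n}$. On the $m$-th shell this exceeds your target $Lr\delta_n$ by a factor $2^{m/2}$. No other choice of $t$ helps. Requiring $Lr\sqrt{t/n}\lesssim Lr\delta_n$ forces $t\lesssim n\delta_n^2$, because the deviation you can afford on a shell ($\asymp Lr\delta_n$) and the standard deviation of the shell's increments ($\asymp Lr/\sqrt n$) both scale linearly in $r$. The best per-shell statement is therefore $W(r)\le c'Lr\delta_n$ with probability $1-e^{-cn\delta_n^2}$, uniformly in $r$.

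\textbf{Consequence for Step 3.} There are $M\asymp\log(2/\delta_n)$ shells, finitely many since $\|f-f^*\|_2\le 2$. The union bound then gives failure probability $Me^{-cn\delta_n^2}$, not a geometric series. Absorbing $M$ into $c_1e^{-c_2n\delta_n^2}$ requires $n\delta_n^2\gtrsim\log M\asymp\log\log n$. The remedies in your last paragraph do not touch this. Monotonicity of $\mathcal R_n(\mathcal F^*,r)/r$ controls only $\mathbb E\,W(r)$, and $\delta_n^2\ge c/n$ controls only the $t/n$ term. The rescaling trick that avoids peeling in \cref{lem:141} is unavailable, because $\mathcal L(f^*+h)-\mathcal L(f^*)$ is not homogeneous in $h$.

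\textbf{The gap is real under $\delta_n^2\ge c/n$ alone.} The uniform conclusion can actually fail there; here is a sketch. Take $f^*=0$ and $\mathcal F=\{\theta\phi:|\theta|\le 1\}$ with $V=\phi(X)=\xi 2^{-J}$. Here $\xi$ is a fair sign independent of $J$, and $P(J=j)\propto 4^j$ for $0\le j\le J_0=\lfloor\frac14\log_2 n\rfloor$. Use the loss $\psi(u)=u\,\omega(\log_2(1/|u|))$ with $\psi(0)=0$, where $\omega$ linearly interpolates a fixed sign sequence $(s_m)$ with small autocorrelations. This $\psi$ is $(1+2/\ln 2)$-Lipschitz.
\begin{itemize}
\item $\mathcal R_n(\mathcal F^*,\delta)\le\delta/\sqrt n$, so $\delta_n=\sqrt{C/n}$ is admissible for every $C\ge 1$.
\item For $\theta=2^{-k}$ one has $\psi(\theta V)=\theta V s_{k+J}$.
\item The normalized increments $(\mathbb E_n-\mathbb E)\psi(2^{-k}V)/\|2^{-k}V\|_2$ therefore have variance $1/n$ and pairwise correlations $(J_0+1)^{-1}\sum_{j\le J_0}s_{k+j}s_{l+j}\to 0$.
\item Over the $\asymp\log n$ values of $k$ with $\|2^{-k}V\|_2\ge\delta_n$, their maximum is of order $\sqrt{\log\log n/n}$.
\item For fixed $C$ this eventually exceeds $20L\sqrt{C/n}$ with probability tending to one, which contradicts the stated bound.
\end{itemize}

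\textbf{What is needed.} The statement requires the stronger hypothesis $\delta_n^2\gtrsim\log\log n/n$, as in \cref{lem:foster14}. Alternatively, keep $\delta_n^2\ge c/n$ but weaken the failure probability to order $\log(1/\delta_n)e^{-c_2 n\delta_n^2}$. Under either version, your argument works once you use $t\asymp n\delta_n^2$ on every shell.
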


\begin{lemma}[\citet{wainwright2019high}, Theorem 14.1]\label{lem:141}
Given a star-shaped and $b$-uniformly bounded function class $\mathcal F$, set $\delta_n>0$ be any solution to $\mathcal R(\mathcal F,\delta)\le \frac{\delta^2}{b}$. Then for any $t\ge \delta_n$, with probability at least $1-c_1\exp(-c_2\frac{nt^2}{b^2})$, we have
\[\big|\|f\|_{2,n}^2 - \|f\|_2^2\big|\le \frac12\|f\|_2^2+\frac12t^2\]
for all $f\in \mathcal F$. 
\end{lemma}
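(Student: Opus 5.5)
This is a standard uniform concentration result for empirical versus population $L_2$ norms over a localized, star-shaped class. I would prove it by the peeling / localization argument of Wainwright, Chapter 14.

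\textbf{Step 1 (reduction to a unit sphere).} First reduce to the case $b=1$ by rescaling $f\mapsto f/b$. The critical inequality $\mathcal R_n(\mathcal F,\delta)\le\delta^2/b$ is invariant under this rescaling once $\delta$ and $t$ are rescaled accordingly.

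Next, use star-shapedness to reduce to functions with $\|f\|_2 = t$. Define
\[
Z_n(r):=\sup_{f\in\mathcal F,\ \|f\|_2\le r}\bigl|\|f\|_{2,n}^2-\|f\|_2^2\bigr|.
\]
If the claimed inequality failed for some $f$ with $\|f\|_2>t$, then the rescaled function $g=(t/\|f\|_2)f\in\mathcal F$ satisfies
\[
\bigl|\|g\|_{2,n}^2-\|g\|_2^2\bigr|>\tfrac12 t^2 .
\]
This holds because both sides scale quadratically. For $\|f\|_2\le t$ the bound $\tfrac12 t^2$ suffices directly. Hence it is enough to show that $Z_n(t)\le \tfrac12 t^2$ with high probability.

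\textbf{Step 2 (bound the expectation).} By symmetrization,
\[
\mathbb E Z_n(t)\le 2\,\mathbb E\sup_{\|f\|_2\le t}\Bigl|\tfrac1n\sum_i\varepsilon_i f(X_i)^2\Bigr| .
\]
Since $|f|\le 1$, the map $u\mapsto u^2$ is $2$-Lipschitz on $[-1,1]$. The Ledoux–Talagrand contraction inequality then gives
\[
\mathbb E Z_n(t)\le 4\,\mathcal R_n(\mathcal F,t).
\]
Star-shapedness implies that $\delta\mapsto \mathcal R_n(\mathcal F,\delta)/\delta$ is nonincreasing. For $t\ge\delta_n$ this yields
\[
\mathcal R_n(\mathcal F,t)\le t\,\mathcal R_n(\mathcal F,\delta_n)/\delta_n\le t\delta_n\le t^2 .
\]
The constants will need adjusting: one either works with a slightly inflated $\delta_n$, or tracks the constant so that $\mathbb E Z_n(t)\le t^2/4$.

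\textbf{Step 3 (concentration).} Apply Talagrand's inequality for bounded empirical processes (Bousquet's version). Each summand $f(X_i)^2-\mathbb E f^2$ is bounded by $1$ and has variance at most $\mathbb E f^4\le \|f\|_2^2\le t^2$. This gives
\[
Z_n(t)\le \mathbb E Z_n(t)+c\Bigl(\sqrt{t^2\,\mathbb E Z_n(t)/n}+t\sqrt{u/n}+u/n\Bigr)
\]
with probability at least $1-e^{-u}$. Choosing $u\asymp nt^2$ makes every term a small multiple of $t^2$, so that $Z_n(t)\le \tfrac12 t^2$ with probability at least $1-c_1e^{-c_2nt^2}$. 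Undoing the $b$-scaling produces the stated exponent $nt^2/b^2$.

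The main obstacle is the constant bookkeeping in Steps 2–3, namely getting the total to be exactly $\tfrac12 t^2$ given the factor $4$ from symmetrization and contraction. I would handle this as the statement is usually read: absorb the constants into $c_1,c_2$ and into the requirement $t\ge\delta_n$, replacing it by $t\ge C\delta_n$ if necessary. The monotonicity of $\mathcal R_n(\mathcal F,\delta)/\delta$ is standard for star-shaped classes and follows by rescaling.
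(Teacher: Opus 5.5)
Your approach is the standard proof of Wainwright's Theorem 14.1, which the paper only cites and does not reprove. You rescale to $b=1$, use star-shapedness to reduce to the event $Z_n(t)>\tfrac12 t^2$, and bound $\EE Z_n(t)$ by symmetrization, contraction and the monotonicity of $\delta\mapsto\mathcal R_n(\mathcal F,\delta)/\delta$. You then finish with Talagrand's inequality, using $\mathrm{Var}(f^2)\le\EE f^4\le t^2$. Steps 1 and 3 are fine. The middle term in Bousquet's bound should have the form $\sqrt{u\,(t^2+\EE Z_n(t))/n}$, which your choice $u\asymp nt^2$ still controls.

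What needs correcting is your last paragraph. Step 2 gives $\EE Z_n(t)\le c\,t\delta_n$, with $c=4$ in your bookkeeping. This is at most $t^2/4$ only when $t\ge 4c\,\delta_n$. The loss cannot be pushed into $c_1,c_2$, because those constants only affect the probability, not the fixed deviation level $\tfrac12\|f\|_2^2+\tfrac12 t^2$. Changing the threshold is also not optional, because the lemma as literally stated is false.

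For any $1$-bounded class, $\delta=1$ solves $\mathcal R_n(\mathcal F,\delta)\le\delta^2$, since $\mathcal R_n\le 1$. With $t=1$, the lemma would in particular assert that, with probability at least $1-c_1e^{-c_2n}$, $\|f\|_{2,n}^2\le\tfrac32\|f\|_2^2+\tfrac12$ for all $f\in\mathcal F$.

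Take $\mathcal X=\{1,\dots,N\}$ with the uniform law, and let $\mathcal F$ be all functions $\mathcal X\to[-1,1]$ (star-shaped, symmetric, $1$-bounded). Let $f$ be the indicator of the sampled points. Then $\|f\|_{2,n}^2=1$ and $\|f\|_2^2\le n/N$, so the inequality fails for every sample once $N>3n$. Yet $1-c_1e^{-c_2n}>0$ for large $n$. This does not hinge on the ``any solution'' wording: if $N\ge n^2$, even the smallest solution satisfies $\tfrac12\le\delta_n^2\le 1$, and $t=\delta_n$ fails the same way.

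So you should commit to the version your argument actually proves: the same conclusion for all $t\ge C\delta_n$, with a numerical constant $C$. Equivalent forms are to define $\delta_n$ through $\mathcal R_n(\mathcal F,\delta)\le\delta^2/(Cb)$, or to keep $t\ge\delta_n$ and replace $\tfrac12 t^2$ by $C't^2$. This is harmless for the paper, since the lemma is only used with critical radii up to constant factors, e.g.\ in the proof of \cref{thm:projbridgeerr}.
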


\begin{lemma}[\citet{foster2023orthogonal}, Lemma 14]\label{lem:foster14}
    Consider a $1$-uniformly bounded and star-shaped function class $\mathcal F$, and pick any $f^*\in \mathcal F$. Let $\delta^2_n\ge c_1\frac{\log(\log n)}{n}$ be any solution to the inequalities $\mathcal R_n(\mathcal F_t^*, \delta)\le \delta^2$ for all $t\in \{1,\dots, d\}$, where $\mathcal F_t^* = \{f_t-f_t^*\mid f_t\in\mathcal F|_t\}$. Assume $\mathcal L_f$ is $L$-Lipschitz in its first argument $f$ with respect to its $\ell_2$ norm. Then for all $f\in \mathcal F$, for some universal constants $c_2,c_3>0$, with probability at least $1-c_2e^{-c_3n\delta^2_n}$, we have
\[|\mathbb E_n\big(\mathcal L_f - \mathcal L_{f^*}) - \mathbb E\big(\mathcal L_f - \mathcal L_{f^*}\big)|\le 18Ld\delta_n\big(\|f-f^*\|_2+\delta_n\big).\]
The outcome $\hat f$ of constrained ERM satisfies that with the same probability,
\[\mathbb E_n\big(\mathcal L_{\hat f} - \mathcal L_{f^*})\le 18Ld\delta_n\big(\|\hat f-f^*\|_2+\delta_n\big).\]
\end{lemma}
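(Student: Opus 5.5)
The plan is the standard localization argument of Wainwright (Ch.~14), extended to vector-valued classes via a vector contraction inequality. Write $f=(f_1,\dots,f_d)$ and set $Z_n(r):=\sup\{|(\mathbb E_n-\mathbb E)(\mathcal L_f-\mathcal L_{f^*})|:\ f\in\mathcal F,\ \max_t\|f_t-f_t^*\|_2\le r\}$. The first goal is a bound on $\mathbb E Z_n(\delta_n)$ at the critical radius. After symmetrization this expectation is at most $2\,\mathbb E\sup\frac1n\sum_i\varepsilon_i(\mathcal L_f-\mathcal L_{f^*})(X_i)$. Since $\mathcal L_f$ is $L$-Lipschitz in $f$ with respect to the $\ell_2$ norm on $\mathbb R^d$, Maurer's vector contraction inequality replaces this by $\sqrt2 L\,\mathbb E\sup\frac1n\sum_i\sum_{t=1}^d\varepsilon_{it}(f_t-f_t^*)(X_i)$. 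That quantity is bounded by $\sqrt2L\sum_t\mathcal R_n(\mathcal F_t^*,\delta_n)\le \sqrt2Ld\,\delta_n^2$, using the defining inequality of $\delta_n$ for each coordinate class.

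Next I would upgrade the expectation bound to a high-probability one with Talagrand's concentration inequality for bounded empirical processes. The summands are bounded by $2L\sqrt d$ because of $1$-uniform boundedness. Their variance is at most $L^2\|f-f^*\|_2^2\le L^2d\,\delta_n^2$ on the localized set. With deviation level $u\asymp\delta_n$ this gives $Z_n(\delta_n)\lesssim Ld\,\delta_n^2$ with probability at least $1-c_2e^{-c_3n\delta_n^2}$. The condition $\delta_n^2\gtrsim\log\log n/n$ is what makes this tail bound usable.

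To pass from the localized set to all of $\mathcal F$, I would use star-shapedness around $f^*$. For $f$ with $\|f-f^*\|_2=r>\delta_n$, consider $\tilde f=f^*+\frac{\delta_n}{r}(f-f^*)$, which lies in the localized set. Since $\mathcal L$ is a general Lipschitz loss and not linear in $f$, the deviation does not rescale exactly. This is the main obstacle. My first approach is a peeling argument over shells $2^{k-1}\delta_n<\|f-f^*\|_2\le2^k\delta_n$ with $k\le O(\log n)$. On shell $k$ the localization argument above applies with radius $2^k\delta_n$. Sub-homogeneity of $r\mapsto\mathcal R_n(\mathcal F^*_t,r)/r$, which follows from star-shapedness, gives $\mathcal R_n(\mathcal F_t^*,2^k\delta_n)\le 2^k\delta_n^2$, so the shell deviation is $\lesssim Ld\,\delta_n\cdot 2^k\delta_n\lesssim Ld\,\delta_n\|f-f^*\|_2$. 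A union bound over the $O(\log n)$ shells is absorbed by $n\delta_n^2\gtrsim\log\log n$; this is exactly why that lower bound on $\delta_n$ appears in the hypothesis. Tracking constants through contraction and Talagrand should produce the factor $18$, giving $18Ld\,\delta_n(\|f-f^*\|_2+\delta_n)$ uniformly in $f$.

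For the ERM statement, the empirical inequality $\mathbb E_n(\mathcal L_{\hat f}-\mathcal L_{f^*})\le0$ holds trivially by optimality of $\hat f$. The substantive conclusion is the population version, which I would read as the intended one: it follows by applying the uniform deviation bound at $f=\hat f$ on the same event, so $\mathbb E(\mathcal L_{\hat f}-\mathcal L_{f^*})\le 18Ld\,\delta_n(\|\hat f-f^*\|_2+\delta_n)$.
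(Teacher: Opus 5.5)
Your argument is correct and is essentially the standard proof behind this lemma, which the paper imports from Foster and Syrgkanis (2023) without reproducing: symmetrization plus Maurer's vector contraction to reduce to the coordinatewise critical radii, Talagrand's inequality on localized balls, and dyadic peeling whose union bound over $O(\log n)$ shells is exactly where $\delta_n^2\gtrsim\log\log n/n$ is needed (the original notes that this lower bound can be dropped for losses linear in $f$, which is your rescaling observation). Your reading of the ERM conclusion as the population bound $\mathbb E(\mathcal L_{\hat f}-\mathcal L_{f^*})\le 18Ld\,\delta_n(\|\hat f-f^*\|_2+\delta_n)$ also matches the original statement; the $\mathbb E_n$ version transcribed here is trivially true by optimality of $\hat f$.
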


\begin{lemma}[\citet{wainwright2019high}, Example 3.5]
Let $\varepsilon=(\varepsilon_1,\dots,\varepsilon_n)$ be i.i.d.\ Rademacher variables taking values in $\{-1,+1\}$ with equal probability.
Let $\mathcal A\subset\mathbb R^n$ be any (possibly infinite) bounded set, and define
\[
Z(\mathcal A) := \sup_{a\in\mathcal A}\ \langle a,\varepsilon\rangle
= \sup_{a\in\mathcal A}\ \sum_{k=1}^n a_k \varepsilon_k.
\]
Let $W(\mathcal A) := \sup_{a\in\mathcal A}\|a\|_2$. Then for all $t>0$,
\[
\mathbb P\Big(Z(\mathcal A)\ge \mathbb E[Z(\mathcal A)]+t\Big)
\le
\exp\!\Big(-\frac{t^2}{16\,W(\mathcal A)^2}\Big).
\]
Moreover, since $-Z(\mathcal A)=\inf_{a\in\mathcal A}\langle a,\varepsilon\rangle$ and the same argument applies,
\[
\mathbb P\Big(|Z(\mathcal A)-\mathbb E[Z(\mathcal A)]|\ge t\Big)
\le
2\exp\!\Big(-\frac{t^2}{16\,W(\mathcal A)^2}\Big).
\]
\end{lemma}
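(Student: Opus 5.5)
The plan is to view $Z(\mathcal A)$ as a function $f(\varepsilon):=\sup_{a\in\mathcal A}\langle a,\varepsilon\rangle$ of the independent bounded coordinates $\varepsilon_1,\dots,\varepsilon_n\in[-1,1]$. I would then invoke a concentration inequality for convex Lipschitz functions on the cube, rather than the bounded-differences inequality. The latter would only give a variance proxy $\sum_k \sup_{a}a_k^2$, which can be much larger than $W(\mathcal A)^2$.

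\textbf{Step 1: structure of $f$.} First I verify two properties of $f$.
\begin{itemize}
\item It is convex on $\mathbb R^n$, since it is a pointwise supremum of the linear maps $\varepsilon\mapsto\langle a,\varepsilon\rangle$. Boundedness of $\mathcal A$ makes it finite everywhere.
\item It is $W(\mathcal A)$-Lipschitz in the Euclidean norm. For any $x,y$ and any $a\in\mathcal A$, we have $\langle a,x\rangle\le\langle a,y\rangle+\|a\|_2\|x-y\|_2$. Taking suprema and symmetrizing gives $|f(x)-f(y)|\le W(\mathcal A)\|x-y\|_2$.
\end{itemize}

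\textbf{Step 2: upper tail.} Next I apply the entropy-method concentration inequality for separately convex, $L$-Lipschitz functions of independent variables supported in an interval of length $2$ (the Ledoux / Boucheron--Lugosi--Massart form). It states $\mathbb P(f\ge \mathbb E f+t)\le \exp(-t^2/(4L^2(b-a)^2))$ with $b-a=2$. Plugging in $L=W(\mathcal A)$ yields exactly $\exp\!\big(-t^2/(16W(\mathcal A)^2)\big)$.

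Concretely, the proof of that inequality goes through the modified log-Sobolev (Herbst) argument.
\begin{itemize}
\item By tensorization of entropy, $\mathrm{Ent}(e^{\lambda f})$ is bounded by a sum of one-coordinate entropies.
\item Each one-coordinate term is controlled by $\lambda^2\,\mathbb E[(\partial_k f)^2(b-a)^2 e^{\lambda f}]$, using separate convexity to bound the coordinatewise oscillation by the partial derivative at the current point.
\item We have $\sum_k(\partial_k f)^2\le W^2$ wherever $f$ is differentiable, i.e.\ almost everywhere on the relevant segments. This follows because a subgradient of $f$ is an (approximate) maximizer $a$, whose norm is at most $W$.
\item Integrating Herbst's differential inequality gives the stated sub-Gaussian bound.
\end{itemize}

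\textbf{Step 3: two-sided bound.} For the two-sided statement I would not literally reuse Step 2 on $-f$, since $-f$ is concave rather than convex. Instead I would use the lower-tail version of convex concentration. Either Talagrand's convex-distance inequality applies directly, giving lower-tail concentration of convex Lipschitz functions about the median, with the median-to-mean shift absorbed into the constant. Or one can use that the Herbst argument for the lower tail of convex functions goes through with the same constant, as in Boucheron--Lugosi--Massart Theorem 6.10 and its lower-tail companion. A union bound over the two tails then gives the factor $2$.

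\textbf{Main obstacle.} I expect the main obstacle to be this lower tail with the exact constant $16$, because the convexity-based entropy bound is asymmetric. If the sharp constant cannot be matched, I would settle for $2\exp(-t^2/(cW^2))$ with a universal $c$. That weaker form suffices for every downstream use in the paper.
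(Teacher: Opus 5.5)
Your Steps 1--2 are exactly the argument behind the paper's citation. Wainwright's Example 3.5 treats $Z(\mathcal A)$ as a separately convex, $W(\mathcal A)$-Lipschitz function of $\varepsilon\in[-1,1]^n$ and applies his Theorem 3.4, whose bound $\exp(-t^2/(4L^2(b-a)^2))$ with $L=W(\mathcal A)$, $b-a=2$ is the stated one. The paper offers nothing beyond this citation, so the one-sided claim is settled.

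One small repair is needed in your sketch. The law of $\varepsilon$ is atomic, and $f$ can fail to be differentiable at atoms of positive mass (e.g.\ $f(\varepsilon)=\max(\varepsilon_1,\varepsilon_2)$ at the atom $(1,1)$), so ``differentiable almost everywhere'' does not help. In the one-coordinate convexity bound, use instead $a^\star(\varepsilon)\in\arg\max_{a\in\overline{\mathcal A}}\langle a,\varepsilon\rangle$. This is a subgradient of $f$ at $\varepsilon$ with $\|a^\star(\varepsilon)\|_2\le W(\mathcal A)$.

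\textbf{The gap is in Step 3.} This is also where the paper's own one-line justification fails. We have $-Z(\mathcal A)=\inf_{a\in\mathcal A}\langle -a,\varepsilon\rangle$ (the displayed identity needs $\mathcal A=-\mathcal A$), and this function is concave, so Theorem 3.4 does not transfer, as you suspected.

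Your second option is not available, however. For $\lambda<0$ the modified log-Sobolev bound must control $\sum_k\big(f(\varepsilon^{(k)})-f(\varepsilon)\big)_+^2$. Convexity bounds the $k$-th term only through a subgradient at the perturbed point $\varepsilon^{(k)}$. So the sum is bounded only by $4\sum_k\sup_{a\in\mathcal A}a_k^2$, the bounded-differences proxy you wanted to avoid; the entropy method is genuinely one-sided here.

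Your first option is sound. Talagrand's convex-distance inequality gives sub-Gaussian concentration about the median with proxy of order $W^2$, hence $|\mathbb E Z-\mathrm{med}\,Z|\le CW$. Since $2e^{-t^2/(cW^2)}\ge 1$ for $t\le\sqrt{c\log 2}\,W$, a large universal $c$ absorbs the shift. As written, then, you prove the two-sided bound for some universal $c$, not for $c=16$.

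\textbf{How to recover the constant.} Replace the entropy method by a transportation argument, which lets you use convexity at the $P$-distributed endpoint of a coupling.
\begin{enumerate}
\item Write $\varepsilon=2u-\mathbf 1$ with $u$ uniform on $\{0,1\}^n$, and set $h(u)=f(2u-\mathbf 1)$, which is convex and $2W$-Lipschitz.
\item The Marton--Samson conditional transportation inequality gives, for every $Q\ll P$, a coupling $X\sim P$, $Y\sim Q$ with $\mathbb E\sum_k\mathbb P(X_k\neq Y_k\mid X)^2\le 2D(Q\|P)$.
\item Take $g(X)\in\partial h(X)$ with $\|g(X)\|_2\le 2W$. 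Then $h(X)-h(Y)\le\sum_k|g_k(X)|\,\mathbf 1\{X_k\neq Y_k\}$, and Cauchy--Schwarz gives $\mathbb E_P h-\mathbb E_Q h\le 2W\sqrt{2D(Q\|P)}$.
\item The Donsker--Varadhan formula yields $\log\mathbb E e^{-\lambda(Z-\mathbb E Z)}\le 2W^2\lambda^2$ for $\lambda\ge 0$.
\item Hence $\mathbb P(Z\le\mathbb E Z-t)\le e^{-t^2/(8W^2)}\le e^{-t^2/(16W^2)}$.
\end{enumerate}
A union bound with Step 2 then gives the two-sided claim exactly as stated. No other result in the paper uses the two-sided bound, so your fallback would also have been harmless there.
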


\begin{lemma}[\citet{wainwright2019high}, Corollary 14.5]\label{lem:rkhsradem}
    Let $\mathcal H$ be an RKHS with reproducing kernel $K$ and let $
\mathcal F := \{ f\in \mathcal H : \|f\|_{\mathcal H}\le 1\}$ be the unit ball. Let $\{\mu_j\}_{j=1}^\infty$ denote the non-increasing eigenvalues.
Then the local Rademacher complexity satisfies, for any $\delta>0$,
\[
\mathcal R_n(\mathcal F, \delta)
\le
\sqrt{\frac{2}{n}}
\left(\sum_{j=1}^{\infty}\min\{\mu_j,\delta^2\}\right)^{1/2}.
\]
Moreover, let $\{\hat \mu_j\}_{j=1}^n$ denote the eigenvalues of the renormalized
kernel matrix $\mathbf K\in\mathbb R^{n\times n}$ with entries
$\mathbf K_{ij}=K(x_i,x_j)/n$. Then the local empirical Rademacher complexity satisfies,
for any $\delta>0$,
\[
\widehat{\mathcal R}_n(\mathcal F, \delta)
\le
\sqrt{\frac{2}{n}}
\left(\sum_{j=1}^{n}\min\{\widehat \mu_j,\delta^2\}\right)^{1/2}.
\]
\end{lemma}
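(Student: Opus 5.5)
The plan is to reduce both bounds to a weighted Cauchy--Schwarz inequality in a diagonalizing basis, followed by Jensen's inequality. The population and empirical statements are handled the same way; they differ only in whether we use the Mercer eigenbasis of $K$ or the eigenbasis of the kernel matrix $\mathbf K$.

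\emph{Population bound.} By Mercer's theorem, write $K(x,x')=\sum_j \mu_j\phi_j(x)\phi_j(x')$, where $\{\phi_j\}$ is orthonormal in $\mathcal L^2$. Any $f\in\mathcal H$ then has the form $f=\sum_j\theta_j\phi_j$, with $\|f\|_{\mathcal H}^2=\sum_j\theta_j^2/\mu_j$ and $\|f\|_2^2=\sum_j\theta_j^2$. The two constraints $\|f\|_{\mathcal H}\le 1$ and $\|f\|_2\le\delta$ combine through the elementary inequality $1/\min\{\mu_j,\delta^2\}\le 1/\mu_j+1/\delta^2$. This gives
\[
\sum_j \theta_j^2/\min\{\mu_j,\delta^2\}\le 2,
\]
so the localized ball sits inside an ellipse with weights $\nu_j:=\min\{\mu_j,\delta^2\}$. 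Next set $Z_j:=\frac1n\sum_i\varepsilon_i\phi_j(X_i)$, so that $\frac1n\sum_i\varepsilon_i f(X_i)=\sum_j\theta_jZ_j$. Cauchy--Schwarz with weights $\nu_j$ then yields
\[
\Big|\sum_j\theta_jZ_j\Big|\le\sqrt2\Big(\sum_j\nu_jZ_j^2\Big)^{1/2}
\]
uniformly over the ball. Taking expectations and applying Jensen's inequality to the concave square root gives the bound $\sqrt2(\sum_j\nu_j\mathbb E Z_j^2)^{1/2}$. Finally, independence and the zero mean of the Rademacher signs give $\mathbb E Z_j^2=\frac1n\mathbb E\phi_j(X)^2=\frac1n$, which is the stated bound.

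\emph{Empirical bound.} Condition on $\{X_i\}$ and let $\mathbf K=\sum_{j=1}^n\hat\mu_j u_ju_j^\top$, with orthonormal $u_j\in\mathbb R^n$. Only the vector $v=(f(X_1),\dots,f(X_n))$ matters. Expand it as $v=\sqrt n\sum_j\theta_ju_j$; then $\|f\|_{2,n}^2=\sum_j\theta_j^2$. By the representer theorem, the minimum-$\mathcal H$-norm function taking values $v$ on the sample has squared norm $\sum_j\theta_j^2/\hat\mu_j$. That quantity is at most $\|f\|_{\mathcal H}^2\le 1$, and if $v$ has a component in the null space of $\mathbf K$ it cannot be realized at all. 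The same combination of constraints as before then gives $\sum_j\theta_j^2/\min\{\hat\mu_j,\delta^2\}\le2$.

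We can now write $\frac1n\varepsilon^\top v=\frac1{\sqrt n}\sum_j\theta_j\langle\varepsilon,u_j\rangle$ and repeat the Cauchy--Schwarz and Jensen steps, now with $\mathbb E_\varepsilon\langle\varepsilon,u_j\rangle^2=\|u_j\|^2=1$. This gives $\sqrt{2/n}\,(\sum_j\min\{\hat\mu_j,\delta^2\})^{1/2}$.

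The only delicate step is the empirical representer argument: we need the sample-restricted $\mathcal H$-norm to be exactly $\sum_j\theta_j^2/\hat\mu_j$, with the $\sqrt n$ normalization matching $\mathbf K_{ij}=K(x_i,x_j)/n$. I would verify this by writing the minimum-norm interpolant as $f=\sum_i\alpha_iK(\cdot,x_i)$, which has squared norm $v^\top(n\mathbf K)^{+}v$. Substituting $v=\sqrt n\sum_j\theta_ju_j$ then confirms the formula. Everything else is routine.
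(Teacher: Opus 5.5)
Your proof is correct and is the standard argument behind the cited result; the paper gives no proof of its own beyond the citation to Wainwright. As in that argument, you diagonalize via Mercer's theorem or the eigendecomposition of $\mathbf K$, use $1/\min\{\mu_j,\delta^2\}\le 1/\mu_j+1/\delta^2$ to place the localized ball inside the ellipse $\sum_j\theta_j^2/\min\{\mu_j,\delta^2\}\le 2$, and then apply weighted Cauchy--Schwarz and Jensen. In the empirical case, your minimum-norm interpolant computation $v^\top(n\mathbf K)^{+}v=\sum_j\theta_j^2/\hat\mu_j$, together with the observation that $v$ must lie in the range of $\mathbf K$, correctly handles both the $\sqrt n$ normalization and the zero eigenvalues.
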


\begin{lemma}[\citet{krieg2018tensor}, Theorem 1(i)]\label{lem:kriegpoly}
Let $\sigma:\mathbb N\to\mathbb R_+$ be a non-increasing sequence with $\sigma(n)\to 0$.
For $d\in\mathbb N$, define its $d$-th tensor power
\[
\sigma_d(n_1,\dots,n_d)=\prod_{k=1}^d \sigma(n_k),\quad (n_1,\dots,n_d)\in\mathbb N^d,
\]
and let $\tau:\mathbb N\to\mathbb R_+$ be the non-increasing rearrangement of
$\{\sigma_d(n_1,\dots,n_d)\}_{(n_1,\dots,n_d)\in\mathbb N^d}$.
If for some $s>0$ one has $\sigma(n)\lesssim n^{-s}$, then
\[
\tau(n)\lesssim n^{-s}(\log n)^{s(d-1)}.
\]
\end{lemma}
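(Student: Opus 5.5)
The plan is a counting argument: compare the rearranged sequence $\tau$ with the number of multi-indices $(n_1,\dots,n_d)$ whose product lies below a threshold. Fix $C>0$ with $\sigma(n)\le C n^{-s}$ for all $n\in\mathbb N$; this is possible because the bound holds for large $n$ and $\sigma$ is bounded, since it is non-increasing. Then
\[
\sigma_d(n_1,\dots,n_d)\le C^d (n_1\cdots n_d)^{-s}.
\]
For $\varepsilon>0$ define the level count
\[
N(\varepsilon):=\#\{(n_1,\dots,n_d):\sigma_d(n_1,\dots,n_d)>\varepsilon\}.
\]
Because $\tau$ is the non-increasing rearrangement, $\tau(n)>\varepsilon$ forces $N(\varepsilon)\ge n$. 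Equivalently, $N(\varepsilon)<n$ implies $\tau(n)\le\varepsilon$. Set $x:=(C^d/\varepsilon)^{1/s}$. By the bound on $\sigma_d$, every index counted in $N(\varepsilon)$ satisfies $n_1\cdots n_d<x$, so
\[
N(\varepsilon)\le D_d(x):=\#\{(n_1,\dots,n_d)\in\mathbb N^d: n_1\cdots n_d\le x\}.
\]

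The second step is a divisor-type estimate. I will show by induction on $d$ that
\[
D_d(x)\le x(1+\log x)^{d-1}\qquad\text{for } x\ge1,
\]
while $D_d(x)=0$ for $x<1$. The base case is $D_1(x)=\lfloor x\rfloor\le x$. For the inductive step, condition on $n_d=m$:
\begin{align*}
D_d(x)&=\sum_{m\le x}D_{d-1}(x/m)\\
&\le \sum_{m\le x}\frac{x}{m}(1+\log x)^{d-2}\\
&\le x(1+\log x)^{d-1}.
\end{align*}
Here the middle inequality uses $\log(x/m)\le\log x$, and the last uses the harmonic bound $\sum_{m\le x}1/m\le 1+\log x$.

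The third step inverts this bound, and I expect it to be the only delicate point. For $n\ge3$, take
\[
x=c\,n/(\log n)^{d-1},
\]
with a small constant $c\in(0,1)$ depending only on $d$. Then $\log x\le\log n$, so
\[
x(1+\log x)^{d-1}\le c\,n\Big(\frac{1+\log n}{\log n}\Big)^{d-1}\le c\,2^{d-1}n,
\]
using $\log n\ge1$ for $n\ge3$. This is strictly less than $n$ once $c<2^{1-d}$. If $x<1$ the bound $D_d(x)=0<n$ holds trivially. In either case $N(\varepsilon)<n$ for the corresponding level, which is
\[
\varepsilon=C^d x^{-s}=C^d c^{-s}\,n^{-s}(\log n)^{s(d-1)}.
\]
Hence $\tau(n)\le C^d c^{-s} n^{-s}(\log n)^{s(d-1)}$ for all $n\ge3$. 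The finitely many values $n\le2$ are absorbed into the implicit constant, using $\tau(n)\le\tau(1)=\sigma(1)^d<\infty$. This yields $\tau(n)\lesssim n^{-s}(\log n)^{s(d-1)}$.

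The remaining care is bookkeeping. The inequality $\sigma_d>\varepsilon$ is strict, but the product bound makes the inclusion into the set counted by $D_d(x)$ safe. The constants depend only on $(C,s,d)$, which is all the lemma requires. In the paper's application with $d=2$, $s=2\alpha_{\min}$, the lemma gives exactly the $\log$ factor appearing in the proof of \cref{cor:bridgeerrrkhs}.
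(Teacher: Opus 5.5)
The paper does not prove this lemma. It imports it from \citet{krieg2018tensor} and uses it once, to bound the tensor-product spectrum in the proof of \cref{cor:bridgeerrrkhs}.

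Your argument is a correct, self-contained proof of the stated upper bound. Each step checks out:
the level-set reduction ($N(\varepsilon)<n$ implies $\tau(n)\le\varepsilon$);
the inclusion $N(\varepsilon)\le D_d(x)$ with $x=(C^d/\varepsilon)^{1/s}$;
the inductive divisor bound $D_d(x)\le x(1+\log x)^{d-1}$;
and the inversion at $x=c\,n/(\log n)^{d-1}$ with $c<2^{1-d}$.

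Compared with the citation, your route is elementary. It gives explicit, if crude, constants ($C^d c^{-s}$), which is all the paper needs. The cited work targets the sharper asymptotic and preasymptotic behaviour of tensor power sequences. For that one would replace your one-sided count by the exact asymptotics $D_d(x)\sim x(\log x)^{d-1}/(d-1)!$ and add a matching lower bound.

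One small correction: $n=1$ cannot be absorbed into the implicit constant. For $d\ge2$ the right-hand side $n^{-s}(\log n)^{s(d-1)}$ vanishes at $n=1$, while $\tau(1)=\sigma(1)^d$ may be positive. The bound must therefore be read for $n\ge2$ (or asymptotically). For $n=2$ your absorption argument is fine, since $\log 2>0$.
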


\begin{lemma}[Rademacher analogue of \citet{wainwright2019high}, Theorem 13.17]
    \label{lem:plsradembound}
Let $(x_i,y_i)_{i=1}^n$ be i.i.d.\ with $y_i=f^*(x_i)+\xi_i$, where
$\EE[\xi_i\mid x_i]=0$ and $\xi_i$ is conditionally $\sigma$-sub-Gaussian. 
Let $\mathcal F$ be a symmetric, star-shaped class equipped with a Hilbert norm $\|\cdot\|_{\mathcal F}$. Consider the penalized least squares estimator
\[
\hat f\in\arg\min_{f\in\mathcal F}
\left\{\frac{1}{2n}\sum_{i=1}^n (y_i-f(x_i))^2+\lambda_n\|f\|_{\mathcal F}^2\right\}.
\]
Suppose $f^*\in\mathcal F$ and $\|f^*\|_{\mathcal F}\le R$. Define the localized difference class
\[
\mathcal G:=\{\Delta=f-f^*: f\in\mathcal F,\ \|f\|_{\mathcal F}\le R\}.
\]
and the local empirical Rademacher complexity
\[
\widehat{\mathcal R}_n(\mathcal G,\delta)
:=
\EE_{\varepsilon}\Bigg[
\sup_{\Delta\in\mathcal G:\|\Delta\|_{2,n}\le \delta}
\Big|\frac{1}{n}\sum_{i=1}^n \varepsilon_i \Delta(x_i)\Big|
\ \Big|\ x_{1:n}\Bigg].
\]
Let $\bar\delta_n$ be the upper bound of the critical radii satisfying $\widehat{\mathcal R}_n(\mathcal G,\bar\delta_n) \le \frac{\bar\delta_n^2}{32\sigma}$,
and define $\delta_n = \bar\delta_n + c_0\sigma\sqrt{\frac{\log(c_1/\zeta)}{n}}$ for some numerical constants $c_0,c_1>0$.
Assume that $\lambda_n \ge\frac34 \delta_n^2$, then there exist constant $C_1>0$ such that, with probability at least
$1-\zeta$,
\[
\|\hat f-f^*\|_{2}^2
\le C_1 R^2\big(\delta_n^2+\lambda_n\big).
\]
\end{lemma}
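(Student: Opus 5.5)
The plan follows the standard \emph{basic inequality plus localization} argument of \citet{wainwright2019high}, Theorem 13.17, with the Gaussian complexity replaced by the local empirical Rademacher complexity $\widehat{\mathcal R}_n(\mathcal G,\delta)$.

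\textbf{Step 1 (basic inequality).} Write $\hat\Delta=\hat f-f^*$. Since $\hat f$ minimizes the penalized objective and $f^*$ is feasible, substituting $y_i=f^*(x_i)+\xi_i$ gives
\[
\tfrac12\|\hat\Delta\|_{2,n}^2\le \tfrac1n\sum_{i=1}^n\xi_i\hat\Delta(x_i)+\lambda_n\big(\|f^*\|_{\mathcal F}^2-\|\hat f\|_{\mathcal F}^2\big).
\]
I then split into two cases according to whether $\|\hat f\|_{\mathcal F}\le 2R$ or not.
\begin{itemize}
\item If $\|\hat f\|_{\mathcal F}>2R$, the penalty term is at most $-3\lambda_nR^2$. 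In this case I rescale $\hat\Delta$ onto the RKHS sphere of radius $R$, using that the class is star-shaped, and control the noise term linearly in $\|\hat\Delta\|_{\mathcal F}/R$. Since $\lambda_n\ge\frac34\delta_n^2$, the negative penalty dominates, which yields a contradiction or directly $\|\hat\Delta\|_{2,n}^2\lesssim R^2\lambda_n$.
\item If $\|\hat f\|_{\mathcal F}\le 2R$, then $\hat\Delta/3$ lies in $\mathcal G$ up to a constant rescaling, and the penalty term is at most $\lambda_nR^2$.
\end{itemize}

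\textbf{Step 2 (uniform control of the noise process; main obstacle).} I need a high-probability bound of the form
\[
\Big|\tfrac1n\sum_{i=1}^n\xi_i\Delta(x_i)\Big|\le c\,\delta_n\|\Delta\|_{2,n}+c\,\delta_n^2
\qquad\text{for all }\Delta\in\mathcal G.
\]
For $\|\Delta\|_{2,n}\le\delta_n$, I condition on $x_{1:n}$. Because $\xi_i$ is conditionally $\sigma$-sub-Gaussian, a symmetrization or contraction comparison bounds the expected supremum by $C\sigma\widehat{\mathcal R}_n(\mathcal G,\delta_n)$, up to constants. The critical inequality $\widehat{\mathcal R}_n\le\bar\delta_n^2/(32\sigma)$ then makes this $\lesssim\delta_n^2$. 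Concentration around the mean is handled by sub-Gaussian concentration of the supremum (for instance the Rademacher concentration lemma above, applied after symmetrizing the $\xi_i$), which costs the additive $c_0\sigma\sqrt{\log(c_1/\zeta)/n}$ term inside $\delta_n$.

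For $\|\Delta\|_{2,n}>\delta_n$, I use star-shapedness: the map $\delta\mapsto\widehat{\mathcal R}_n(\mathcal G,\delta)/\delta$ is non-increasing. Rescaling $\Delta$ by $\delta_n/\|\Delta\|_{2,n}$ gives the linear term $\delta_n\|\Delta\|_{2,n}$. A peeling argument over dyadic shells, or the one-shot rescaling trick, makes the bound uniform. The delicate part here is making the Rademacher-versus-sub-Gaussian comparison rigorous when the noise is not Rademacher: I would first symmetrize via an independent copy of the noise, then apply a contraction or comparison inequality conditional on $|\xi_i|$.

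\textbf{Step 3 (empirical norm bound).} Plugging Step 2 into Step 1 in the case $\|\hat f\|_{\mathcal F}\le 2R$ gives
\[
\tfrac12\|\hat\Delta\|_{2,n}^2\le cR\,\delta_n\|\hat\Delta\|_{2,n}+cR^2\delta_n^2+\lambda_nR^2.
\]
Solving this quadratic inequality, using $ab\le a^2/4+b^2$, gives $\|\hat\Delta\|_{2,n}^2\lesssim R^2(\delta_n^2+\lambda_n)$.

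\textbf{Step 4 (transfer to the population norm).} I apply \cref{lem:141} to the star-shaped, uniformly bounded class $\mathcal G$, rescaled by $R$, with $t=\delta_n$. This gives $\|\hat\Delta\|_2^2\le 2\|\hat\Delta\|_{2,n}^2+R^2\delta_n^2$ with probability at least $1-\zeta$. A union bound over the two events then yields $\|\hat f-f^*\|_2^2\le C_1R^2(\delta_n^2+\lambda_n)$.
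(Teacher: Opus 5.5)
You follow the right template. The paper proves nothing here and only points to \citet{wainwright2019high}, Theorem 13.17, whose argument is exactly the basic inequality plus localization plus transfer you describe. The trouble is in the adaptation, starting with the step everything hinges on.

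\textbf{The class $\mathcal G$ is not centered.} To control $\frac1n\sum_i\xi_i\hat\Delta(x_i)$ by the critical inequality for $\mathcal G$, you assert that $\hat\Delta/3$ lies in $\mathcal G$ up to rescaling. But $\mathcal G=\{f-f^*:\|f\|_{\mathcal F}\le R\}$ is the ball of radius $R$ translated by $-f^*$. We have $c^{-1}\hat\Delta\in\mathcal G$ iff $\|f^*+c^{-1}\hat\Delta\|_{\mathcal F}\le R$, and $\|\hat f\|_{\mathcal F}\le 2R$ only gives $\|(1-c^{-1})f^*+c^{-1}\hat f\|_{\mathcal F}\le(1+c^{-1})R$.
\begin{itemize}
\item If $\|f^*\|_{\mathcal F}=R$, then $\mathcal G$ contains no $\|\cdot\|_{\mathcal F}$-neighbourhood of $0$. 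No positive multiple of $\hat\Delta$ lies in it once $\langle f^*,\hat\Delta\rangle_{\mathcal F}\ge0$.
\item Your rescaling onto the sphere in the case $\|\hat f\|_{\mathcal F}>2R$ has the same defect.
\item This is not a matter of constants. Take a kernel with eigenpairs $(\mu_j,\phi_j)$, $\mu_j\asymp j^{-2\alpha}$, and $f^*=R\sqrt{\mu_1}\phi_1$. Every $\Delta\in\mathcal G$ with $\|\Delta\|_{2,n}\le\delta$ is, up to an $O(\delta)$ component along $f^*$, in a Hilbert ball of radius $\asymp\sqrt{\delta R}$. So the critical radius is of order $n^{-2\alpha/(4\alpha+1)}$, while kernel ridge regression with $\lambda_n\asymp\delta_n^2$ has variance of order $n^{-1}\lambda_n^{-1/(2\alpha)}$, a factor $n^{1/(4\alpha+1)}$ larger than $\delta_n^2$.
\end{itemize}
The critical inequality has to be imposed on a centered class such as $\{\Delta:\|\Delta\|_{\mathcal F}\le 3R\}$, as in Wainwright's version.

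\textbf{The factors of $R$.} The $R$'s in your Step 3 display do not follow from Step 2, which gives $c\,\delta_n\|\hat\Delta\|_{2,n}+c\,\delta_n^2$ with no $R$. Your display is implied only when $R\ge1$. For small $R$ the stated bound is false: with $f^*=0$, $\lambda_n=\frac34\delta_n^2$ and $R\to0$, the right-hand side $C_1R^2(\delta_n^2+\lambda_n)$ tends to $0$, while a ridge fit to pure noise does not.

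\textbf{Step 2: comparison with the Rademacher complexity.} Symmetrizing and then contracting conditionally on $|\xi_i|$ only gives $\EE\sup_\Delta|\frac1n\sum_i\xi_i\Delta(x_i)|\le 2\,\EE[\max_i|\xi_i|]\,\widehat{\mathcal R}_n$. That is a factor $\sigma\sqrt{\log n}$ rather than $\sigma$. For general classes the loss is real: on $\{e_1,\dots,e_n\}$ the Gaussian width is $\asymp\sqrt{\log n}$ while the Rademacher width is $1$. The same inflation enters the width if you feed $(|\xi_i|\Delta(x_i))_i$ to the Rademacher concentration lemma. A constant-factor comparison must use the Hilbert structure:
\begin{itemize}
\item A localized centered ball is sandwiched, up to a factor $\sqrt2$, between ellipsoids $\{a:a^\top Qa\le1\}$, on which the supremum equals $\|Q^{-1/2}\xi\|_2$.
\item A second-moment bound gives $\EE\|Q^{-1/2}\xi\|_2\le C\sigma\|Q^{-1/2}\|_F$.
\item Khintchine--Kahane gives a Rademacher width of at least $\|Q^{-1/2}\|_F/\sqrt2$.
\item The deviation term comes from Hanson--Wright, not from the Rademacher lemma.
\end{itemize}

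\textbf{Step 4: transfer to the population norm.} \cref{lem:141} requires a uniformly bounded class, which is not among the hypotheses (you would need, e.g., a bounded kernel). It also requires $t$ to be at least the population critical radius defined by $\mathcal R(\cdot,\delta)\le\delta^2/b$. Your $\delta_n$ is instead an empirical critical radius with threshold $\delta^2/(32\sigma)$. So you additionally need:
\begin{itemize}
\item an empirical-to-population comparison of critical radii;
\item a reconciliation of $b$ with $\sigma$;
\item a prior bound $\|\hat f\|_{\mathcal F}\lesssim R$ in both cases, so that $\hat\Delta$ actually lies in the class to which the lemma is applied.
\end{itemize}
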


\end{document}